\theoremstyle{plain}
\newtheorem{thm}{\protect\theoremname}
  \theoremstyle{plain}
  \newtheorem{lem}[thm]{\protect\lemmaname}
\newcommand{\lyxdot}{.}
\newcommand{\diag}{\mathop{\mathrm{diag}}}
\newcommand{\R}{\mathbb{R}} 
\newcommand{\M}{\mathcal{M}} 
\newcommand{\V}{\mathcal{V}} 
\renewcommand{\b}{\mathbf} 
\newcommand{\T}{^\top} 
\newcommand{\C}{\mathcal{C}} 
\newcommand{\F}{\mathcal{F}} 
\newcommand{\E}{\mathbb{E}} 
\renewcommand{\P}{\mathbb{P}} 
\renewcommand{\d}{\mathrm{d}} 
\newcommand{\diam}{\mathrm{diam}}
\newcommand{\Q}{\mathbb{Q}} 
\newcommand{\K}{\mathcal{K}} 
\newcommand{\X}{\mathcal{X}} 
\newcommand{\G}{\mathcal{G}} 
\newcommand{\Me}{\mathbb{M}} 
\newcommand*{\QEDB}{\hfill\ensuremath{\square}}%
  \providecommand{\lemmaname}{Lemma}
\providecommand{\theoremname}{Theorem}
\begin{document}

\title{Interpretable Distribution Features \\
with Maximum Testing Power}

\author{Wittawat Jitkrittum, \,\, Zolt{\'a}n Szab{\'o}, \,\, Kacper Chwialkowski, \,\, Arthur Gretton \\
\url{wittawatj@gmail.com} \\
\url{zoltan.szabo.m@gmail.com} \\
\url{kacper.chwialkowski@gmail.com} \\
\url{arthur.gretton@gmail.com} \\[2mm]
Gatsby Unit, University College London
\vspace{-3mm}}
\maketitle
\begin{abstract}
Two semimetrics on probability distributions are proposed, given as
the sum of differences of expectations of analytic functions evaluated
at spatial or frequency locations (i.e, features). The features are
chosen so as to maximize the distinguishability of the distributions,
by optimizing a lower bound on test power for a statistical test using
these features. The result is a parsimonious and interpretable indication
of how and where two distributions differ locally. We show that the
empirical estimate of the test power criterion converges with increasing
sample size, ensuring the quality of the returned features. In real-world
benchmarks on high-dimensional text and image data, linear-time tests
using the proposed semimetrics achieve comparable performance to the
state-of-the-art quadratic-time maximum mean discrepancy test, while
returning human-interpretable features that explain the test results.

\end{abstract}

\section{Introduction }

\vspace{-2mm}We address the problem of discovering features of distinct
probability distributions, with which they can most easily be distinguished.
The distributions may be in high dimensions, can differ in non-trivial
ways (i.e., not simply in their means), and are observed only through
i.i.d. samples. One application for such divergence measures is to
model criticism, where samples from a trained model are compared with
a validation sample: in the univariate case, through the KL divergence
\citep{CarParPol96}, or in the multivariate case, by use of the maximum
mean discrepancy (MMD) \citep{LloGha15}.  An alternative, interpretable
analysis of a multivariate difference in distributions may be obtained
by projecting onto a discriminative direction, such that the Wasserstein
distance on this projection is maximized \citep{MueJaa15}. Note that
both recent works require low dimensionality, either explicitly (in
the case of Lloyd and Gharamani, the function becomes difficult to
plot in more than two dimensions), or implicitly in the case of Mueller
and Jaakkola, in that a large difference in distributions must occur
in projection along a particular one-dimensional axis. Distances between
distributions in high dimensions may be more subtle, however, and
it is of interest to find interpretable, distinguishing features of
these distributions. %

In the present paper, we take a hypothesis testing approach to discovering
features which best distinguish two multivariate probability measures
$P$ and $Q$, as observed by samples $\mathsf{X}:=\{\mathbf{x}_{i}\}_{i=1}^{n}$
drawn independently and identically (i.i.d.) from $P$, and $\mathsf{Y}:=\{\mathbf{y}_{i}\}_{i=1}^{n}\subset\mathbb{R}^{d}$
from $Q.$ Non-parametric two-sample tests based on RKHS distances
\citep{Eric2008,FroLauLerRey12,Gretton2012} or energy distances \citep{Szekely2004,Baringhaus2004}
have as their test statistic an integral probability metric, the Maximum
Mean Discrepancy \citep{Gretton2012,SejSriGreFuk13}. For this metric,
a smooth witness function is computed, such that the amplitude is
largest where the probability mass differs most \citep[e.g. ][Figure 1]{Gretton2012}.
\citet{LloGha15} used this witness function to compare the model
output of the Automated Statistician \citep{Lloyd2014-ABCD} with
a reference sample, yielding a visual indication of where the model
fails. In high dimensions, however, the witness function cannot be
plotted, and is less helpful. Furthermore, the witness function does
not give an easily interpretable result for distributions with local
differences in their characteristic functions. A more subtle shortcoming
is that it does not provide a direct indication of the distribution
features which, when compared, would maximize test power - rather,
it is the witness function \emph{norm}, and (broadly speaking) its
\emph{variance} under the null, that determine test power. %

Our approach builds on the analytic representations of probability
distributions of \citet{Chwialkowski2015}, where differences in expectations
of analytic functions at particular spatial or frequency locations
are used to construct a two-sample test statistic, which can be computed
in linear time. Despite the differences in these analytic functions
being evaluated at random locations, the analytic tests have greater
power than linear time tests based on subsampled estimates of the
MMD \citep{Gretton2012a,Zaremba2013}. Our first theoretical contribution,
in Sec.\,\ref{sec:main_results}, is to derive a lower bound on the
test power, which can be maximized over the choice of test locations.
We propose two novel tests, both of which significantly outperform
the random feature choice of \citeauthor{Chwialkowski2015}. The (ME)
test evaluates the difference of mean embeddings at locations chosen
to maximize the test power lower bound (i.e., spatial features); unlike
the maxima of the MMD witness function, these features are directly
chosen to maximize the distinguishability of the distributions, and
take variance into account. The Smooth Characteristic Function (SCF)
test uses as its statistic the difference of the two smoothed empirical
characteristic functions, evaluated at points in the frequency domain
so as to maximize the same criterion (i.e., frequency features). Optimization
of the mean embedding kernels/frequency smoothing functions themselves
is achieved on a held-out data set with the same consistent objective. 

As our second theoretical contribution in Sec.\,\ref{sec:main_results},
we prove that the empirical estimate of the test power criterion asymptotically
converges to its population quantity uniformly over the class of Gaussian
kernels%
. Two important consequences follow: first, in testing, we obtain
a more powerful test with fewer features. Second, we obtain a parsimonious
and interpretable set of features that best distinguish the probability
distributions. In Sec.\,\ref{sec:experiments}, we provide experiments
demonstrating that the proposed linear-time tests greatly outperform
all previous linear time tests, and achieve performance that compares
to or exceeds the more expensive quadratic-time MMD test \citep{Gretton2012}.
Moreover, the new tests discover features of text data (NIPS proceedings)
and image data (distinct facial expressions) which have a clear human
interpretation, thus validating our feature elicitation procedure
in these challenging high-dimensional testing scenarios.\vspace{-2mm}

\section{ME and SCF tests\label{sec:me_scf_tests}}

\vspace{-2mm}

In this section, we review the ME and SCF tests \citep{Chwialkowski2015}
for two-sample testing. In Sec.\,\ref{sec:main_results}, we will
extend these approaches to learn features that optimize the power
of these tests. Given two samples $\mathsf{X}:=\{\mathbf{x}_{i}\}_{i=1}^{n},\mathsf{Y}:=\{\mathbf{y}_{i}\}_{i=1}^{n}\subset\mathbb{R}^{d}$
independently and identically distributed (i.i.d.) according to $P$
and $Q$, respectively, the goal of a two-sample test is to decide
whether $P$ is different from $Q$ on the basis of the samples. The
task is formulated as a statistical hypothesis test proposing a null
hypothesis $H_{0}:P=Q$ (samples are drawn from the same distribution)
against an alternative hypothesis $H_{1}:P\neq Q$ (the sample generating
distributions are different). A test calculates a test statistic $\hat{\lambda}_{n}$
from $\mathsf{X}$ and $\mathsf{Y}$, and rejects $H_{0}$ if $\hat{\lambda}_{n}$
exceeds a predetermined test threshold (critical value). The threshold
is given by the $(1-\alpha)$-quantile of the (asymptotic) distribution
of $\hat{\lambda}_{n}$ under $H_{0}$ i.e., the null distribution,
and $\alpha$ is the significance level of the test. 

\textbf{ME test} \,The ME test uses as its test statistic $\hat{\lambda}_{n}$,
a form of Hotelling's T-squared statistic, defined as $\hat{\lambda}_{n}:=n\mathbf{\overline{z}}_{n}^{\top}\mathbf{S}_{n}^{-1}\mathbf{\overline{z}}_{n},$
where $\mathbf{\overline{z}}_{n}:=\frac{1}{n}\sum_{i=1}^{n}\mathbf{z}_{i}$,
$\mathbf{S}_{n}:=\frac{1}{n-1}\sum_{i=1}^{n}(\mathbf{z}_{i}-\mathbf{\overline{z}}_{n})(\mathbf{z}_{i}-\mathbf{\overline{z}}_{n})^{\top}$,
and $\mathbf{z}_{i}:=(k(\mathbf{x}_{i},\mathbf{v}_{j})-k(\mathbf{y}_{i},\mathbf{v}_{j}))_{j=1}^{J}\in\mathbb{R}^{J}.$
The statistic depends on a positive definite kernel $k:\mathcal{X}\times\mathcal{X}\to\mathbb{R}$
(with $\mathcal{X}\subseteq\mathbb{R}^{d}$), and a set of $J$ test
locations $\mathcal{V}=\{\mathbf{v}_{j}\}_{j=1}^{J}\subset\mathbb{R}^{d}$.
Under $H_{0}$, asymptotically $\hat{\lambda}_{n}$ follows $\chi^{2}(J)$,
a chi-squared distribution with $J$ degrees of freedom. The ME test
rejects $H_{0}$ if $\hat{\lambda}_{n}>T_{\alpha}$, where the test
threshold $T_{\alpha}$ is given by the $(1-\alpha)$-quantile of
the asymptotic null distribution $\chi^{2}(J)$. Although the distribution
of $\hat{\lambda}_{n}$ under $H_{1}$ was not derived, \citet{Chwialkowski2015}
showed that if $k$ is analytic, integrable and characteristic (in
the sense of \citet{Sriperumbudur2011}), under $H_{1}$, $\hat{\lambda}_{n}$
can be arbitrarily large as $n\to\infty$, allowing the test to correctly
reject $H_{0}$. 

One can intuitively think of the ME test statistic as a squared normalized
(by the inverse covariance $\mathbf{S}_{n}^{-1}$) $L^{2}(\mathcal{X},V_{J})$
distance of the mean embeddings \citep{Smola2007} of the empirical
measures $P_{n}:=\frac{1}{n}\sum_{i=1}^{n}\delta_{\mathbf{x}_{i}}$,
and $Q_{n}:=\frac{1}{n}\sum_{i=1}^{n}\delta_{\mathbf{y}_{i}}$ where
$V_{J}:=\frac{1}{J}\sum_{i=1}^{J}\delta_{\mathbf{v}_{i}}$, and $\delta_{\mathbf{x}}$
is the Dirac measure concentrated at $\mathbf{x}$. The unnormalized
counterpart (i.e., without $\mathbf{S}_{n}^{-1}$) was shown by \citet{Chwialkowski2015}
to be a metric on the space of probability measures for any $\mathcal{V}$.
Both variants behave similarly for two-sample testing, with the normalized
version being a semimetric having a more computationally tractable
null distribution, i.e., $\chi^{2}(J)$. 

\textbf{SCF test} \,The SCF uses the test statistic which has the
same form as the ME test statistic with a modified $\mathbf{z}_{i}:=[\hat{l}(\mathbf{x}_{i})\sin(\mathbf{x}_{i}^{\top}\mathbf{v}_{j})-\hat{l}(\mathbf{y}_{i})\sin(\mathbf{y}_{i}^{\top}\mathbf{v}_{j}),\hat{l}(\mathbf{x}_{i})\cos(\mathbf{x}_{i}^{\top}\mathbf{v}_{j})-\hat{l}(\mathbf{y}_{i})\cos(\mathbf{y}_{i}^{\top}\mathbf{v}_{j})]_{j=1}^{J}\in\mathbb{R}^{2J},$
where $\hat{l}(\mathbf{x})=\int_{\mathbb{R}^{d}}\exp(-i\mathbf{u}^{\top}\mathbf{x})l(\mathbf{u})\thinspace\mathrm{d}\mathbf{u}$
is the Fourier transform of $l(\mathbf{x})$, and $l:\mathbb{R}^{d}\to\mathbb{R}$
is an analytic translation-invariant kernel i.e., $l(\mathbf{x}-\mathbf{y})$
defines a positive definite kernel for $\mathbf{x}$ and \textbf{$\mathbf{y}$}.
In contrast to the ME test defining the statistic in terms of spatial
locations, the locations $\mathcal{V}=\{\mathbf{v}_{j}\}_{j=1}^{J}\subset\mathbb{R}^{d}$
in the SCF test are in the frequency domain. As a brief description,
let $\varphi_{P}(\mathbf{w}):=\mathbb{E}_{\mathbf{x}\sim P}\exp(i\mathbf{w}^{\top}\mathbf{x})$
be the characteristic function of $P$. Define a smooth characteristic
function as $\phi_{P}(\mathbf{v})=\int_{\mathbb{R}^{d}}\varphi_{P}(\mathbf{w})l(\mathbf{v}-\mathbf{w})\thinspace\mathrm{d}\mathbf{w}$
\citep[Definition 2]{Chwialkowski2015}. Then, similar to the ME test,
the statistic defined by the SCF test can be seen as a normalized
(by $\mathbf{S}_{n}^{-1}$) version of $L^{2}(\mathcal{X},V_{J})$
distance of empirical $\phi_{P}(\mathbf{v})$ and $\phi_{Q}(\mathbf{v})$.
The SCF test statistic has asymptotic distribution $\chi^{2}(2J)$
under $H_{0}$. We will use $J'$ to refer to the degrees of freedom
of the chi-squared distribution i.e., $J'=J$ for the ME test, and
$J'=2J$ for the SCF test.

In this work, we modify the statistic with a regularization parameter
$\gamma_{n}>0$, giving $\hat{\lambda}_{n}:=n\mathbf{\overline{z}}_{n}^{\top}\left(\mathbf{S}_{n}+\gamma_{n}I\right)^{-1}\mathbf{\overline{z}}_{n}$,
for stability of the matrix inverse. Using multivariate Slutsky's
theorem, under $H_{0}$, $\hat{\lambda}_{n}$ still asymptotically
follows $\chi^{2}(J')$ provided that $\gamma_{n}\to0$ as $n\to\infty$.

\section{Lower bound on test power, consistency of empirical power statistic\label{sec:main_results}}

\vspace{-2mm}This section contains our main results. We propose to
optimize the test locations $\mathcal{V}$ and kernel parameters (jointly
referred to as $\theta$) by maximizing a lower bound on the test
power in Proposition\,\ref{prop:lb_me_power}. This criterion offers
a simple objective function for fast parameter tuning. %
The bound may be of independent interest in other Hotelling's T-squared
statistics, since apart from the Gaussian case \citep[e.g.][Ch. 8]{Bilodeau2008},
the characterization of such statistics under the alternative distribution
is challenging. The optimization procedure is given in Sec.\,\ref{sec:experiments}.
We use $\mathbb{E}_{\mathbf{xy}}$ as a shorthand for $\mathbb{E}_{\mathbf{x}\sim P}\mathbb{E}_{\mathbf{y}\sim Q}$
and let $\|\cdot\|_{F}$ be the Frobenius norm. 

\begin{restatable}[Lower bound on ME test power]{prop}{lbmepower}
\label{prop:lb_me_power}

Let $\mathcal{K}$ be a uniformly bounded (i.e., $\exists B<\infty$
such that $\sup_{k\in\mathcal{K}}\sup_{(\mathbf{x},\mathbf{y})\in\mathcal{X}^{2}}|k(\mathbf{x},\mathbf{y})|\le B$)
family of $k:\mathcal{X}\times\mathcal{X}\to\mathbb{R}$ measurable
kernels. Let $\mathbb{V}$ be a collection in which each element is
a set of $J$ test locations. Assume that $\tilde{c}:=\sup_{\mathcal{V}\in\mathbb{V},k\in\mathcal{K}}\|\boldsymbol{\Sigma}^{-1}\|_{F}<\infty$.
For large $n$, the test power $\mathbb{P}\left(\hat{\lambda}_{n}\ge T_{\alpha}\right)$
of the ME test satisfies $\mathbb{P}\left(\hat{\lambda}_{n}\ge T_{\alpha}\right)\ge L(\lambda_{n})$
where 
\begin{align*}
L(\lambda_{n}) & :=1-2e^{-\xi_{1}(\lambda_{n}-T_{\alpha})^{2}/n}-2e^{-\frac{\left[\gamma_{n}(\lambda_{n}-T_{\alpha})(n-1)-\xi_{2}n\right]^{2}}{\xi_{3}n(2n-1)^{2}}}-2e^{-\left[(\lambda_{n}-T_{\alpha})/3-\overline{c}_{3}n\gamma_{n}\right]^{2}\gamma_{n}^{2}/\xi_{4}},
\end{align*}
and $\overline{c}_{3},\xi_{1},\ldots\xi_{4}$ are positive constants
depending on only $B,J$ and $\tilde{c}$. The parameter $\lambda_{n}:=n\boldsymbol{\mu}^{\top}\boldsymbol{\Sigma}^{-1}\boldsymbol{\mu}$
is the population counterpart of $\hat{\lambda}_{n}:=n\mathbf{\overline{z}}_{n}^{\top}\left(\mathbf{S}_{n}+\gamma_{n}I\right)^{-1}\mathbf{\overline{z}}_{n}$
where $\boldsymbol{\mu}=\mathbb{E}_{\mathbf{x}\mathbf{y}}[\mathbf{z}_{1}]$
and $\boldsymbol{\Sigma}=\mathbb{E}_{\mathbf{x}\mathbf{y}}[(\mathbf{z}_{1}-\boldsymbol{\mu})(\mathbf{z}_{1}-\boldsymbol{\mu})^{\top}]$.
For large $n$, $L(\lambda_{n})$ is increasing in $\lambda_{n}$. 

\end{restatable}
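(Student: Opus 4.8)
The plan is to control the deviation of the empirical statistic $\hat{\lambda}_n = n\mathbf{\overline{z}}_n^\top(\mathbf{S}_n+\gamma_n I)^{-1}\mathbf{\overline{z}}_n$ from its population analogue $\lambda_n = n\boldsymbol{\mu}^\top\boldsymbol{\Sigma}^{-1}\boldsymbol{\mu}$, and then convert a lower bound on $\hat{\lambda}_n$ into a lower bound on $\mathbb{P}(\hat{\lambda}_n \ge T_\alpha)$. First I would write $\hat{\lambda}_n - \lambda_n$ as a sum of three error terms obtained by a telescoping decomposition: the gap due to replacing $\mathbf{\overline{z}}_n$ by $\boldsymbol{\mu}$ in the quadratic form, the gap due to replacing $(\mathbf{S}_n+\gamma_n I)^{-1}$ by $\boldsymbol{\Sigma}^{-1}$, and the regularization bias $n\boldsymbol{\mu}^\top[(\boldsymbol{\Sigma}+\gamma_n I)^{-1}-\boldsymbol{\Sigma}^{-1}]\boldsymbol{\mu}$ coming from the $\gamma_n I$ shift. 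Each of these can be bounded in terms of $\|\mathbf{\overline{z}}_n - \boldsymbol{\mu}\|$, $\|\mathbf{S}_n - \boldsymbol{\Sigma}\|_F$, and $\gamma_n$, using the uniform bound $B$ on the kernel (hence on the entries of $\mathbf{z}_i$, giving bounded $\|\boldsymbol{\mu}\|$ and $\|\mathbf{z}_i-\boldsymbol{\mu}\|$), the assumed bound $\tilde c$ on $\|\boldsymbol{\Sigma}^{-1}\|_F$, and the resolvent identity $A^{-1}-C^{-1}=A^{-1}(C-A)C^{-1}$ (applied for $n$ large enough that $\mathbf{S}_n+\gamma_n I$ is invertible and $\|(\mathbf{S}_n+\gamma_n I)^{-1}\|$ is controlled, which is where the $\gamma_n^2$ and $(2n-1)^2$ factors in the denominators enter).

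Next I would apply concentration: $\mathbf{\overline{z}}_n$ is an average of i.i.d. bounded vectors, so a vector Hoeffding/Bernstein bound gives $\mathbb{P}(\|\mathbf{\overline{z}}_n - \boldsymbol{\mu}\| \ge t) \le 2e^{-c_1 n t^2}$ for a constant depending on $B,J$; similarly $\mathbf{S}_n$ is (up to the $n/(n-1)$ factor and a $\mathbf{\overline{z}}_n\mathbf{\overline{z}}_n^\top$ correction) an average of i.i.d. bounded matrices, giving $\mathbb{P}(\|\mathbf{S}_n-\boldsymbol{\Sigma}\|_F \ge s) \le 2e^{-c_2 n s^2}$ after absorbing the lower-order bias into the exponent's numerator — this accounts for the $(n-1)$ and $n$ terms appearing linearly inside the squared numerators of the second and third exponentials of $L(\lambda_n)$. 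Combining, on the complement of three "bad" events (one per error term), we get $\hat{\lambda}_n \ge \lambda_n - \varepsilon_1 - \varepsilon_2 - \varepsilon_3$ where each $\varepsilon_j$ is a deterministic-times-random-deviation quantity; choosing the thresholds $t,s$ in those events so that $\lambda_n - \varepsilon_1-\varepsilon_2-\varepsilon_3 \ge T_\alpha$ forces $t,s$ to scale like $(\lambda_n - T_\alpha)$ divided by the appropriate $n$- and $\gamma_n$-dependent prefactors, and plugging those choices into the exponential tail bounds yields exactly the three exponential terms in $L(\lambda_n)$, with $\overline{c}_3$ capturing the regularization-bias contribution $\sim n\gamma_n\|\boldsymbol{\mu}\|^2\tilde c^2$. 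A union bound over the three events then gives $\mathbb{P}(\hat{\lambda}_n \ge T_\alpha) \ge 1 - (\text{sum of three tails}) = L(\lambda_n)$.

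For the final claim that $L(\lambda_n)$ is increasing in $\lambda_n$ for large $n$: each of the three subtracted exponentials is of the form $2e^{-(a\lambda_n - b)^2 \cdot (\text{positive})}$ with $a>0$ (the coefficient of $\lambda_n$ inside each squared bracket is positive — $\xi_1^{1/2}/\sqrt n$, $\gamma_n(n-1)/[\xi_3^{1/2}\sqrt n(2n-1)]$, and $\gamma_n/(3\xi_4^{1/2})$ respectively, all nonzero). Such a function $\lambda_n \mapsto e^{-(a\lambda_n-b)^2 c}$ is decreasing once $a\lambda_n - b \ge 0$, i.e. once $\lambda_n \ge b/a$; since under $H_1$ we have $\lambda_n \to \infty$ while $b/a$ stays bounded (the $b/a$ ratios are $T_\alpha$, $T_\alpha + \xi_2 n/[\gamma_n(n-1)]$ which is $O(1/(n\gamma_n))+T_\alpha$, and $T_\alpha + 3\overline{c}_3 n\gamma_n$ which is $O(n\gamma_n)+T_\alpha$ — both bounded provided $n\gamma_n$ stays bounded, consistent with $\gamma_n \to 0$ at a suitable rate), for $n$ large enough each subtracted term is decreasing in $\lambda_n$ on the relevant range, hence $L$ is increasing.

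The main obstacle I expect is the bookkeeping in the second step: getting the concentration of $\mathbf{S}_n$ and of the quadratic form to produce precisely the stated exponents (with the exact $(n-1)$, $(2n-1)^2$, and $\gamma_n^2$ powers) requires carefully tracking how the operator-norm bound $\|(\mathbf{S}_n+\gamma_n I)^{-1}\| \le \gamma_n^{-1}$ interacts with the resolvent expansion and with the $n/(n-1)$ rescaling in $\mathbf{S}_n$, and choosing the split of $(\lambda_n - T_\alpha)$ among $\varepsilon_1,\varepsilon_2,\varepsilon_3$ (the factor $1/3$ in the last term hints that the budget is divided, though not evenly, across the three terms). Making these constants genuinely depend only on $B,J,\tilde c$ — and not on $n$ or $\gamma_n$ — is the delicate part; everything else is a fairly mechanical union-bound-plus-Hoeffding argument.
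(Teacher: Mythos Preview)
Your approach is essentially the paper's: telescope $|\hat\lambda_n - \lambda_n|$ into $\frac{\overline{c}_1 n}{\gamma_n}\|\mathbf{S}_n - \boldsymbol{\Sigma}\|_F + \overline{c}_2 n\|\overline{\mathbf{z}}_n - \boldsymbol{\mu}\|_2 + \overline{c}_3 n\gamma_n$ (using $\|(\mathbf{S}_n+\gamma_n I)^{-1}\|_F\le \sqrt{J}/\gamma_n$, which holds for all $n$ since $\mathbf{S}_n\succeq 0$, so no ``large $n$'' is needed there), then apply scalar Hoeffding coordinate-wise (the paper bounds $\|\overline{\mathbf{z}}_n-\boldsymbol{\mu}\|_2\le \sqrt{J}\max_j|\cdot|$ and uses ordinary Hoeffding on the worst coordinate, rather than a vector inequality), union-bound, and rearrange with the budget split $t/3$. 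The one bookkeeping wrinkle your grouping misses is that the paper further decomposes $\|\mathbf{S}_n - \boldsymbol{\Sigma}\|_F \le (*_1) + (*_2)$ with $(*_2) = \tfrac{8B^2J}{n-1} + 2B\sqrt{J}\tfrac{2n-1}{n-1}\|\overline{\mathbf{z}}_n - \boldsymbol{\mu}\|_2$ (this cross-term piece, not the resolvent, is where the $(2n-1)$ factor originates), so the three exponentials correspond to the mean term, the $(*_2)$ piece (another mean-deviation bound), and $(*_1)$ combined with the deterministic bias $\overline{c}_3 n\gamma_n$, respectively.
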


\begin{proof}[Proof (sketch)]
The idea is to construct a bound for $|\hat{\lambda}_{n}-\lambda_{n}|$
which involves bounding $\|\overline{\mathbf{z}}_{n}-\boldsymbol{\mu}\|_{2}$
and $\|\mathbf{S}_{n}-\boldsymbol{\Sigma}\|_{F}$ separately using
Hoeffding's inequality. The result follows after a reparameterization
of the bound on $\mathbb{P}(|\hat{\lambda}_{n}-\lambda_{n}|\ge t)$
to have $\mathbb{P}\left(\hat{\lambda}_{n}\ge T_{\alpha}\right)$.
See Sec.\,\ref{sec:proof_lb_me_power} for details.
\end{proof}
Proposition\,\ref{prop:lb_me_power} suggests that for large $n$
it is sufficient to maximize $\lambda_{n}$ to maximize a lower bound
on the ME test power. The same conclusion holds for the SCF test (result
omitted due to space constraints). Assume that $k$ is characteristic
\citep{Sriperumbudur2011}. It can be shown that $\lambda_{n}=0$
if and only if $P=Q$ i.e., $\lambda_{n}$ is a semimetric for $P$
and $Q$. In this sense, one can see $\lambda_{n}$ as encoding the
ease of rejecting $H_{0}$. The higher $\lambda_{n}$, the easier
for the test to correctly reject $H_{0}$ when $H_{1}$ holds. This
observation justifies the use of $\lambda_{n}$ as a maximization
objective for parameter tuning.

\textbf{Contributions} \,The statistic $\hat{\lambda}_{n}$ for both
ME and SCF tests depends on a set of test locations $\mathcal{V}$
and a kernel parameter $\sigma$. We propose to set $\theta:=\{\mathcal{V},\sigma\}=\arg\max_{\theta}\lambda_{n}=\arg\max_{\theta}\boldsymbol{\mu}^{\top}\boldsymbol{\Sigma}^{-1}\boldsymbol{\mu}$.
The optimization of $\theta$ brings two benefits: first, it significantly
increases the probability of rejecting $H_{0}$ when $H_{1}$ holds;
second, the learned test locations act as discriminative features
allowing an interpretation of how the two distributions differ. We
note that optimizing parameters by maximizing a test power proxy \citep{Gretton2012a}
is valid under both $H_{0}$ and $H_{1}$ as long as the data used
for parameter tuning and for testing are disjoint. If $H_{0}$ holds,
then $\theta=\arg\max0$ is arbitrary. Since the test statistic asymptotically
follows $\chi^{2}(J')$ for any $\theta$, the optimization does not
change the null distribution. Also, the rejection threshold $T_{\alpha}$
depends on only $J'$ and is independent of $\theta$.

To avoid creating a dependency between $\theta$ and the data used
for testing (which would affect the null distribution), we split the
data into two disjoint sets. Let $\mathsf{D}:=(\mathsf{X},\mathsf{Y})$
and $\mathsf{D}^{tr},\mathsf{D}^{te}\subset\mathsf{D}$ such that
$\mathsf{D}^{tr}\cap\mathsf{D}^{te}=\emptyset$ and $\mathsf{D}^{tr}\cup\mathsf{D}^{te}=\mathsf{D}$.
In practice, since $\boldsymbol{\mu}$ and $\boldsymbol{\Sigma}$
are unknown, we use $\hat{\lambda}_{n/2}^{tr}$ in place of $\lambda_{n}$,
where $\hat{\lambda}_{n/2}^{tr}$ is the test statistic computed on
the training set $\mathsf{D}^{tr}$. For simplicity, we assume that
each of $\mathsf{D}^{tr}$ and $\mathsf{D}^{te}$ has half of the
samples in $\mathsf{D}$. We perform an optimization of $\theta$
with gradient ascent algorithm on $\hat{\lambda}_{n/2}^{tr}(\theta)$.
The actual two-sample test is performed using the test statistic $\hat{\lambda}_{n/2}^{te}(\theta)$
computed on $\mathsf{D}^{te}$. The full procedure from tuning the
parameters to the actual two-sample test is summarized in Sec.\,\ref{sec:Algorithm}. 

Since we use an empirical estimate $\hat{\lambda}_{n/2}^{tr}$ in
place of $\lambda_{n}$ for parameter optimization, we give a finite-sample
bound in Theorem\,\ref{thm:lambda_conv_me} guaranteeing the convergence
of $\overline{\mathbf{z}}_{n}^{\top}(\mathbf{S}_{n}+\gamma_{n}I)^{-1}\overline{\mathbf{z}}_{n}$
to $\boldsymbol{\mu}^{\top}\boldsymbol{\Sigma}^{-1}\boldsymbol{\mu}$
as $n$ increases, uniformly over all kernels $k\in\mathcal{K}$ (a
family of uniformly bounded kernels) and all test locations in an
appropriate class. Kernel classes satisfying conditions of Theorem\,\ref{thm:lambda_conv_me}
include the widely used isotropic Gaussian kernel class $\mathcal{K}_{g}=\left\{ k_{\sigma}:(\mathbf{x},\mathbf{y})\mapsto\exp\left(-(2\sigma^{2})^{-1}\|\mathbf{x}-\mathbf{y}\|^{2}\right)\mid\sigma>0\right\} $,
and the more general full Gaussian kernel class $\mathcal{K}_{\mathrm{full}}=\{k:(\mathbf{x},\mathbf{y})\mapsto\exp\left(-(\mathbf{x}-\mathbf{y})^{\top}\mathbf{A}(\mathbf{x}-\mathbf{y})\right)\mid\mathbf{A}\text{ is positive definite}\}$
(see Lemma\,\ref{lemma:isotropic-Gaussian-properties} and Lemma\,\ref{lemma:full-Gaussian-properties}).

\begin{restatable}[Consistency of $\hat{\lambda}_n$  in the ME test]{thm}{consistencyme}
\label{thm:lambda_conv_me}

Let $\mathcal{X}\subseteq\mathbb{R}^{d}$ be a measurable set, and
$\mathbb{V}$ be a collection in which each element is a set of $J$
test locations. All suprema over $\mathcal{V}$ and $k$ are to be
understood as $\sup_{\mathcal{V}\in\mathbb{V}}$ and $\sup_{k\in\mathcal{K}}$
respectively. For a class of kernels $\mathcal{K}$ on $\mathcal{X}\subseteq\mathbb{R}^{d}$,
define 
\begin{align}
\mathcal{F}_{1} & :=\{\mathbf{x}\mapsto k(\mathbf{x},\mathbf{v})\mid k\in\mathcal{K},\mathbf{v}\in\mathcal{X}\},\quad\mathcal{F}_{2}:=\{\mathbf{x}\mapsto k(\mathbf{x},\mathbf{v})k(\mathbf{x},\mathbf{v}')\mid k\in\mathcal{K},\mathbf{v},\mathbf{v}'\in\mathcal{X}\},\label{eq:F_1-2}\\
\mathcal{F}_{3} & :=\{(\mathbf{x},\mathbf{y})\mapsto k(\mathbf{x},\mathbf{v})k(\mathbf{y},\mathbf{v}')\mid k\in\mathcal{K},\mathbf{v},\mathbf{v}'\in\mathcal{X}\}.\label{eq:F_3}
\end{align}

Assume that (1) $\mathcal{K}$ is a uniformly bounded (by $B$) family
of $k:\mathcal{X}\times\mathcal{X}\to\mathbb{R}$ measurable kernels,
(2) $\tilde{c}:=\sup_{\mathcal{V},k}\|\boldsymbol{\Sigma}^{-1}\|_{F}<\infty$,
and (3) $\mathcal{F}_{i}=\{f_{\theta_{i}}\mid\theta_{i}\in\Theta_{i}\}$
is VC-subgraph with VC-index $VC(\mathcal{F}_{i})$, and $\theta\mapsto f_{\theta_{i}}(m)$
is continuous $(\forall m,i=1,2,3)$. Let $\overline{c}_{1}:=4B^{2}J\sqrt{J}\tilde{c},\overline{c}_{2}:=4B\sqrt{J}\tilde{c},$
and $\overline{c}_{3}:=4B^{2}J\tilde{c}^{2}$. Let $C_{i}$-s $(i=1,2,3)$
be the universal constants associated to $\mathcal{F}_{i}$-s according
to Theorem 2.6.7 in \citet{Vaart2000}. Then for any $\delta\in(0,1)$
with probability at least $1-\delta$,{\small{}
\begin{align*}
 & \sup_{\mathcal{V},k}\left|\overline{\mathbf{z}}_{n}^{\top}(\mathbf{S}_{n}+\gamma_{n}I)^{-1}\overline{\mathbf{z}}_{n}-\boldsymbol{\mu}^{\top}\boldsymbol{\Sigma}^{-1}\boldsymbol{\mu}\right|\\
 & \le2T_{\mathcal{F}_{1}}\left(\frac{2}{\gamma_{n}}\overline{c}_{1}BJ\frac{2n-1}{n-1}+\overline{c}_{2}\sqrt{J}\right)+\frac{2}{\gamma_{n}}\overline{c}_{1}J(T_{\mathcal{F}_{2}}+T_{\mathcal{F}_{3}})+\frac{8}{\gamma_{n}}\frac{\overline{c}_{1}B^{2}J}{n-1}+\overline{c}_{3}\gamma_{n},\text{where}\\
T_{\mathcal{F}_{j}} & =\frac{16\sqrt{2}B^{\zeta_{j}}}{\sqrt{n}}\left(2\sqrt{\log\left[C_{j}\times VC(\mathcal{F}_{j})(16e)^{VC(\mathcal{F}_{j})}\right]}+\frac{\sqrt{2\pi[VC(\mathcal{F}_{j})-1]}}{2}\right)+B^{\zeta_{j}}\sqrt{\frac{2\log(5/\delta)}{n}},
\end{align*}
}for $j=1,2,3$ and $\zeta_{1}=1,\zeta_{2}=\zeta_{3}=2$. \end{restatable}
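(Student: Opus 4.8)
The plan is to bound the left-hand side by a chain of triangle inequalities that isolates (i) the error from regularizing $\boldsymbol{\Sigma}^{-1}$ to $(\boldsymbol{\Sigma}+\gamma_{n}I)^{-1}$, (ii) the error from replacing $\overline{\mathbf{z}}_{n}$ by $\boldsymbol{\mu}$, and (iii) the error from replacing $\mathbf{S}_{n}$ by $\boldsymbol{\Sigma}$; then to reduce (ii) and (iii) to uniform deviations of the empirical processes indexed by $\mathcal{F}_{1},\mathcal{F}_{2},\mathcal{F}_{3}$; and finally to control those via VC-subgraph entropy bounds and a bounded-difference concentration inequality. Write $\mathbf{M}_{n}:=(\mathbf{S}_{n}+\gamma_{n}I)^{-1}$ and $\boldsymbol{\Sigma}_{\gamma}:=(\boldsymbol{\Sigma}+\gamma_{n}I)^{-1}$. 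Then
\[
\left|\overline{\mathbf{z}}_{n}^{\top}\mathbf{M}_{n}\overline{\mathbf{z}}_{n}-\boldsymbol{\mu}^{\top}\boldsymbol{\Sigma}^{-1}\boldsymbol{\mu}\right|\le\underbrace{\left|(\overline{\mathbf{z}}_{n}-\boldsymbol{\mu})^{\top}\mathbf{M}_{n}(\overline{\mathbf{z}}_{n}+\boldsymbol{\mu})\right|}_{(\mathrm{ii})}+\underbrace{\left|\boldsymbol{\mu}^{\top}(\mathbf{M}_{n}-\boldsymbol{\Sigma}_{\gamma})\boldsymbol{\mu}\right|}_{(\mathrm{iii})}+\underbrace{\left|\boldsymbol{\mu}^{\top}(\boldsymbol{\Sigma}_{\gamma}-\boldsymbol{\Sigma}^{-1})\boldsymbol{\mu}\right|}_{(\mathrm{i})},
\]
and each term is then handled uniformly over $\mathcal{V}\in\mathbb{V}$, $k\in\mathcal{K}$.

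For (i), the identity $\boldsymbol{\Sigma}^{-1}-\boldsymbol{\Sigma}_{\gamma}=\gamma_{n}\boldsymbol{\Sigma}^{-1}\boldsymbol{\Sigma}_{\gamma}$ together with $\|\boldsymbol{\Sigma}^{-1}\|_{op}\le\|\boldsymbol{\Sigma}^{-1}\|_{F}\le\tilde{c}$, $\|\boldsymbol{\Sigma}_{\gamma}\|_{op}\le\|\boldsymbol{\Sigma}^{-1}\|_{op}\le\tilde{c}$, and $\|\boldsymbol{\mu}\|_{2}^{2}\le 4B^{2}J$ (each coordinate of $\boldsymbol{\mu}$ is a difference of two expectations of a kernel bounded by $B$) gives $(\mathrm{i})\le 4B^{2}J\tilde{c}^{2}\gamma_{n}=\overline{c}_{3}\gamma_{n}$, which is exactly the last term of the bound. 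For (iii), the resolvent identity $\mathbf{M}_{n}-\boldsymbol{\Sigma}_{\gamma}=\mathbf{M}_{n}(\boldsymbol{\Sigma}-\mathbf{S}_{n})\boldsymbol{\Sigma}_{\gamma}$ with $\|\mathbf{M}_{n}\|_{op}\le\gamma_{n}^{-1}$ and $\|\boldsymbol{\Sigma}_{\gamma}\|_{op}\le\tilde{c}$ yields $(\mathrm{iii})\le 4B^{2}J\gamma_{n}^{-1}\tilde{c}\,\|\mathbf{S}_{n}-\boldsymbol{\Sigma}\|_{F}$; for (ii), using $\|\mathbf{M}_{n}\|_{op}\le\gamma_{n}^{-1}$ and $\|\overline{\mathbf{z}}_{n}+\boldsymbol{\mu}\|_{2}\le 4B\sqrt{J}$ gives $(\mathrm{ii})\le 4B\sqrt{J}\gamma_{n}^{-1}\|\overline{\mathbf{z}}_{n}-\boldsymbol{\mu}\|_{2}$. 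Thus everything reduces to uniform bounds on $\|\overline{\mathbf{z}}_{n}-\boldsymbol{\mu}\|_{2}$ and $\|\mathbf{S}_{n}-\boldsymbol{\Sigma}\|_{F}$.

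Next I would reduce these two quantities to the function classes. Since $\overline{z}_{n,j}-\mu_{j}=(P_{n}-P)k(\cdot,\mathbf{v}_{j})-(Q_{n}-Q)k(\cdot,\mathbf{v}_{j})$ with $k(\cdot,\mathbf{v}_{j})\in\mathcal{F}_{1}$, we get $\sup_{\mathcal{V},k}\|\overline{\mathbf{z}}_{n}-\boldsymbol{\mu}\|_{2}\le 2\sqrt{J}\,\mathcal{E}_{1}$, where $\mathcal{E}_{j}$ is the relevant uniform empirical-process deviation over $\mathcal{F}_{j}$ (for $j=1,2$ over the $P$- and $Q$-samples separately, for $j=3$ over the paired sample). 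Expanding $\mathbf{S}_{n}=\tfrac{n}{n-1}(\widehat{\mathbf{M}}_{2}-\overline{\mathbf{z}}_{n}\overline{\mathbf{z}}_{n}^{\top})$ with $\widehat{\mathbf{M}}_{2}:=\tfrac1n\sum_{i}\mathbf{z}_{i}\mathbf{z}_{i}^{\top}$ and $\boldsymbol{\Sigma}=\mathbb{E}[\mathbf{z}_{1}\mathbf{z}_{1}^{\top}]-\boldsymbol{\mu}\boldsymbol{\mu}^{\top}$, and expanding each entry $(\mathbf{z}_{i})_{a}(\mathbf{z}_{i})_{b}$ into terms $k(\mathbf{x}_{i},\mathbf{v}_{a})k(\mathbf{x}_{i},\mathbf{v}_{b})$, $k(\mathbf{y}_{i},\mathbf{v}_{a})k(\mathbf{y}_{i},\mathbf{v}_{b})\in\mathcal{F}_{2}$ and $k(\mathbf{x}_{i},\mathbf{v}_{a})k(\mathbf{y}_{i},\mathbf{v}_{b})\in\mathcal{F}_{3}$, the entrywise deviation of $\widehat{\mathbf{M}}_{2}$ from its mean is controlled by $\mathcal{E}_{2},\mathcal{E}_{3}$, the deviation $\|\overline{\mathbf{z}}_{n}\overline{\mathbf{z}}_{n}^{\top}-\boldsymbol{\mu}\boldsymbol{\mu}^{\top}\|_{F}\le\|\overline{\mathbf{z}}_{n}-\boldsymbol{\mu}\|_{2}(\|\overline{\mathbf{z}}_{n}\|_{2}+\|\boldsymbol{\mu}\|_{2})$ by $\mathcal{E}_{1}$, and the factor $\tfrac{n}{n-1}-1=\tfrac1{n-1}$ times the trivial bounds $\|\widehat{\mathbf{M}}_{2}\|_{F},\|\overline{\mathbf{z}}_{n}\overline{\mathbf{z}}_{n}^{\top}\|_{F}\le 4B^{2}J$ produces the $\tfrac1{n-1}$ terms in the statement. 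Collecting numerical factors into $\overline{c}_{1}=4B^{2}J\sqrt{J}\tilde{c}$, $\overline{c}_{2}=4B\sqrt{J}\tilde{c}$ gives the claimed expression once $\mathcal{E}_{1},\mathcal{E}_{2},\mathcal{E}_{3}$ are replaced by $T_{\mathcal{F}_{1}},T_{\mathcal{F}_{2}},T_{\mathcal{F}_{3}}$.

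Finally I would prove $\mathcal{E}_{j}\le T_{\mathcal{F}_{j}}$ with probability at least $1-\delta/5$ for each of the (at most five) empirical processes appearing above, and union-bound, yielding the $\log(5/\delta)$ inside $T_{\mathcal{F}_{j}}$. Since $\mathcal{F}_{j}$ is VC-subgraph, Theorem 2.6.7 of \citet{Vaart2000} bounds its uniform $L^{2}$ covering numbers by $C_{j}\,VC(\mathcal{F}_{j})(16e)^{VC(\mathcal{F}_{j})}\varepsilon^{-2(VC(\mathcal{F}_{j})-1)}$ relative to the envelope $B^{\zeta_{j}}$ ($\zeta_{1}=1$, $\zeta_{2}=\zeta_{3}=2$); after symmetrization, Dudley's entropy integral bounds $\mathbb{E}\,\mathcal{E}_{j}$ by the $n^{-1/2}$ leading term of $T_{\mathcal{F}_{j}}$, and since $f\mapsto(P_{n}-P)f$ changes by at most $2B^{\zeta_{j}}/n$ when a single sample point is moved, McDiarmid's bounded-difference inequality upgrades this to the stated high-probability bound with the $B^{\zeta_{j}}\sqrt{2\log(5/\delta)/n}$ tail; measurability of the suprema and applicability of the covering argument over the full parameter set follow from the continuity and measurability assumptions on the $\mathcal{F}_{i}$. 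I expect the main obstacle to be controlling uniformity over $\mathcal{K}$ and $\mathbb{V}$ \emph{inside the nonlinear map} $\mathbf{S}_{n}\mapsto(\mathbf{S}_{n}+\gamma_{n}I)^{-1}$: a naive perturbation bound would cost $\gamma_{n}^{-2}$, and it is the combination of the resolvent identity with the standing assumption $\tilde{c}=\sup_{\mathcal{V},k}\|\boldsymbol{\Sigma}^{-1}\|_{F}<\infty$ (which bounds $\|\boldsymbol{\Sigma}_{\gamma}\|_{op}$ uniformly) that keeps the dependence at $\gamma_{n}^{-1}$. The second delicate point, verifying that $\mathcal{F}_{1},\mathcal{F}_{2},\mathcal{F}_{3}$ are genuinely VC-subgraph with a controllable VC-index, is exactly what the accompanying lemmas for the isotropic and full Gaussian kernel classes establish.
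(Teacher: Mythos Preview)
Your plan is essentially the paper's proof: the same reduction to $\|\overline{\mathbf z}_n-\boldsymbol\mu\|_2$ and $\|\mathbf S_n-\boldsymbol\Sigma\|_F$, the same expansion of those two quantities into the five empirical processes over $\mathcal F_1,\mathcal F_2,\mathcal F_3$ (hence the $\log(5/\delta)$), and the same concentration argument (symmetrization $+$ Dudley entropy integral over a VC-subgraph class, then McDiarmid) that produces $T_{\mathcal F_j}$.

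The one place where you diverge is the very first triangle-inequality split. You insert $\boldsymbol\mu^{\top}\mathbf M_n\boldsymbol\mu$ as the intermediate quantity, so your mean-error piece (ii) is $|(\overline{\mathbf z}_n-\boldsymbol\mu)^{\top}\mathbf M_n(\overline{\mathbf z}_n+\boldsymbol\mu)|$ and is controlled by $\|\mathbf M_n\|_{op}\le\gamma_n^{-1}$. The paper instead inserts $\overline{\mathbf z}_n^{\top}\boldsymbol\Sigma_\gamma\overline{\mathbf z}_n$, so its mean-error piece is $|\langle \overline{\mathbf z}_n\overline{\mathbf z}_n^{\top}-\boldsymbol\mu\boldsymbol\mu^{\top},\boldsymbol\Sigma_\gamma\rangle_F|$, bounded via $\|\boldsymbol\Sigma_\gamma\|_F\le\|\boldsymbol\Sigma^{-1}\|_F\le\tilde c$ with \emph{no} $\gamma_n^{-1}$. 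This is exactly what yields the stand-alone $\overline c_2\sqrt J$ coefficient (independent of $\gamma_n$) in front of $T_{\mathcal F_1}$ in the stated bound. With your decomposition you obtain $4B\sqrt J\,\gamma_n^{-1}$ there instead of $4B\sqrt J\,\tilde c=\overline c_2$, so the claim ``collecting numerical factors into $\overline c_1,\overline c_2$ gives the claimed expression'' does not go through as written; your inequality is still valid (and asymptotically of the same order once you choose $\gamma_n\sim n^{-1/4}$), but it is not the inequality in the theorem. Swapping your intermediate point to $\overline{\mathbf z}_n^{\top}\boldsymbol\Sigma_\gamma\overline{\mathbf z}_n$ fixes this and recovers the paper's constants verbatim.
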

\begin{proof}[Proof (sketch)]
 The idea is to lower bound the difference with an expression involving
$\sup_{\mathcal{V},k}\|\overline{\mathbf{z}}_{n}-\boldsymbol{\mu}\|_{2}$
and $\sup_{\mathcal{V},k}\|\mathbf{S}_{n}-\boldsymbol{\Sigma}\|_{F}$.
These two quantities can be seen as suprema of empirical processes,
and can be bounded by Rademacher complexities of their respective
function classes (i.e., $\mathcal{F}_{1},\mathcal{F}_{2},$ and $\mathcal{F}_{3}$).
Finally, the Rademacher complexities can be upper bounded using Dudley
entropy bound and VC subgraph properties of the function classes.
Proof details are given in Sec.\,\ref{sec:proof_lambda_conv_me}.
\end{proof}
Theorem\,\ref{thm:lambda_conv_me} implies that if we set $\gamma_{n}=\mathcal{O}(n^{-1/4})$,
then we have $\sup_{\mathcal{V},k}\left|\overline{\mathbf{z}}_{n}^{\top}(\mathbf{S}_{n}+\gamma_{n}I)^{-1}\overline{\mathbf{z}}_{n}-\boldsymbol{\mu}^{\top}\boldsymbol{\Sigma}^{-1}\boldsymbol{\mu}\right|=\mathcal{O}_{p}(n^{-1/4})$
as the rate of convergence. Both Proposition \ref{prop:lb_me_power}
and Theorem \ref{thm:lambda_conv_me} require $\tilde{c}:=\sup_{\mathcal{V}\in\mathbb{V},k\in\mathcal{K}}\|\boldsymbol{\Sigma}^{-1}\|_{F}<\infty$
as a precondition. To guarantee that $\tilde{c}<\infty$, a concrete
construction of $\mathcal{K}$ is the isotropic Gaussian kernel class
$\mathcal{K}_{g}$, where $\sigma$ is constrained to be in a compact
set. Also, consider $\mathbb{V}:=\{\mathcal{V}\mid\text{any two }\text{ locations are at least }\epsilon\text{ distance apart, and all test locations have their norms bounded by }\zeta\}$
for some $\epsilon,\zeta>0$. Then, for any non-degenerate $P,Q$,
we have $\tilde{c}<\infty$ since $(\sigma,\mathcal{V})\mapsto\lambda_{n}$
is continuous, and thus attains its supremum over compact sets $\mathcal{K}$
and $\mathbb{V}$.

\section{Experiments\label{sec:experiments} }

\begin{wraptable}{r}{0.50\textwidth}   
\vspace{-2mm}
\caption{Four toy problems. $H_{0}$ holds only in SG.\label{tab:toy_problems}}
\label{tab:toy_problems}
\centering{}%
\begin{tabular}{lll} 
\toprule
Data & $P$ & $Q$\tabularnewline 
\midrule
SG & $\mathcal{N}(\boldsymbol{0}_{d},I_{d})$ & $\mathcal{N}(\boldsymbol{0}_{d},I_{d})$\tabularnewline 
GMD & $\mathcal{N}(\boldsymbol{0}_{d},I_{d})$ & $\mathcal{N}((1,0,\ldots,0)^{\top},I_{d})$\tabularnewline 
GVD & $\mathcal{N}(\boldsymbol{0}_{d},I_{d})$ & $\mathcal{N}(\boldsymbol{0}_{d},\diag(2,1,\ldots,1))$\tabularnewline 
Blobs & \multicolumn{2}{p{55mm}}{Gaussian mixtures in $\mathbb{R}^{2}$ as studied in \citet{Chwialkowski2015,Gretton2012a}. }\tabularnewline 
\multicolumn{3}{c}{\includegraphics[width=30mm]{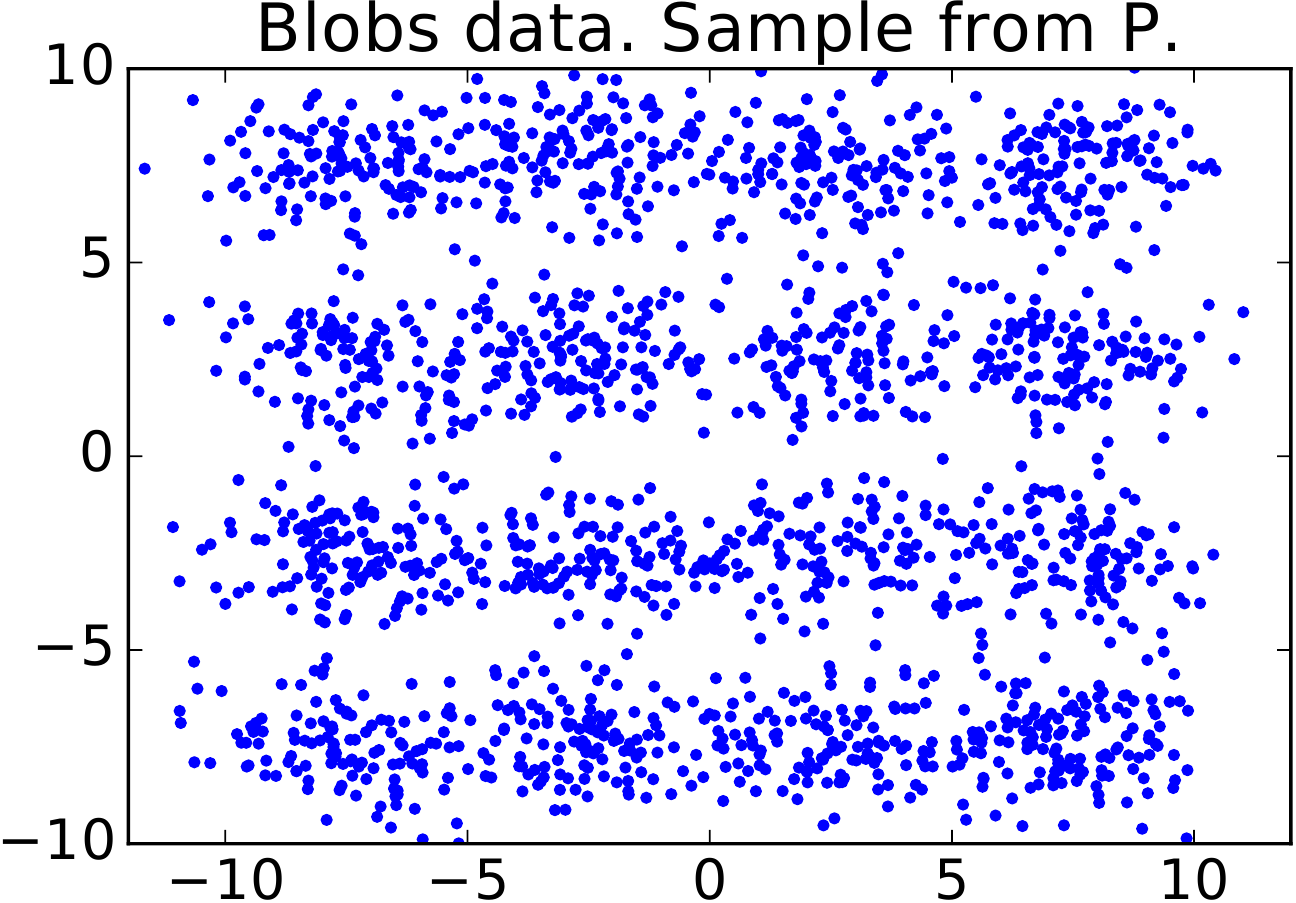} \includegraphics[width=30mm]{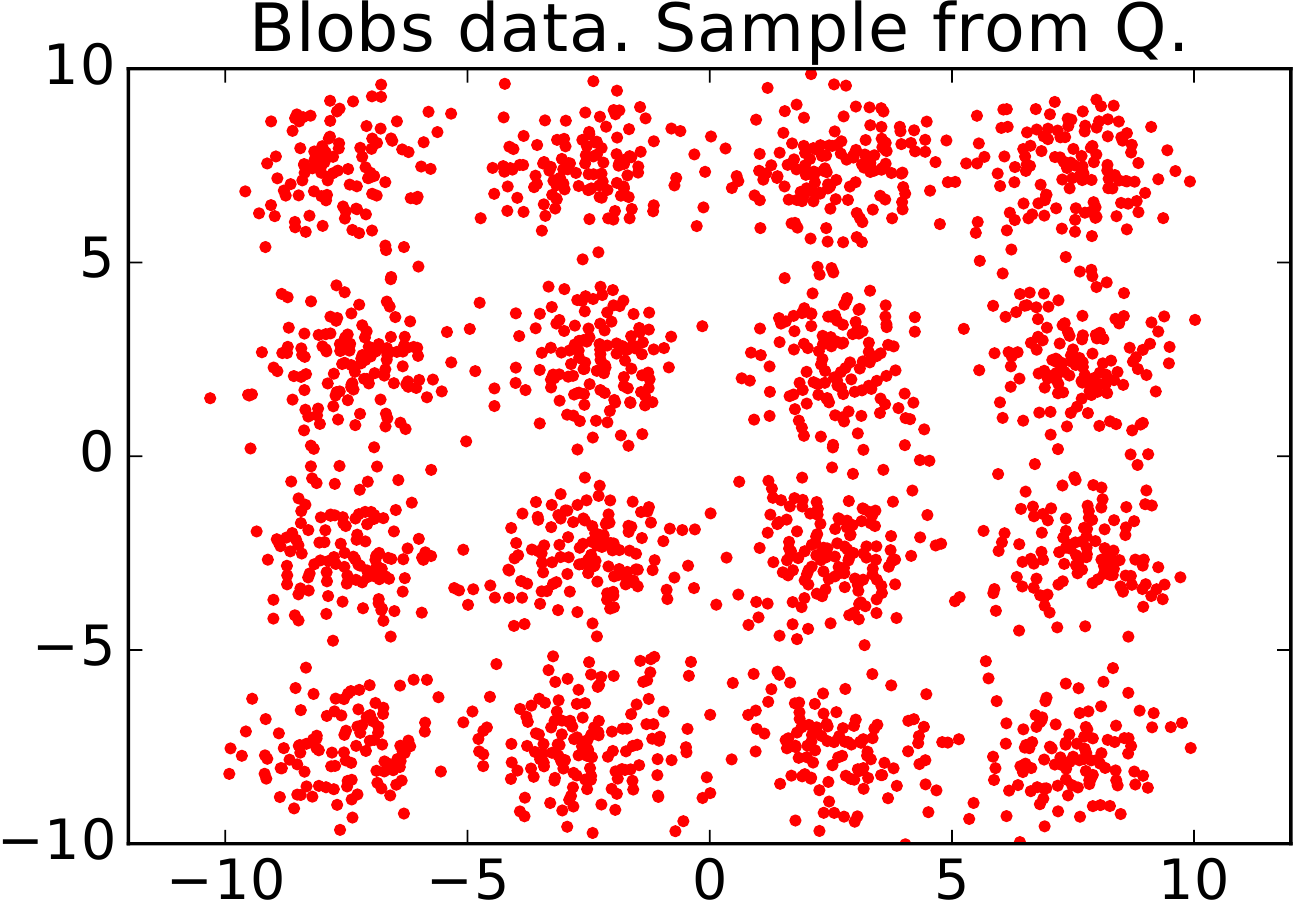} }  \tabularnewline
\bottomrule
\end{tabular} 
\vspace{-4mm}
\end{wraptable} 

In this section, we demonstrate the effectiveness of the proposed
methods on both toy and real problems. We consider the isotropic Gaussian
kernel class $\mathcal{K}_{g}$ in all kernel-based tests. We study
seven two-sample test algorithms. For the SCF test, we set $\hat{l}(\mathbf{x})=k(\mathbf{x},\mathbf{0})$.
Denote by ME-full and SCF-full the ME and SCF tests whose test locations
and the Gaussian width $\sigma$ are fully optimized using gradient
ascent on a separate training sample $(\mathsf{D}^{tr})$ of the same
size as the test set $(\mathsf{D}^{te})$. ME-grid and SCF-grid are
as in \citet{Chwialkowski2015} where only the Gaussian width is optimized
by a grid search,\footnote{\citet{Chwialkowski2015} chooses the Gaussian width that minimizes
the median of the p-values, a heuristic that does not directly address
test power. Here, we perform a grid search to choose the best Gaussian
width by maximizing $\hat{\lambda}_{n/2}^{tr}$ as done in ME-full
and SCF-full.}and the test locations are randomly drawn from a multivariate normal
distribution. MMD-quad (quadratic-time) and MMD-lin (linear-time)
refer to the nonparametric tests based on maximum mean discrepancy
of \citet{Gretton2012}, where to ensure a fair comparison, the Gaussian
kernel width is also chosen so as to maximize a criterion for the
test power on training data, following the same principle as \citep{Gretton2012a}.
For MMD-quad, since its null distribution is given by an infinite
sum of weighted chi-squared variables (no closed-form quantiles),
in each trial we randomly permute the two samples 400 times to approximate
the null distribution. %
Finally, $T^{2}$ is the standard two-sample Hotelling's T-squared
test, which serves as a baseline with Gaussian assumptions on $P$
and $Q$. 

In all the following experiments, each problem is repeated for 500
trials. For toy problems, new samples are generated from the specified
$P,Q$ distributions in each trial. For real problems, samples are
partitioned randomly into training and test sets in each trial. In
all of the simulations, we report an empirical estimate of $\mathbb{P}(\hat{\lambda}_{n/2}^{te}\ge T_{\alpha})$
which is the proportion of the number of times the statistic $\hat{\lambda}_{n/2}^{te}$
is above $T_{\alpha}$. This quantity is an estimate of type-I error
under $H_{0}$, and corresponds to test power when $H_{1}$ is true.
We set $\alpha=0.01$ in all the experiments. All the code and preprocessed
data are available at \url{https://github.com/wittawatj/interpretable-test}.

\textbf{Optimization\,} The parameter tuning objective $\hat{\lambda}_{n/2}^{tr}(\theta)$
is a function of $\theta$ consisting of one real-valued $\sigma$
and $J$ test locations each of $d$ dimensions. The parameters $\theta$
can thus be regarded as a $Jd+1$ Euclidean vector. We take the derivative
of $\hat{\lambda}_{n/2}^{tr}(\theta)$ with respect to $\theta$,
and use gradient ascent to maximize it. $J$ is pre-specified and
fixed. For the ME test, we initialize the test locations with realizations
from two multivariate normal distributions fitted to samples from
$P$ and $Q$; this ensures that the initial locations are well supported
by the data. For the SCF test, initialization using the standard normal
distribution is found to be sufficient. The parameter $\gamma_{n}$
is not optimized; we set the regularization parameter $\gamma_{n}$
to be as small as possible while being large enough to ensure that
$(\mathbf{S}_{n}+\gamma_{n}I)^{-1}$ can be stably computed. We emphasize
that both the optimization and testing are linear in $n$. The testing
cost $\mathcal{O}(J^{3}+J^{2}n+dJn)$ and the optimization costs $\mathcal{O}(J^{3}+dJ^{2}n)$
per gradient ascent iteration. Runtimes of all methods are reported
in Sec. \ref{sec:runtimes} in the appendix.

\paragraph{1. Informative features: simple demonstration\label{subsec:Illustrative_experiment}}

\vspace{-2mm}We begin with a demonstration that the proxy $\hat{\lambda}_{n/2}^{tr}(\theta)$
for the test power is informative for revealing the difference of
the two samples in the ME test. We consider the Gaussian Mean Difference
(GMD) problem (see Table\,\ref{tab:toy_problems}), where both $P$
and $Q$ are two-dimensional normal distributions with the difference
in means. We use $J=2$ test locations $\mathbf{v}_{1}$ and $\mathbf{v}_{2}$,
where $\mathbf{v}_{1}$ is fixed to the location indicated by the
black triangle in Fig.\,\ref{fig:lambda_contour}. The contour plot
shows $\mathbf{v}_{2}\mapsto\hat{\lambda}_{n/2}^{tr}(\mathbf{v}_{1},\mathbf{v}_{2})$.

Fig.\,\ref{fig:lambda_contour} (top) suggests that $\hat{\lambda}_{n/2}^{tr}$
is maximized when $\mathbf{v}_{2}$ is placed in either of the two
regions that captures the difference of the two samples i.e., the
region in which the probability masses of $P$ and $Q$ have less
overlap. Fig.\,\ref{fig:lambda_contour} (bottom), we consider placing
$\mathbf{v}_{1}$ in one of the two key regions. In this case, the
contour plot shows that $\mathbf{v}_{2}$ should be placed in the
other region to maximize $\hat{\lambda}_{n/2}^{tr}$, implying that
placing multiple test locations in the same neighborhood will not
increase the discriminability. The two modes on the left and right
suggest two ways to place the test location in a region that reveals
the difference. The non-convexity of the $\hat{\lambda}_{n/2}^{tr}$
is an indication of many informative ways to detect differences of
$P$ and $Q$, rather than a drawback. A convex objective would not
capture this multimodality. 

\begin{wrapfigure}{r}{0.25\textwidth}   
\vspace{-8mm}
\begin{center}     
\includegraphics[width=3.5cm]{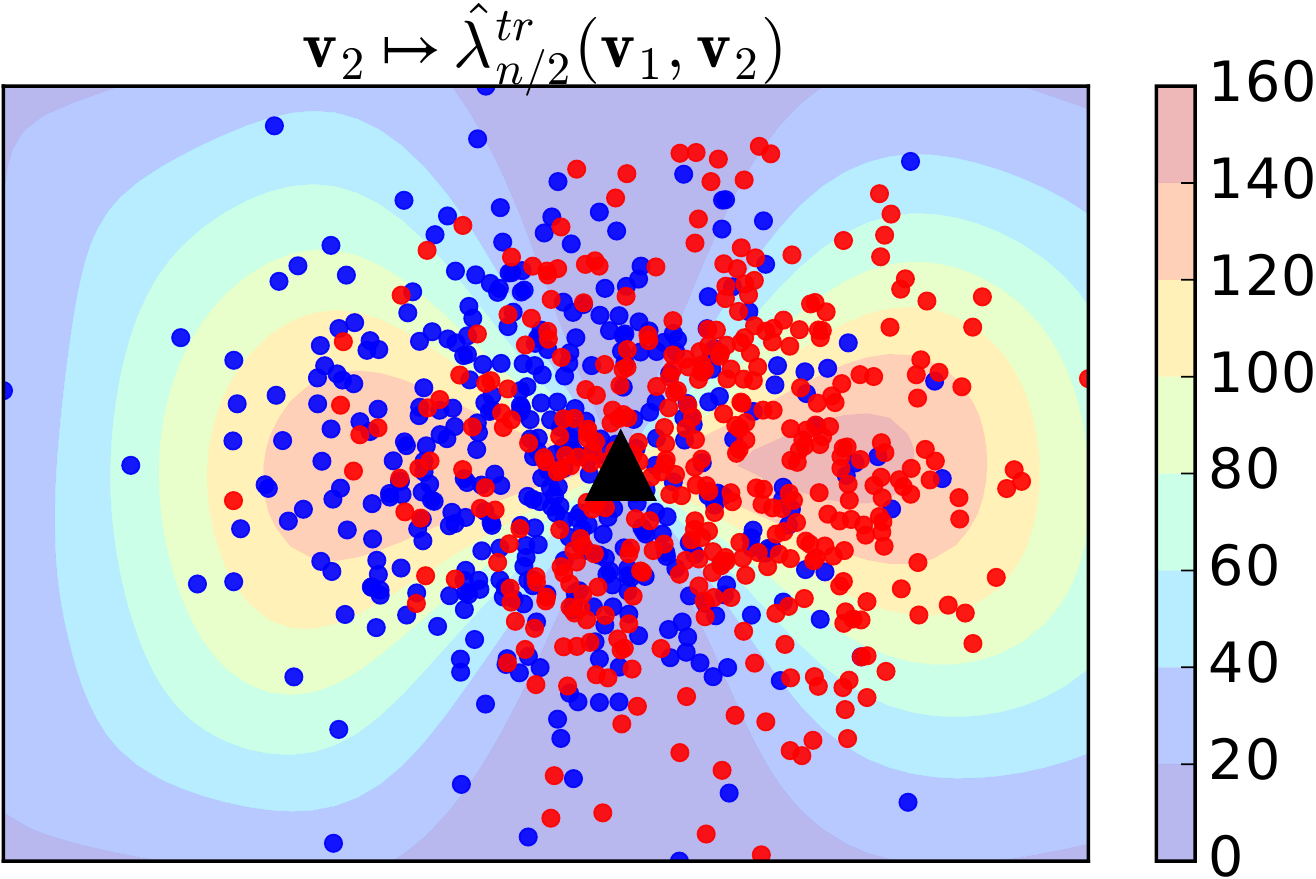}
\includegraphics[width=3.5cm]{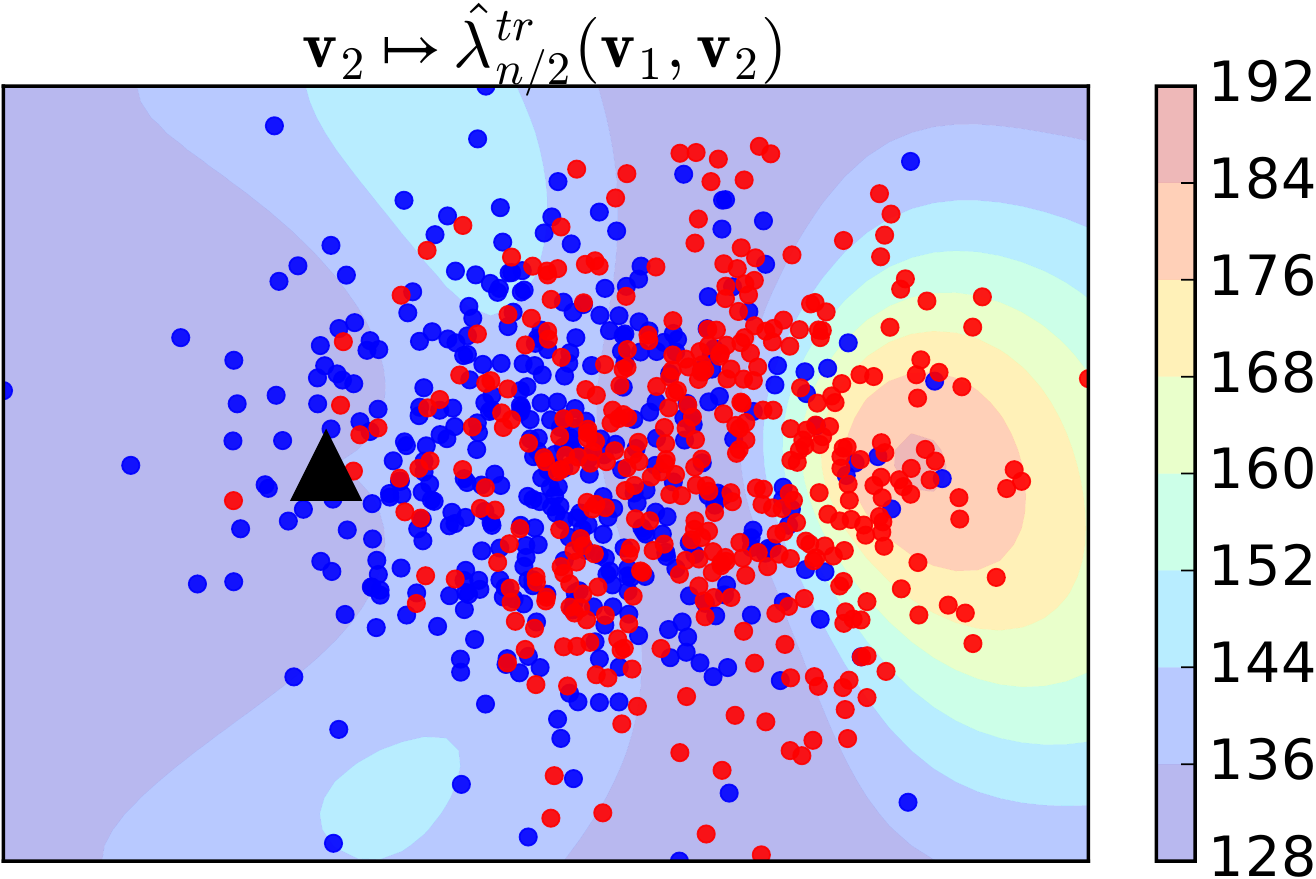}
\end{center}   
\caption{A contour plot of $\hat{\lambda}^{tr}_{n/2}$ as a function of $\mathbf{v}_2$  when $J=2$ and $\mathbf{v}_1$ is fixed (black triangle). The objective $\hat{\lambda}^{tr}_{n/2}$ is high in the regions that reveal the difference of the two samples.} 
\label{fig:lambda_contour}
\vspace{-6mm}
\end{wrapfigure} 

\paragraph{2. Test power vs. sample size $n$ \label{subsec:pow_vs_n}}

\vspace{-2mm}

\begin{figure}
\vspace{-3mm}
\hspace{-2mm}
\subfloat[SG. $d=50$. \label{fig:ex1_sg}]{
\includegraphics[width=0.23\linewidth]{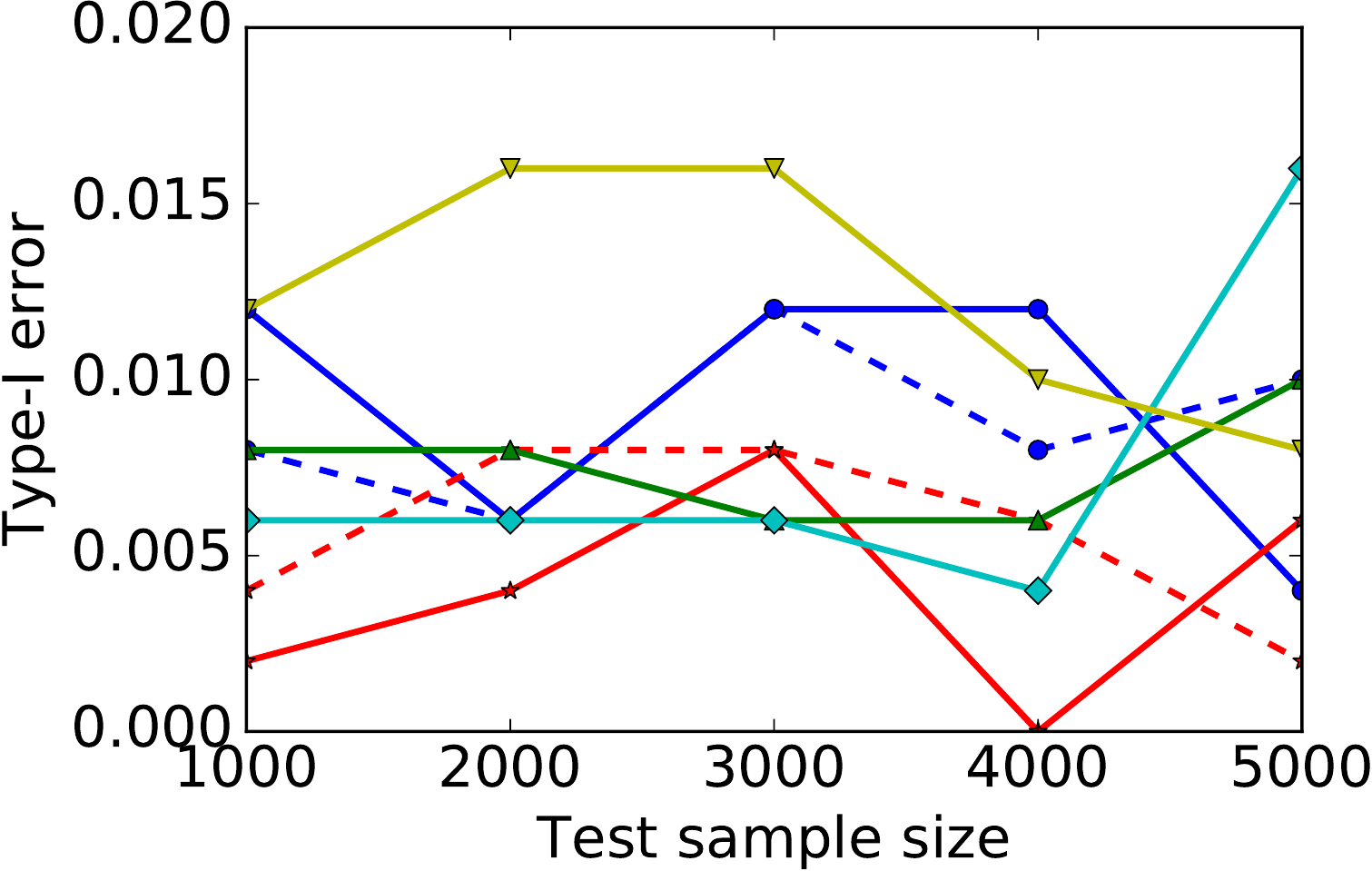}
}\hspace{-2mm}
\subfloat[GMD. $d=100$. \label{fig:ex1_gmd}]{ \includegraphics[width=0.22\linewidth]{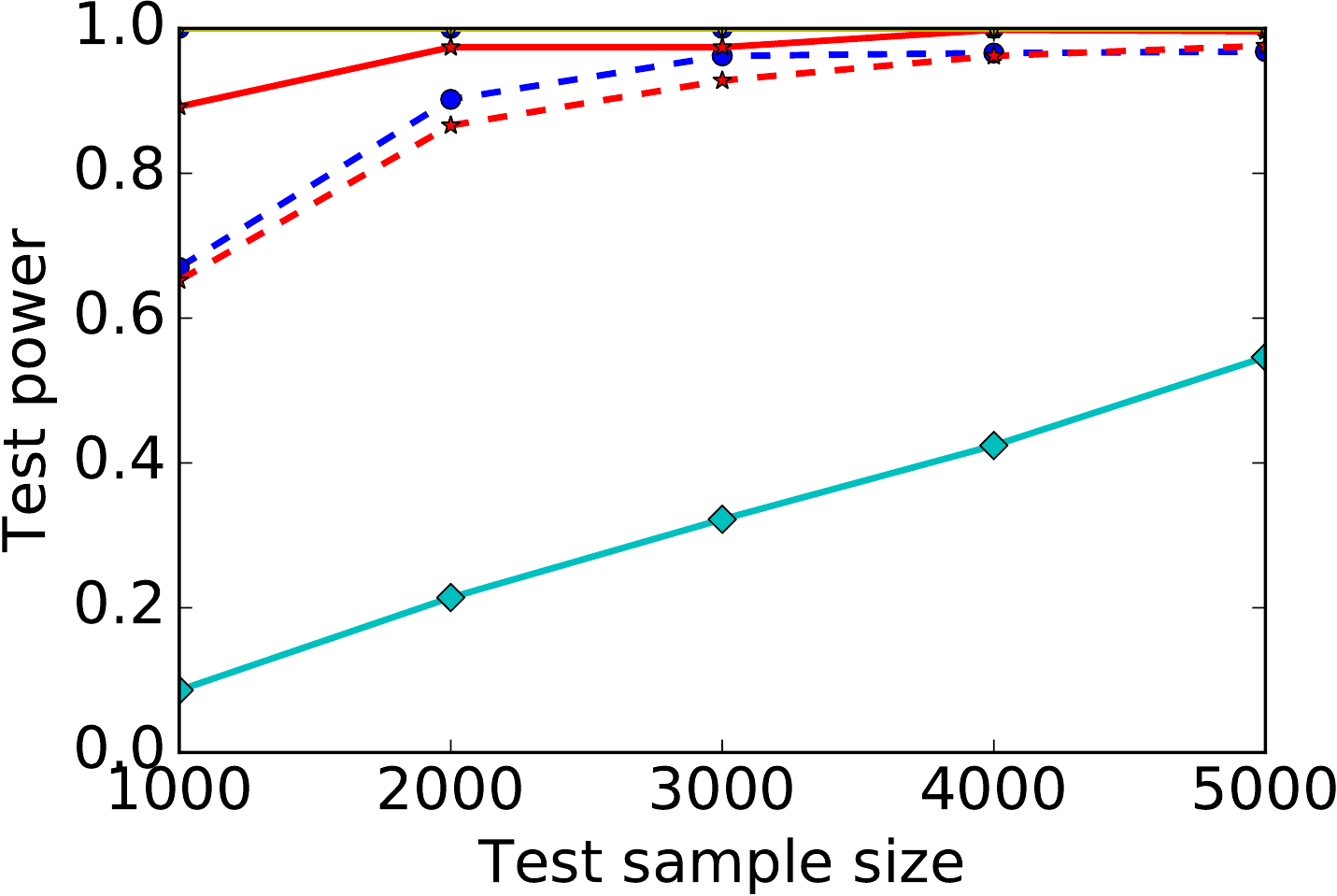} 
}\hspace{-2mm}
\subfloat[GVD. $d=50$. \label{fig:ex1_gvd}]{ \includegraphics[width=0.22\linewidth]{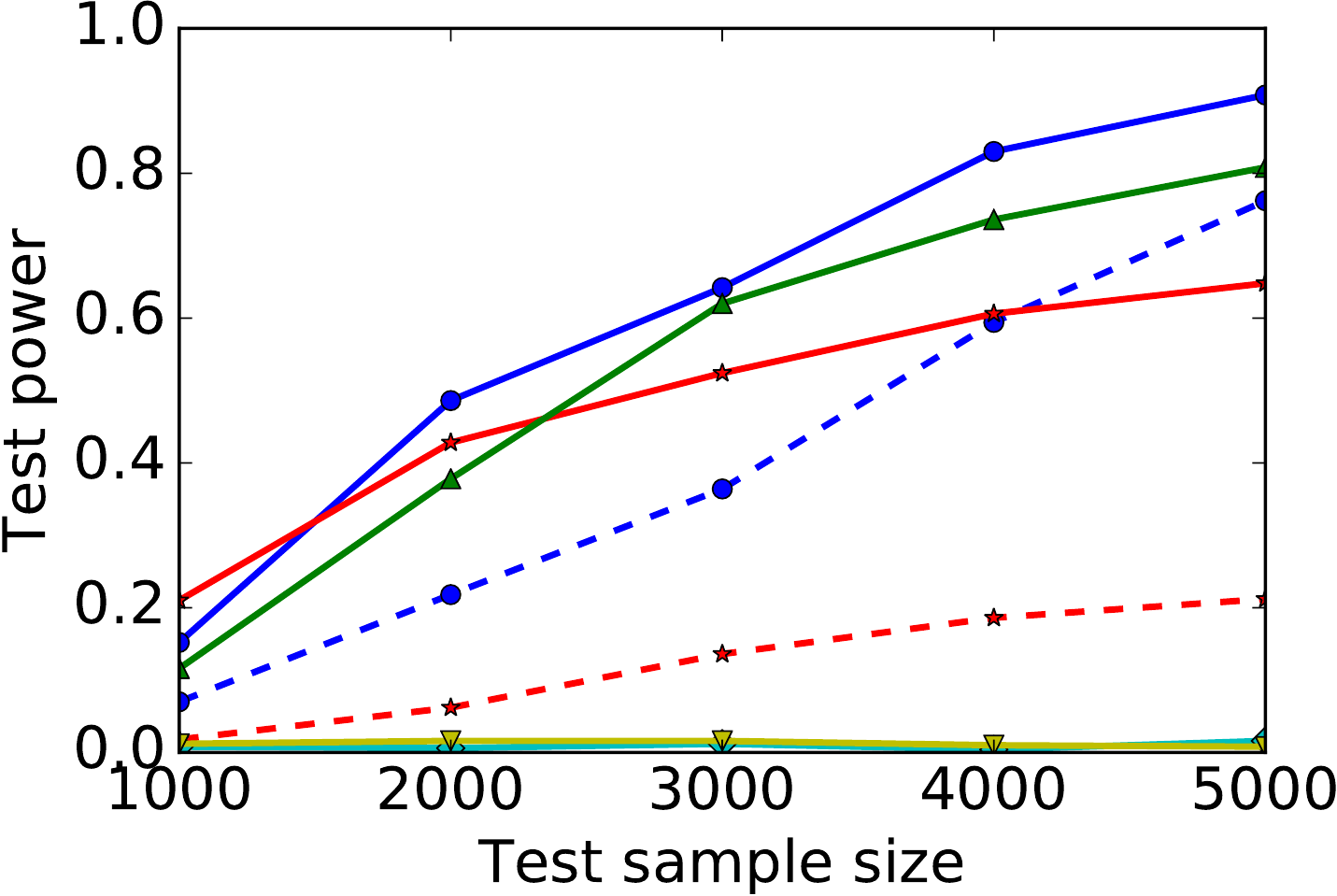} 
} \hspace{-2mm}
\subfloat[Blobs. \label{fig:ex1_blobs}]{ \includegraphics[width=0.31\linewidth]{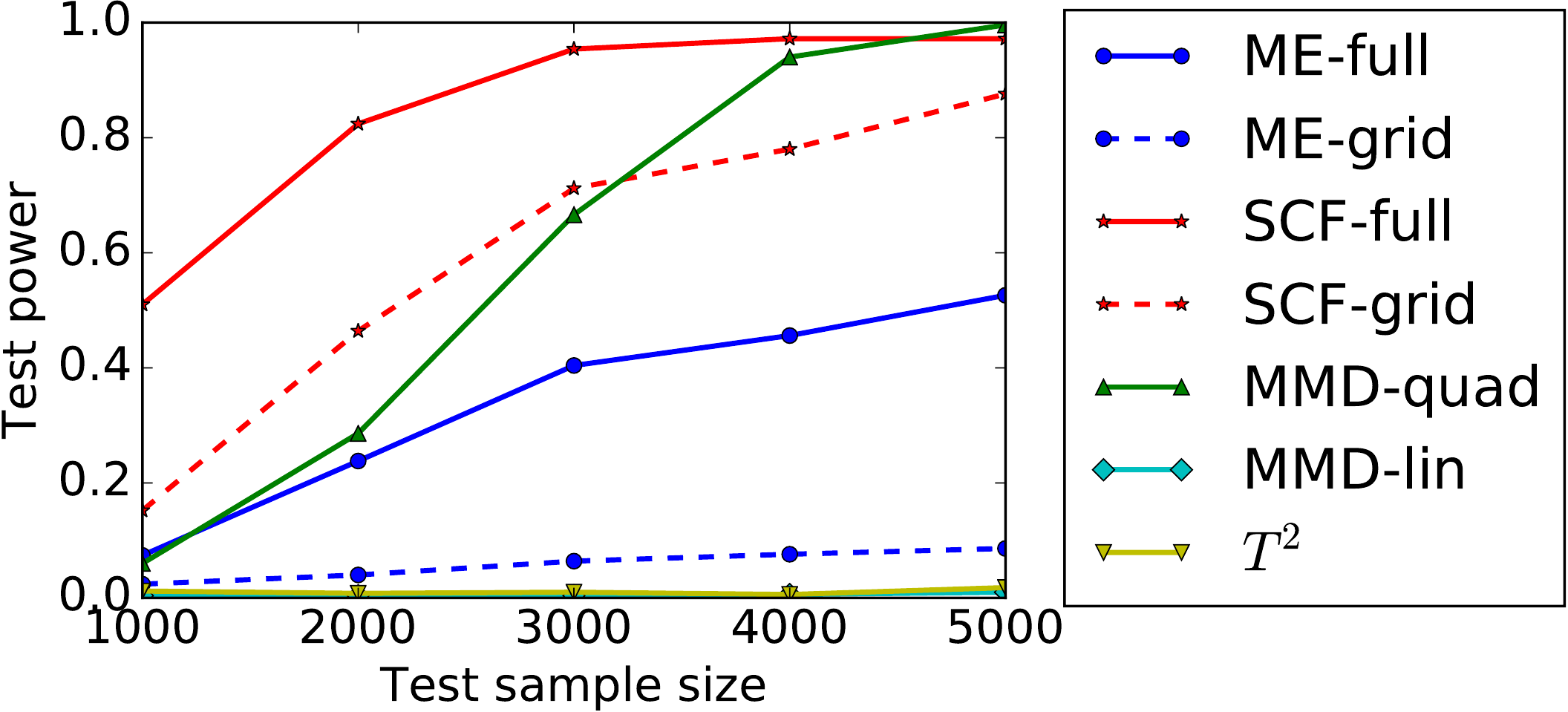} 
}

\caption{Plots of type-I error/test power against the test sample size $n^{te}$
in the four toy problems. \label{fig:pow_vs_n}}
\end{figure}

We now demonstrate the rate of increase of test power with sample
size. When the null hypothesis holds, the type-I error stays at the
specified level $\alpha$. We consider the following four toy problems:
Same Gaussian (SG), Gaussian mean difference (GMD), Gaussian variance
difference (GVD), and Blobs. The specifications of $P$ and $Q$ are
summarized in Table.\,\ref{tab:toy_problems}. In the Blobs problem,
$P$ and $Q$ are defined as a mixture of Gaussian distributions arranged
on a $4\times4$ grid in $\mathbb{R}^{2}$. This problem is challenging
as the difference of $P$ and $Q$ is encoded at a much smaller length
scale compared to the global structure \citep{Gretton2012a}. Specifically,
the eigenvalue ratio for the covariance of each Gaussian distribution
is $2.0$ in $P$, and $1.0$ in $Q$. We set $J=5$ in this experiment. 

The results are shown in Fig.\,\ref{fig:pow_vs_n} where type-I error
(for SG problem), and test power (for GMD, GVD and Blobs problems)
are plotted against test sample size. A number of observations are
worth noting. In the SG problem, we see that the type-I error roughly
stays at the specified level: the rate of rejection of $H_{0}$ when
it is true is roughly at the specified level $\alpha=0.01$.

GMD with 100 dimensions turns out to be an easy problem for all the
tests except MMD-lin. In the GVD and Blobs cases, ME-full and SCF-full
achieve substantially higher test power than ME-grid and SCF-grid,
respectively, suggesting a clear advantage from optimizing the test
locations. Remarkably, ME-full consistently outperforms the quadratic-time
MMD across all test sample sizes in the GVD case. When the difference
of $P$ and $Q$ is subtle as in the Blobs problem, ME-grid, which
uses randomly drawn test locations, can perform poorly (see Fig.\,\ref{fig:ex1_blobs})
since it is unlikely that randomly drawn locations will be placed
in the key regions that reveal the difference. In this case, optimization
of the test locations can considerably boost the test power (see ME-full
in Fig.\,\ref{fig:ex1_blobs}). Note also that SCF variants perform
significantly better than ME variants on the Blobs problem, as the
difference in $P$ and $Q$ is localized in the frequency domain;
ME-full and ME-grid would require many more test locations in the
spatial domain to match the test powers of the SCF variants. For the
same reason, SCF-full does much better than the quadratic-time MMD
across most sample sizes, as the latter represents a weighted distance
between characteristic functions integrated across the entire frequency
domain \citep[Corollary 4]{SriGreFukSchetal10}.

\paragraph{3. Test power vs. dimension $d$\label{subsec:pow_vs_d}}

\vspace{-1mm}
\begin{figure*}
\vspace{-3mm}
\centering
\subfloat[SG  \label{fig:ex2_sg}]{
\includegraphics[width=0.25\textwidth]{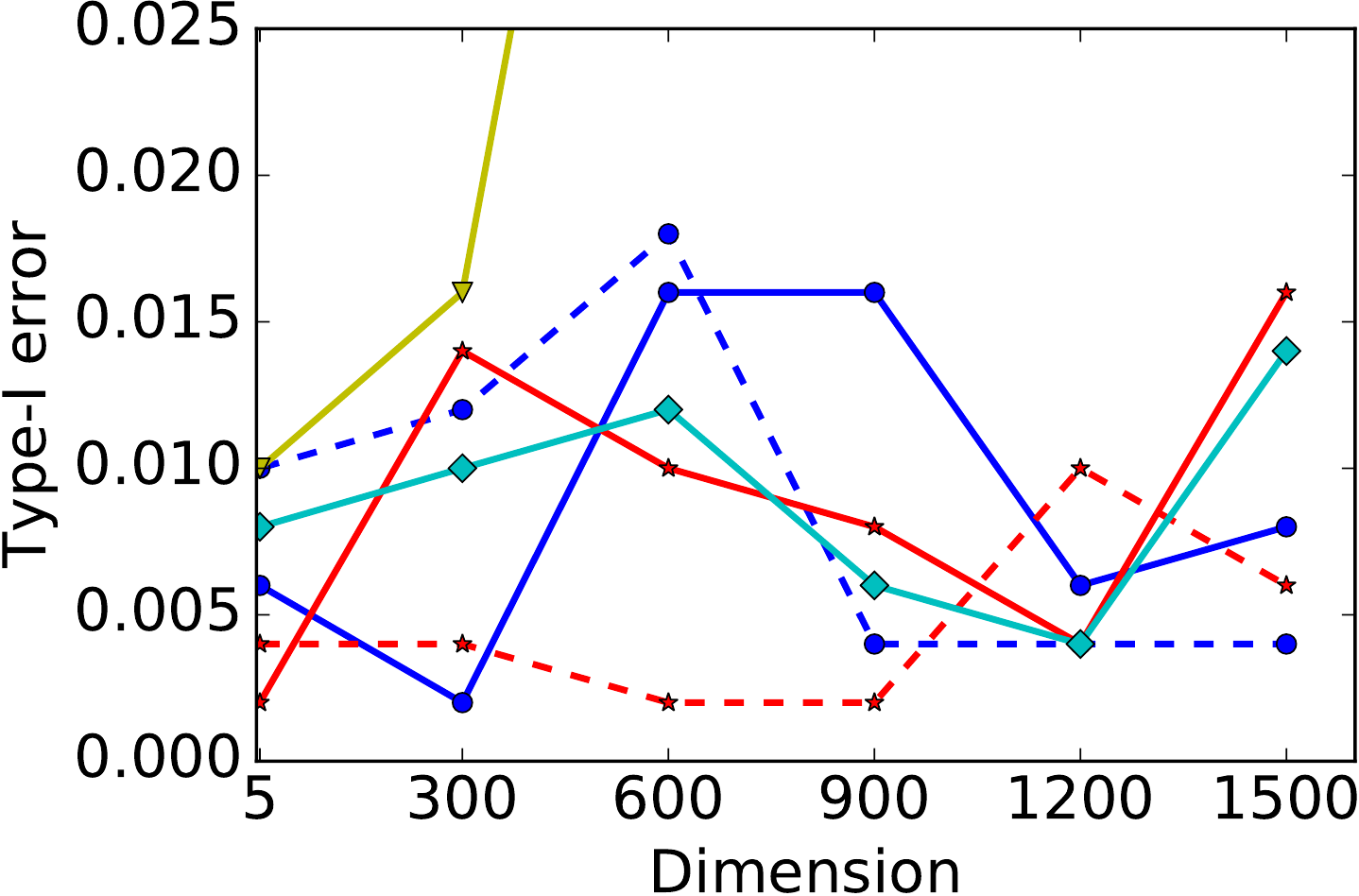}
}
\subfloat[GMD \label{fig:ex2_gmd}]{ \includegraphics[width=0.24\textwidth]{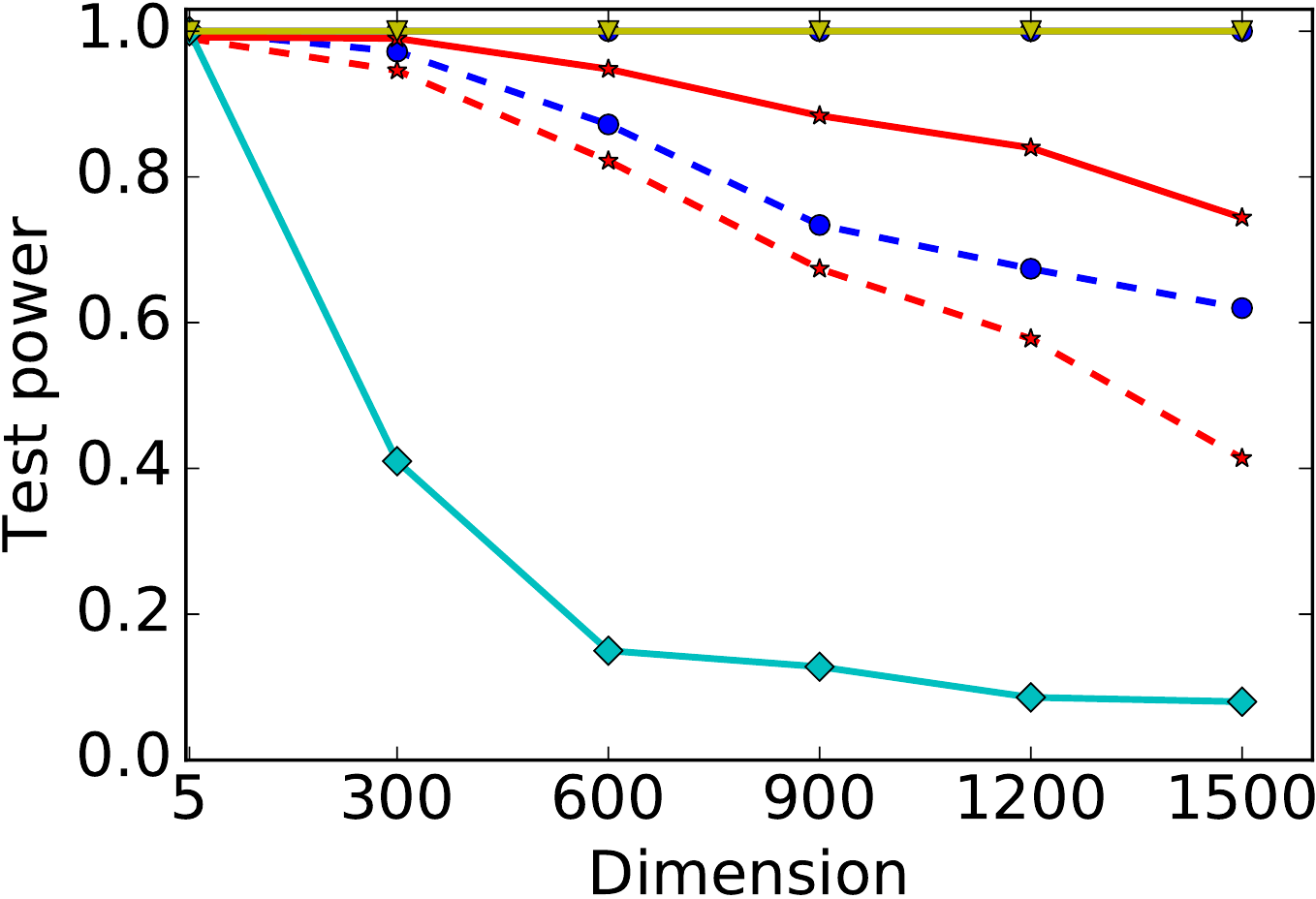} 
}
\subfloat[GVD \label{fig:ex2_gvd}]{ \includegraphics[width=0.36\textwidth]{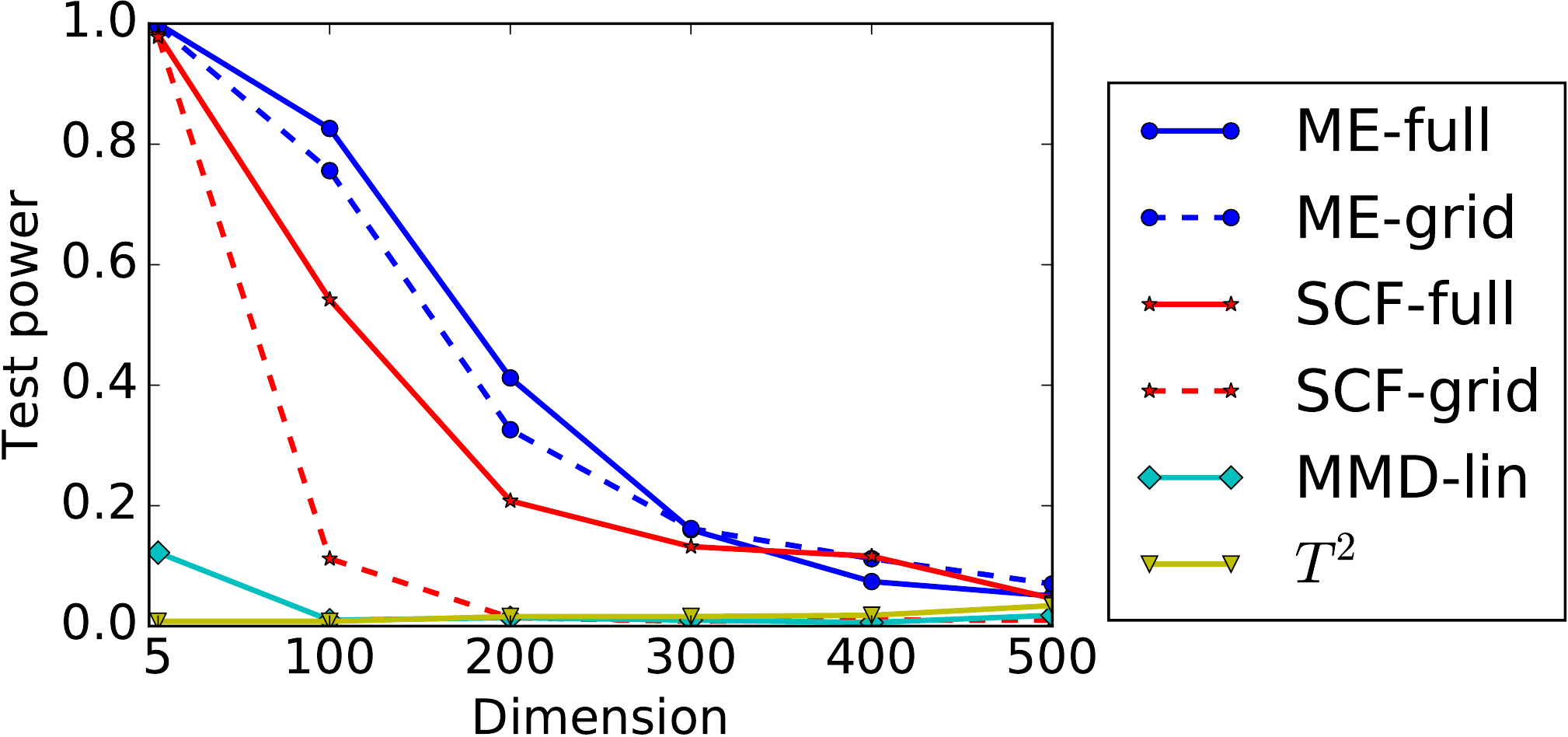} 
} 

\caption{Plots of type-I error/test power against the dimensions $d$ in the
four toy problems in Table\,\ref{tab:toy_problems}.\label{fig:power_vs_d}}
\end{figure*}

We next investigate how the dimension ($d$) of the problem can affect
type-I errors and test powers of ME and SCF tests. We consider the
same artificial problems: SG, GMD and GVD. This time, we fix the test
sample size to 10000, set $J=5$, and vary the dimension. The results
are shown in Fig.\,\ref{fig:power_vs_d}. Due to the large dimensions
and sample size, it is computationally infeasible to run MMD-quad.

We observe that all the tests except the T-test can maintain type-I
error at roughly the specified significance level $\alpha=0.01$ as
dimension increases. The type-I performance of the T-test is incorrect
at large $d$ because of the difficulty in accurately estimating the
covariance matrix in high dimensions. %
It is interesting to note the high performance of ME-full in the GMD
problem in Fig.\,\ref{fig:ex2_gmd}. ME-full achieves the maximum
test power of 1.0 throughout and matches the power T-test, in spite
of being nonparametric and making no assumption on $P$ and $Q$ (the
T-test is further advantaged by its excessive Type-I error). However,
this is true only with optimization of the test locations. This is
reflected in the test power of ME-grid in Fig.\,\ref{fig:ex2_gmd}
which drops monotonically as dimension increases, highlighting the
importance of test location optimization. The performance of MMD-lin
degrades quickly with increasing dimension, as expected from \citet{Ramdas2015}. 

\begin{table}
\caption{Type-I errors and powers of various tests in the problem of distinguishing
NIPS papers from two categories. $\alpha=0.01$. $J=1$. $n_{te}$
denotes the test sample size of each of the two samples.\label{tab:nips_power}}

\centering
\begin{tabular}{lc|cccccc}
\toprule
Problem & $n^{te}$ & ME-full & ME-grid & SCF-full & SCF-grid & MMD-quad & MMD-lin\\
\midrule
Bayes-Bayes & 215 & .012 & .018 & .012 & .004 & .022 & .008  \\ 
Bayes-Deep & 216 & .954 & .034 & .688 & .180 & .906 & .262  \\ 
Bayes-Learn & 138 & .990 & .774 & .836 & .534 & 1.00 & .238  \\ 
Bayes-Neuro & 394 & 1.00 & .300 & .828 & .500 & .952 & .972  \\ 
Learn-Deep & 149 & .956 & .052 & .656 & .138 & .876 & .500  \\ 
Learn-Neuro & 146 & .960 & .572 & .590 & .360 & 1.00 & .538  \\ 
\bottomrule
\end{tabular}
\vspace{-3mm}
\end{table}

\paragraph{4. Distinguishing articles from two categories\label{subsec:nips_experiment}}

\vspace{-1mm}We now turn to performance on real data. We first consider
the problem of distinguishing two categories of publications at the
conference on Neural Information Processing Systems (NIPS). Out of
5903 papers published in NIPS from 1988 to 2015, we manually select
disjoint subsets related to Bayesian inference (Bayes), neuroscience
(Neuro), deep learning (Deep), and statistical learning theory (Learn)
(see Sec.\,\ref{sec:NIPS_appendix}). Each paper is represented as
a bag of words using TF-IDF \citep{Manning2008} as features. We perform
stemming, remove all stop words, and retain only nouns. A further
filtering of document-frequency (DF) of words that satisfies $5\le\mathrm{DF}\leq2000$
yields approximately 5000 words from which 2000 words (i.e., $d=2000$
dimensions) are randomly selected. See Sec.\,\ref{sec:NIPS_appendix}
for more details on the preprocessing. For ME and SCF tests, we use
only one test location i.e., set $J=1$. We perform 1000 permutations
to approximate the null distribution of MMD-quad in this and the following
experiments.

Type-I errors and test powers are summarized in Table.\,\ref{tab:nips_power}.
The first column indicates the categories of the papers in the two
samples. In Bayes-Bayes problem, papers on Bayesian inference are
randomly partitioned into two samples in each trial. This task represents
a case in which $H_{0}$ holds. Among all the linear-time tests, we
observe that ME-full has the highest test power in all the tasks,
attaining a maximum test power of 1.0 in the Bayes-Neuro problem.
This high performance assures that although different test locations
$\mathcal{V}$ may be selected in different trials, these locations
are each informative. It is interesting to observe that ME-full has
performance close to or better than MMD-quad, which requires $O(n^{2})$
runtime complexity. Besides clear advantages of interpretability and
linear runtime of the proposed tests, these results suggest that evaluating
the differences in expectations of analytic functions at particular
locations can yield an equally powerful test at a much lower cost,
as opposed to computing the RKHS norm of the witness function as done
in MMD. Unlike Blobs, however, Fourier features are less powerful
in this setting.

We further investigate the interpretability of the ME test by the
following procedure. For the learned test location $\mathbf{v}^{t}\in\mathbb{R}^{d}$
($d=2000$) in trial $t$, we construct $\tilde{\mathbf{v}}^{t}=\left(\tilde{v}_{1}^{t},\ldots,\tilde{v}_{d}^{t}\right)$
such that $\tilde{v}_{j}^{t}=|v_{j}^{t}|$. Let $\eta_{j}^{t}\in\{0,1\}$
be an indicator variable taking value 1 if $\tilde{v}_{j}^{t}$ is
among the top five largest for all $j\in\{1,\ldots,d\}$, and 0 otherwise.
Define $\eta_{j}:=\sum_{t}\eta_{j}^{t}$ as a proxy indicating the
significance of word $j$ i.e., $\eta_{j}$ is high if word $j$ is
frequently among the top five largest as measured by $\tilde{v}_{j}^{t}$.
The top seven words as sorted in descending order by $\eta_{j}$ in
the Bayes-Neuro problem are \emph{spike, markov, cortex, dropout,
recurr, iii, gibb}, showing that the learned test locations are highly
interpretable. Indeed, ``markov'' and ``gibb'' (i.e., stemmed
from Gibbs) are discriminative terms in Bayesian inference category,
and ``spike'' and ``cortex'' are key terms in neuroscience. We
give full lists of discriminative terms learned in all the problems
in Sec.\,\ref{subsec:nips_exp_terms}. To show that not all the randomly
selected 2000 terms are informative, if the definition of $\eta_{j}^{t}$
is modified to consider the least important words (i.e., $\eta_{j}$
is high if word $j$ is frequently among the top five smallest as
measured by $\tilde{v}_{j}^{t}$), we instead obtain \emph{circumfer,
bra, dominiqu, rhino, mitra, kid, impostor,} which are not discriminative.

\paragraph{5. Distinguishing positive and negative emotions\label{subsec:face_experiment}}

\newcommand{\facewidth}{10mm}
\begin{wrapfigure}{r}{0.34\textwidth}   
\vspace{-8mm}
\begin{center}     
\subfloat[HA \label{fig:kdef_ha}]{ 
\includegraphics[width=\facewidth]{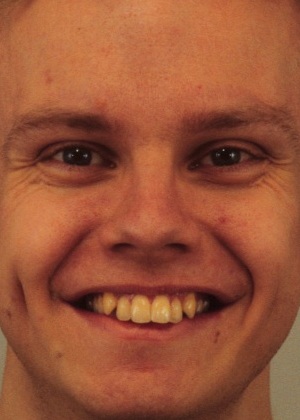} 
}
\subfloat[NE]{ 
\includegraphics[width=\facewidth]{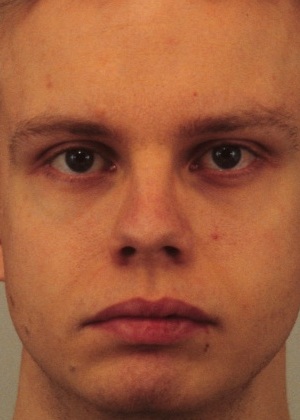} 
}
\subfloat[SU]{ 
\includegraphics[width=\facewidth]{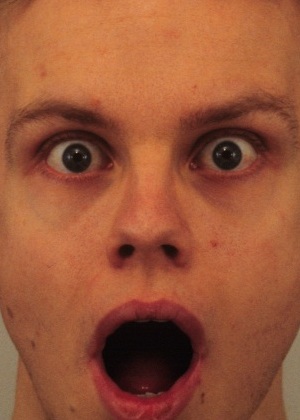} 
}

\subfloat[AF]{ 
\includegraphics[width=\facewidth]{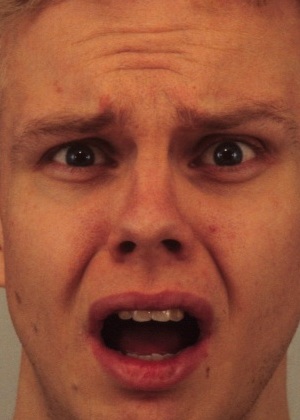} 
}
\subfloat[AN]{ 
\includegraphics[width=\facewidth]{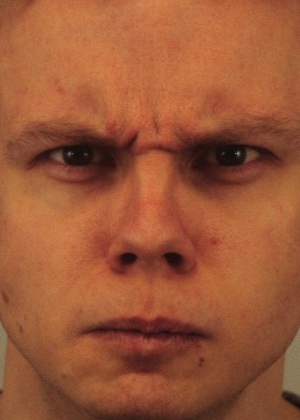} 
}
\subfloat[DI\label{fig:kdef_di}]{ 
\includegraphics[width=\facewidth]{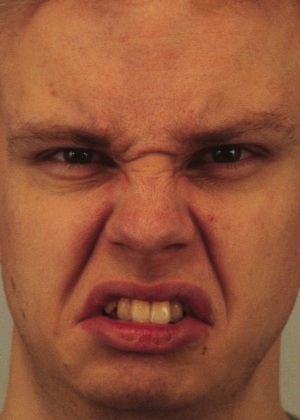} 
}
\subfloat[$\mathbf{v}_1$ \label{fig:v1_face_diff}]{
\includegraphics[width=1cm]{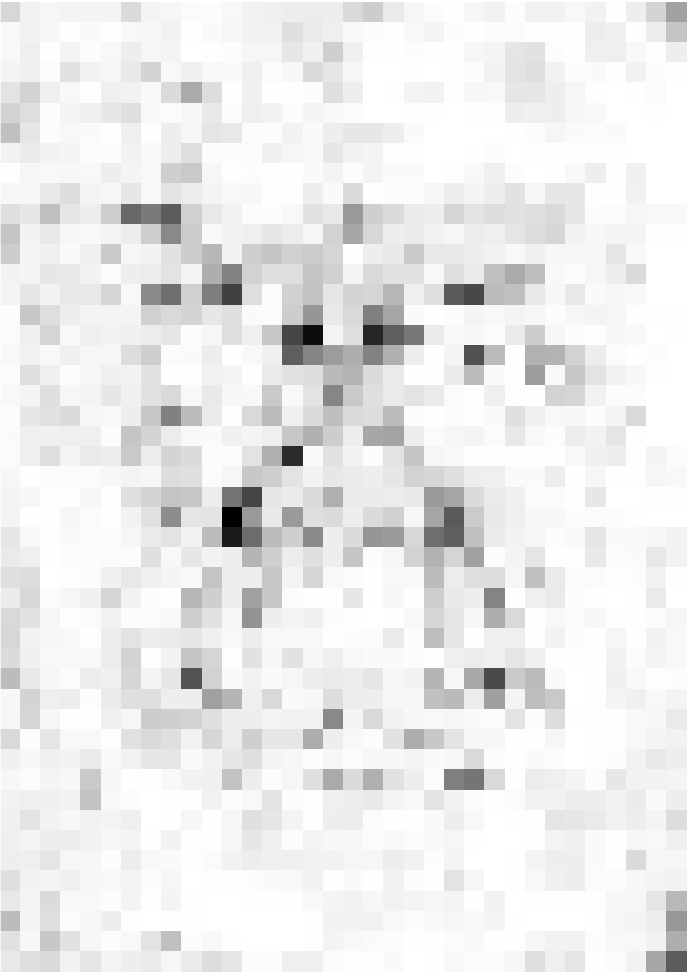}
}
\end{center}   
\caption{(a)-(f): Six facial expressions of actor AM05 in the KDEF data. (g): Average across trials of the learned test locations $\mathbf{v}_1$.} 
\label{fig:face_samples}
\end{wrapfigure} 

In the final experiment, we study how well ME and SCF tests can distinguish
two samples of photos of people showing positive and negative facial
expressions. Our emphasis is on the discriminative features of the
faces identified by ME test showing how the two groups differ. For
this purpose, we use Karolinska Directed Emotional Faces (KDEF) dataset
\citep{Lundqvist1998} containing 5040 aligned face images of 70 amateur
actors, 35 females and 35 males. We use only photos showing front
views of the faces. In the dataset, each actor displays seven expressions:
happy (HA), neutral (NE), surprised (SU), sad (SA), afraid (AF), angry
(AN), and disgusted (DI). We assign HA, NE, and SU faces into the
positive emotion group (i.e., samples from $P$), and AF, AN and DI
faces into the negative emotion group (samples from $Q$). We denote
this problem as ``$+$ vs.\ $-$''. Examples of six facial expressions
from one actor are shown in Fig.\,\ref{fig:face_samples}. Photos
of the SA group are unused to keep the sizes of the two samples the
same. Each image of size $562\times762$ pixels is cropped to exclude
the background, resized to $48\times34=1632$ pixels ($d$), and converted
to grayscale. 

We run the tests 500 times with the same setting used previously i.e.,
Gaussian kernels, and $J=1$. The type-I errors and test powers are
shown in Table\,\ref{tab:face_power}. In the table, ``$\pm$ vs.\
$\pm$'' is a problem in which all faces expressing the six emotions
are randomly split into two samples of equal sizes i.e., $H_{0}$
is true. Both ME-full and SCF-full achieve high test powers while
maintaining the correct type-I errors.

\begin{table}
\caption{Type-I errors and powers in the problem of distinguishing positive
(+) and negative (-) facial expressions. $\alpha=0.01$. $J=1$. \label{tab:face_power}}

\centering
\begin{tabular}{lc|cccccc}
\toprule
Problem & $n^{te}$ & ME-full & ME-grid & SCF-full & SCF-grid & MMD-quad & MMD-lin\\
\midrule
$\pm$ vs. $\pm$ & 201 & .010 & .012 & .014 & .002 & .018 & .008 \\ 
$+$ vs. $-$ & 201 & .998 & .656 & 1.00 & .750 & 1.00 & .578 \\ 
\bottomrule
\end{tabular}
\vspace{-3mm}
\end{table}

As a way to interpret how positive and negative emotions differ, we
take an average across trials of the learned test locations of ME-full
in the ``$+$ vs.\ $-$'' problem. This average is shown in Fig.\,\ref{fig:v1_face_diff}.
We see that the test locations faithfully capture the difference of
positive and negative emotions by giving more weights to the regions
of nose, upper lip, and nasolabial folds (smile lines), confirming
the interpretability of the test in a high-dimensional setting.

\subsubsection*{Acknowledgement}

We thank the Gatsby Charitable Foundation for the financial support. 

{\small \bibliographystyle{myplainnat}
\bibliography{fotest,fotest_Z}

\begin{thebibliography}{26}
\providecommand{\natexlab}[1]{#1}
\providecommand{\url}[1]{\texttt{#1}}
\expandafter\ifx\csname urlstyle\endcsname\relax
  \providecommand{\doi}[1]{doi: #1}\else
  \providecommand{\doi}{doi: \begingroup \urlstyle{rm}\Url}\fi

\bibitem[Baringhaus and Franz(2004)]{Baringhaus2004}
L. Baringhaus and C. Franz.
\newblock On a new multivariate two-sample test.
\newblock \emph{Journal of Multivariate Analysis}, 88:\penalty0 190--206, 2004.

\bibitem[Bilodeau and Brenner(2008)]{Bilodeau2008}
M. Bilodeau and D. Brenner.
\newblock \emph{Theory of multivariate statistics}.
\newblock Springer Science \& Business Media, 2008.

\bibitem[Bird et~al.(2009)Bird, Klein, and Loper]{Bird2009}
S. Bird, E. Klein, and E. Loper.
\newblock \emph{Natural {Language} {Processing} with {Python}}.
\newblock O'Reilly Media, 1st edition, 2009.

\bibitem[Bousquet(2003)]{bousquet03new}
O. Bousquet.
\newblock New approaches to statistical learning theory.
\newblock \emph{Annals of the Institute of Statistical Mathematics},
  55:\penalty0 371--389, 2003.

\bibitem[Chwialkowski et~al.(2015)Chwialkowski, Ramdas, Sejdinovic, and
  Gretton]{Chwialkowski2015}
K. Chwialkowski, A. Ramdas, D. Sejdinovic, and A. Gretton.
\newblock Fast two-sample testing with analytic representations of probability
  measures.
\newblock In \emph{NIPS}, pages 1972--1980, 2015.

\bibitem[Cinzia~Carota and Polson(1996)]{CarParPol96}
G.~P. Cinzia~Carota and N.~G. Polson.
\newblock Diagnostic measures for model criticism.
\newblock \emph{Journal of the American Statistical Association}, 91\penalty0
  (434):\penalty0 753--762, 1996.

\bibitem[Eric et~al.(2008)Eric, Bach, and Harchaoui]{Eric2008}
M. Eric, F.~R. Bach, and Z. Harchaoui.
\newblock Testing for homogeneity with kernel {Fisher} discriminant analysis.
\newblock In \emph{NIPS}, pages 609--616. 2008.

\bibitem[Fromont et~al.(2012)Fromont, Laurent, Lerasle, and
  Reynaud-Bouret]{FroLauLerRey12}
M. Fromont, B. Laurent, M. Lerasle, and P. Reynaud-Bouret.
\newblock Kernels based tests with non-asymptotic bootstrap approaches for
  two-sample problems.
\newblock In \emph{COLT}, pages 23.1--23.22, 2012.

\bibitem[Gretton et~al.(2012{\natexlab{a}})Gretton, Borgwardt, Rasch,
  Sch{\"o}lkopf, and Smola]{Gretton2012}
A. Gretton, K.~M. Borgwardt, M.~J. Rasch, B. Sch{\"o}lkopf, and A. Smola.
\newblock A kernel two-sample test.
\newblock \emph{Journal of Machine Learning Research}, 13:\penalty0 723--773,
  2012{\natexlab{a}}.

\bibitem[Gretton et~al.(2012{\natexlab{b}})Gretton, Sejdinovic, Strathmann,
  Balakrishnan, Pontil, Fukumizu, and Sriperumbudur]{Gretton2012a}
A. Gretton, D. Sejdinovic, H. Strathmann, S. Balakrishnan, M. Pontil, K.
  Fukumizu, and B.~K. Sriperumbudur.
\newblock Optimal kernel choice for large-scale two-sample tests.
\newblock In \emph{NIPS}, pages 1205--1213, 2012{\natexlab{b}}.

\bibitem[Kosorok(2008)]{Kosorok2008}
M.~R. Kosorok.
\newblock \emph{Introduction to Empirical Processes and Semiparametric
  Inference}.
\newblock Springer, 2008.

\bibitem[Lloyd and Ghahramani(2015)]{LloGha15}
J.~R. Lloyd and Z. Ghahramani.
\newblock Statistical model criticism using kernel two sample tests.
\newblock In \emph{NIPS}, pages 829--837, 2015.

\bibitem[Lloyd et~al.(2014)Lloyd, Duvenaud, Grosse, Tenenbaum, and
  Ghahramani]{Lloyd2014-ABCD}
J.~R. Lloyd, D. Duvenaud, R. Grosse, J.~B. Tenenbaum, and Z. Ghahramani.
\newblock Automatic construction and {Natural-Language} description of
  nonparametric regression models.
\newblock In \emph{AAAI}, pages 1242--1250, 2014.

\bibitem[Lundqvist et~al.(1998)Lundqvist, Flykt, and {\"O}hman]{Lundqvist1998}
D. Lundqvist, A. Flykt, and A. {\"O}hman.
\newblock The {Karolinska} directed emotional faces-{KDEF}.
\newblock Technical report, ISBN 91-630-7164-9, 1998.

\bibitem[Manning et~al.(2008)Manning, Raghavan, and Sch{\"u}tze]{Manning2008}
C.~D. Manning, P. Raghavan, and H. Sch{\"u}tze.
\newblock \emph{Introduction to information retrieval}.
\newblock Cambridge University Press, 2008.

\bibitem[Mueller and Jaakkola(2015)]{MueJaa15}
J. Mueller and T. Jaakkola.
\newblock Principal differences analysis: Interpretable characterization of
  differences between distributions.
\newblock In \emph{NIPS}, pages 1693--1701, 2015.

\bibitem[Ramdas et~al.(2015)Ramdas, Jakkam~Reddi, P{\'o}czos, Singh, and
  Wasserman]{Ramdas2015}
A. Ramdas, S. Jakkam~Reddi, B. P{\'o}czos, A. Singh, and L. Wasserman.
\newblock On the decreasing power of kernel and distance based nonparametric
  hypothesis tests in high dimensions.
\newblock In \emph{AAAI}, pages 3571--3577, 2015.

\bibitem[Sejdinovic et~al.(2013)Sejdinovic, Sriperumbudur, Gretton, and
  Fukumizu]{SejSriGreFuk13}
D. Sejdinovic, B. Sriperumbudur, A. Gretton, and K. Fukumizu.
\newblock Equivalence of distance-based and {RKHS}-based statistics in
  hypothesis testing.
\newblock \emph{Annals of Statistics}, 41\penalty0 (5):\penalty0 2263--2291,
  2013.

\bibitem[Smola et~al.(2007)Smola, Gretton, Song, and Sch\"{o}lkopf]{Smola2007}
A. Smola, A. Gretton, L. Song, and B. Sch\"{o}lkopf.
\newblock A {Hilbert} space embedding for distributions.
\newblock In \emph{ALT}, pages 13--31, 2007.

\bibitem[Srebro and Ben-David(2006)]{srebro06learning}
N. Srebro and S. Ben-David.
\newblock Learning bounds for support vector machines with learned kernels.
\newblock In \emph{COLT}, pages 169--183, 2006.

\bibitem[Sriperumbudur et~al.(2010)Sriperumbudur, Gretton, Fukumizu,
  Schoelkopf, and Lanckriet]{SriGreFukSchetal10}
B. Sriperumbudur, A. Gretton, K. Fukumizu, B. Schoelkopf, and G. Lanckriet.
\newblock Hilbert space embeddings and metrics on probability measures.
\newblock \emph{Journal of Machine Learning Research}, 11:\penalty0 1517--1561,
  2010.

\bibitem[Sriperumbudur et~al.(2011)Sriperumbudur, Fukumizu, and
  Lanckriet]{Sriperumbudur2011}
B.~K. Sriperumbudur, K. Fukumizu, and G.~R. Lanckriet.
\newblock Universality, characteristic kernels and {RKHS} embedding of
  measures.
\newblock \emph{The Journal of Machine Learning Research}, 12:\penalty0
  2389--2410, 2011.

\bibitem[Steinwart and Christmann(2008)]{steinwart08support}
I. Steinwart and A. Christmann.
\newblock \emph{Support Vector Machines}.
\newblock Springer, 2008.

\bibitem[Sz\'{e}kely and Rizzo(2004)]{Szekely2004}
G. Sz\'{e}kely and M. Rizzo.
\newblock Testing for equal distributions in high dimension.
\newblock \emph{InterStat}, \penalty0 (5), 2004.

\bibitem[van~der Vaart and Wellner(2000)]{Vaart2000}
A. van~der Vaart and J. Wellner.
\newblock \emph{{Weak Convergence and Empirical Processes: With Applications to
  Statistics (Springer Series in Statistics)}}.
\newblock Springer, 2000.

\bibitem[Zaremba et~al.(2013)Zaremba, Gretton, and Blaschko]{Zaremba2013}
W. Zaremba, A. Gretton, and M. Blaschko.
\newblock B-test: A non-parametric, low variance kernel two-sample test.
\newblock In \emph{NIPS}, pages 755--763, 2013.

\end{thebibliography}
}

\newpage
\appendix
\newgeometry{left=3cm, right=3cm, top=2.5cm, bottom=2.5cm}

\begin{center}
{\Large{}Interpretable Distribution Features with Maximum Testing
Power}
\par\end{center}{\Large \par}

\begin{center}
\textcolor{black}{\Large{}Supplementary Material}
\par\end{center}{\Large \par}

\section{Algorithm\label{sec:Algorithm}}

The full algorithm for the proposed tests from parameter tuning to
the actual two-sample testing is given in Algorithm\,\ref{algo:full}.

\begin{algorithm}[h]
\caption{Optimizing parameters and testing}
\label{algo:full}
\begin{algorithmic}[1]
\REQUIRE Two samples $\mathsf{X}$, $\mathsf{Y}$, significance level $\alpha$, and 
 number of test locations $J$
\STATE Split $\mathsf{D} := (\mathsf{X}, \mathsf{Y})$ into disjoint training and test sets, $\mathsf{D}^{tr}$ and $\mathsf{D}^{te}$, of the same size $n^{te}$.
\STATE Optimize parameters $\theta = \arg\max_\theta \hat{\lambda}_{n/2}^{tr}(\theta)$ where $\hat{\lambda}_{n/2}^{tr}(\theta)$ is computed with the training set $\mathsf{D}^{tr}$.
\STATE Set $T_{\alpha}$ to the $(1-\alpha)$-quantile of $\chi^2(J')$.
\STATE Compute the test statistic $\hat{\lambda}_{n/2}^{te}(\theta)$ using $\mathsf{D}^{te}$. 
\STATE Reject $H_0$ if $\hat{\lambda}_{n/2}^{te}(\theta) > T_{\alpha}$.
\end{algorithmic}
\end{algorithm}

\section{Experiments on NIPS text collection\label{sec:NIPS_appendix}}

The full procedure for processing the NIPS text collection is summarized
as following.
\begin{enumerate}
\item Download all 5903 papers from 1988 to 2015 from \url{https://papers.nips.cc/}
as PDF files.
\item Convert each PDF file to text with \texttt{pdftotext}\footnote{\texttt{pdftotext} is available at \url{http://poppler.freedesktop.org}.}.
\item Remove all stop words. We use the list of stop words from \url{http://www.ranks.nl/stopwords}.
\item Keep only nouns. We use the list of nouns as available in WordNet-3.0\footnote{WordNet is available online at \url{https://wordnet.princeton.edu/wordnet/citing-wordnet/}.}. 
\item Keep only words which contain only English alphabets i.e., does not
contain punctuations or numbers. Also, word length must be between
3 and 20 characters (inclusive).
\item Keep only words which occur in at least 5 documents, and in no more
than 2000 documents.
\item Convert all characters to small case. Stem all words with SnowballStemmer
in NLTK \citep{Bird2009}. For example, ``recognize'' and ``recognizer''
become ``recogn'' after stemming.
\item Categorize papers into disjoint collections. A paper is treated as
belonging to a group if its title has at least one word from the list
of keywords for the category. Papers that match the criteria of both
categories are not considered. The lists of keywords are as follows.

\begin{enumerate}
\item \textbf{Bayesian inference} (Bayes): graphical model, bayesian, inference,
mcmc, monte carlo, posterior, prior, variational, markov, latent,
probabilistic, exponential family.
\item \textbf{Deep learning} (Deep): deep, drop out, auto-encod, convolutional,
neural net, belief net, boltzmann.
\item \textbf{Learning theory} (Learn): learning theory, consistency, theoretical
guarantee, complexity, pac-bayes, pac-learning, generalization, uniform
converg, bound, deviation, inequality, risk min, minimax, structural
risk, VC, rademacher, asymptotic. 
\item \textbf{Neuroscience} (Neuro): motor control, neural, neuron, spiking,
spike, cortex, plasticity, neural decod, neural encod, brain imag,
biolog, perception, cognitive, emotion, synap, neural population,
cortical, firing rate, firing-rate, sensor.
\end{enumerate}
\item Randomly select 2000 words from the remaining words.
\item Treat each paper as a bag of words and construct a feature vector
with TF-IDF \citep{Manning2008}. 
\end{enumerate}

\subsection{Discriminative terms identified by ME test\label{subsec:nips_exp_terms}}

In this section, we provide full lists of discriminative terms following
the procedure described in Sec.\,\ref{subsec:nips_experiment}. The
top ten words in each problem are as follows.
\begin{itemize}
\item \textbf{Bayes-Bayes}: collabor, traffic, bay, permut, net, central,
occlus, mask, draw, joint.
\item \textbf{Bayes-Deep}: infer, bay, mont, adaptor, motif, haplotyp, ecg,
covari, boltzmann, classifi.
\item \textbf{Bayes-Learn}: infer, markov, graphic, segment, bandit, boundari,
favor, carlo, prioriti, prop.
\item \textbf{Bayes-Neuro}: spike, markov, cortex, dropout, recurr, iii,
gibb, basin, circuit, subsystem.
\item \textbf{Learn-Deep}: deep, forward, delay, subgroup, bandit, recept,
invari, overlap, inequ, pia.
\item \textbf{Learn-Neuro}: polici, interconnect, hardwar, decay, histolog,
edg, period, basin, inject, human.
\end{itemize}

\newpage

\section{Runtimes\label{sec:runtimes}}

In this section, we provide runtimes of all the experiments. The runtimes
of the ``Test power vs. sample $n$'' experiment are shown in Fig.
\ref{fig:pow_vs_n_time}. The runtimes of the ``Test power vs. dimension
$d$'' experiment are shown in Fig. \ref{fig:power_vs_d_time}. Table
\ref{tab:nips_time}, \ref{tab:face_time} give the runtimes of the
two real-data experiments.

\begin{figure}[th]
\hspace{-9mm}
\subfloat[SG. $d=50$. \label{fig:ex1_sg_time}]{
\includegraphics[width=0.23\linewidth]{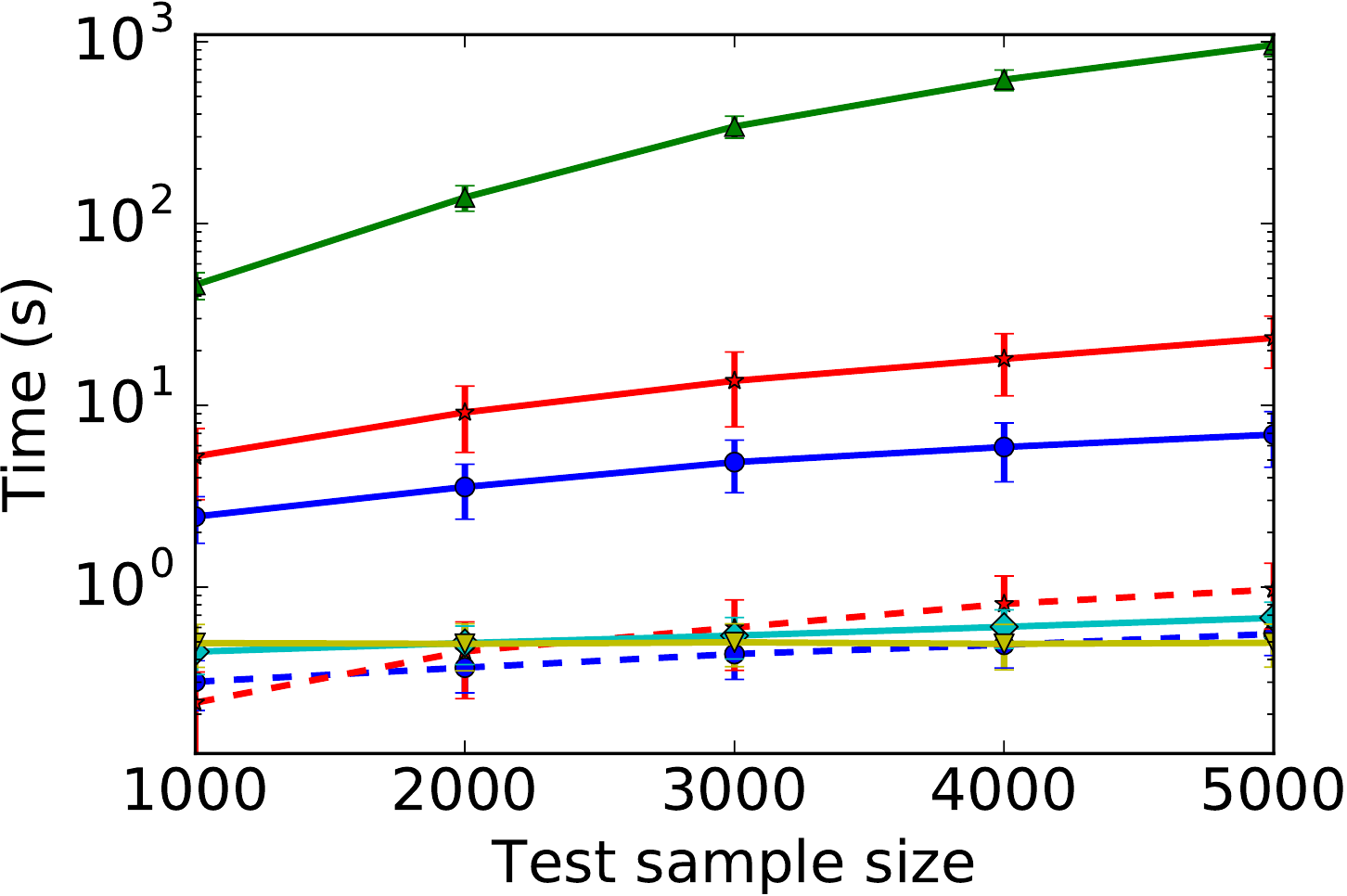}
}\hspace{-2mm}
\subfloat[GMD. $d=100$. \label{fig:ex1_gmd_time}]{ 
\includegraphics[width=0.23\linewidth]{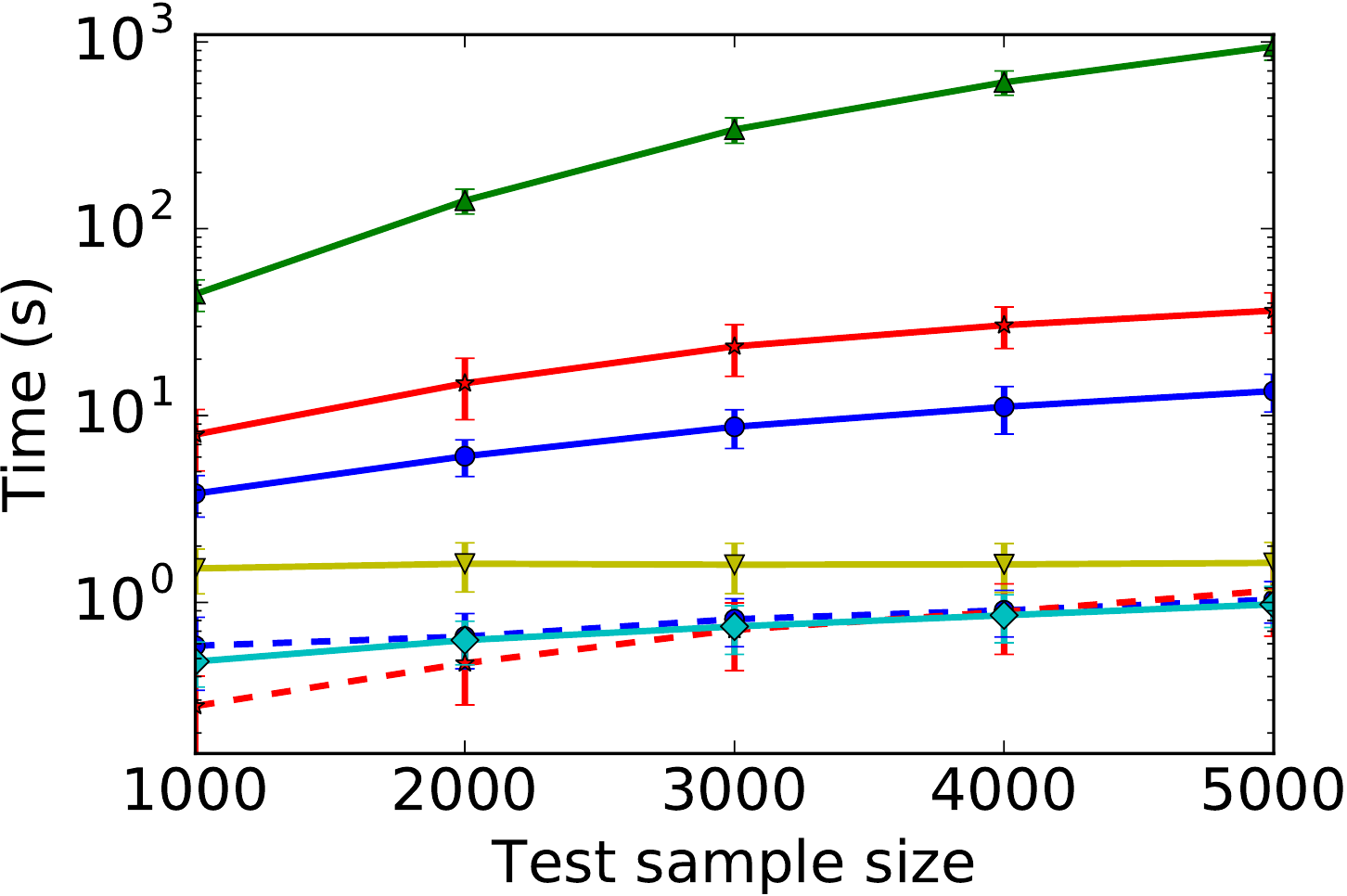} 
}\hspace{-2mm}
\subfloat[GVD. $d=50$. \label{fig:ex1_gvd_time}]{ 
\includegraphics[width=0.23\linewidth]{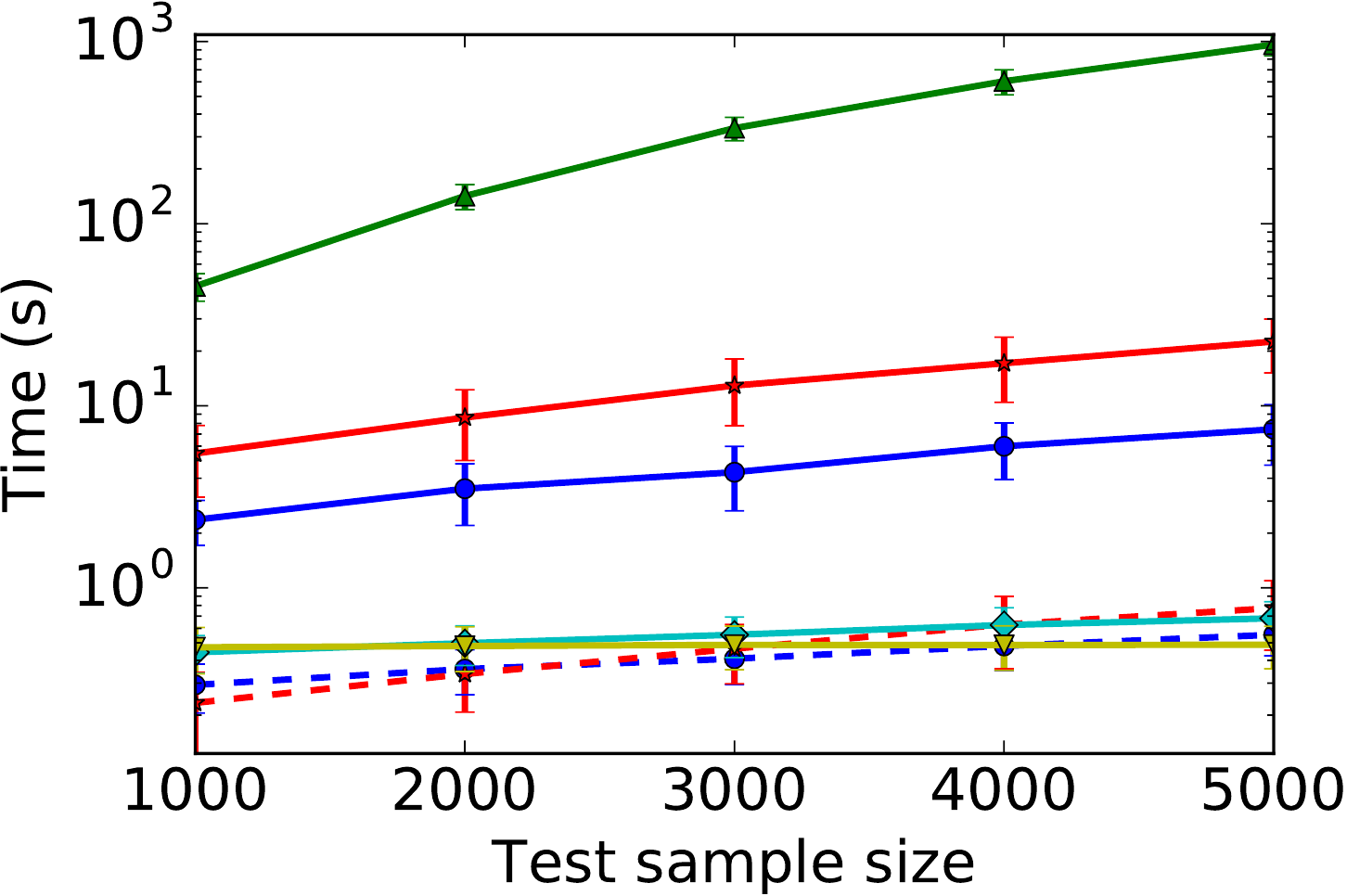} 
} \hspace{-2mm}
\subfloat[Blobs. \label{fig:ex1_blobs_time}]{ 
\includegraphics[width=0.34\linewidth]{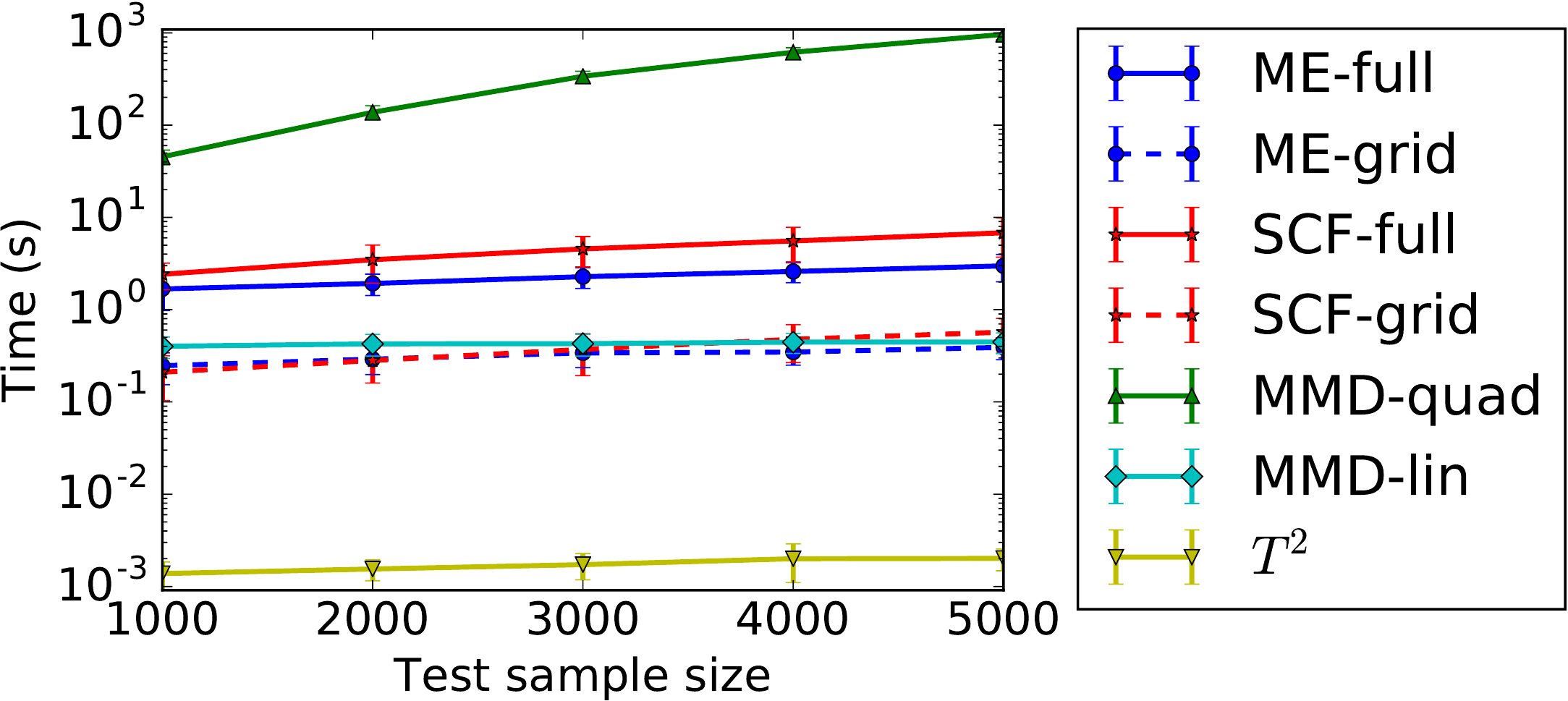} 
}

\caption{Plots of runtimes in the ``Test power vs. sample $n$'' experiment.
\label{fig:pow_vs_n_time}\vspace{-5mm}}
\end{figure}

\begin{figure}[th]
\centering
\subfloat[SG  \label{fig:ex2_sg_time}]{
\includegraphics[width=0.25\textwidth]{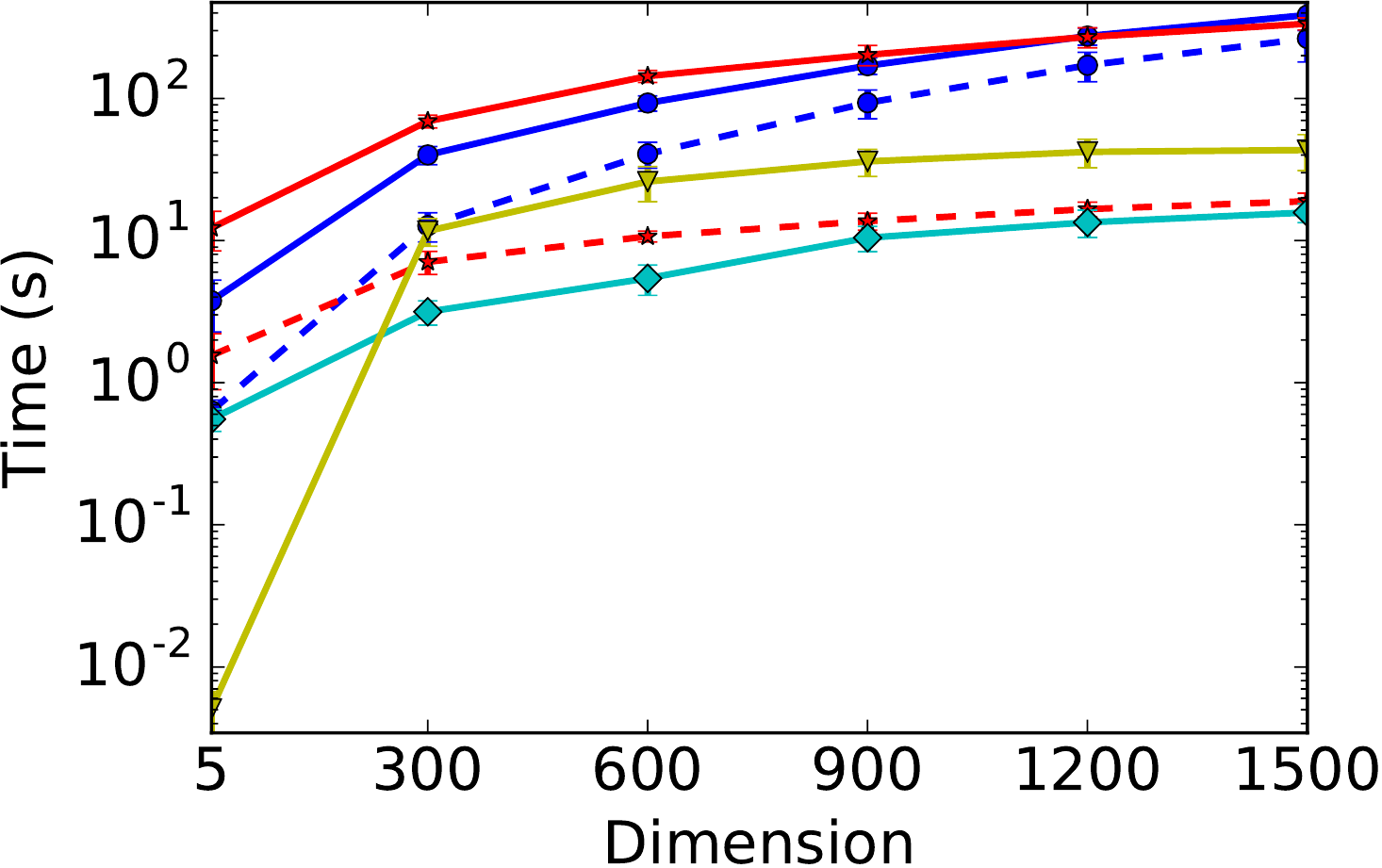}
}
\subfloat[GMD \label{fig:ex2_gmd_time}]{ \includegraphics[width=0.24\textwidth]{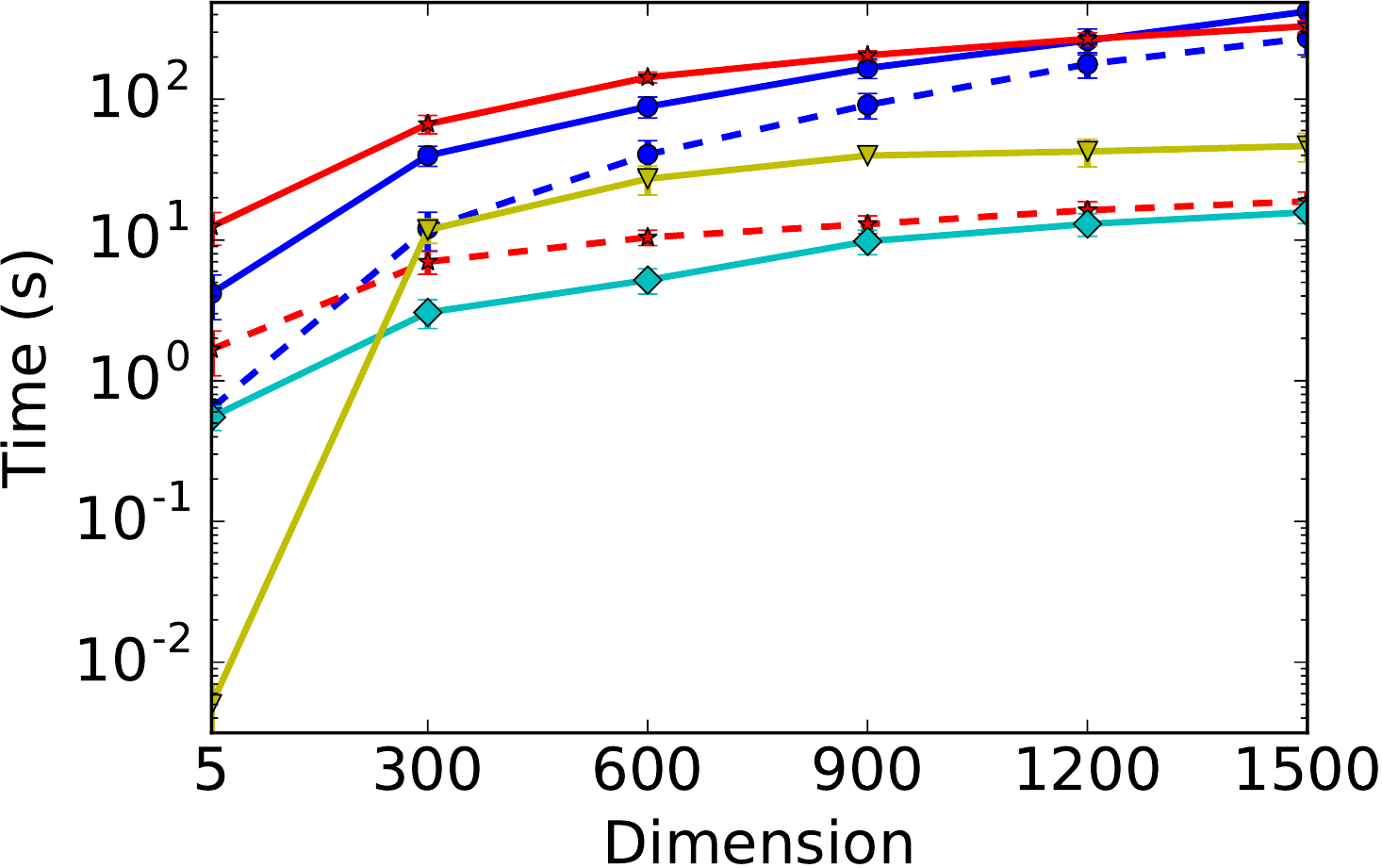} 
}
\subfloat[GVD \label{fig:ex2_gvd_time}]{ \includegraphics[width=0.36\textwidth]{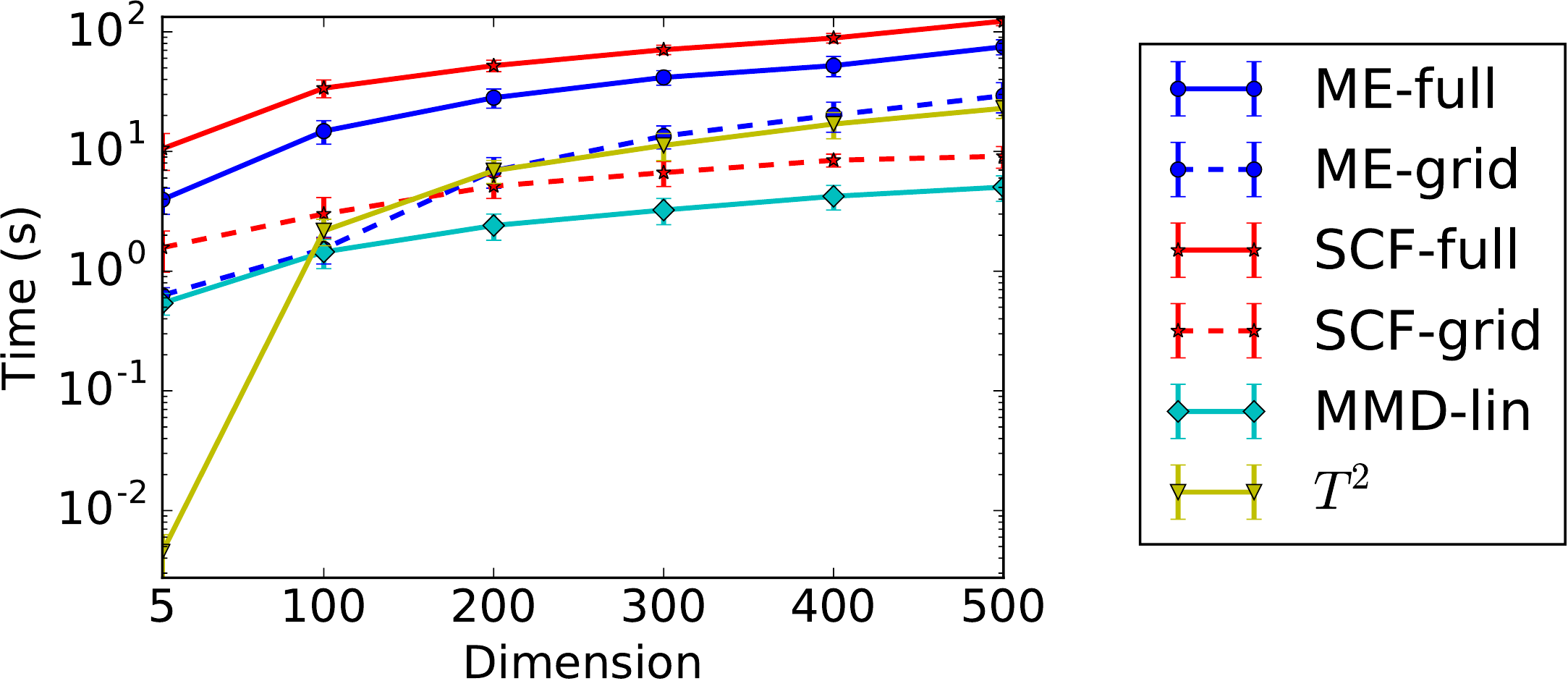} 
} 

\caption{Plots of runtimes in the ``Test power vs. dimension $d$'' experiment.
The test sample size is $10000$.\label{fig:power_vs_d_time} \vspace{-5mm}}
\end{figure}
\begin{table}[th]
\caption{Runtimes (in seconds) in the problem of distinguishing NIPS papers
from two categories.\label{tab:nips_time}}

\centering
\begin{tabular}{lc|llllll}
\toprule
Problem & $n^{te}$ & ME-full & ME-grid & SCF-full & SCF-grid & MMD-quad & MMD-lin\\
\midrule
Bayes-Bayes & 215 & 126.7 & 116 & 34.67 & 1.855 & 13.66 & .6112  \\ 
Bayes-Deep & 216 & 118.3 & 111.7 & 36.41 & 1.933 & 13.59 & .5105  \\ 
Bayes-Learn & 138 & 94.59 & 89.16 & 23.69 & 1.036 & 2.152 & .36  \\ 
Bayes-Neuro & 394 & 142.5 & 130.3 & 69.19 & 3.533 & 32.71 & .8643  \\ 
Learn-Deep & 149 & 105 & 99.59 & 24.99 & 1.253 & 2.417 & .4744  \\ 
Learn-Neuro & 146 & 101.2 & 93.53 & 25.29 & 1.178 & 2.351 & .3658  \\ 
\bottomrule
\end{tabular}
\end{table}
\begin{table}[H]
\caption{Runtimes (in seconds) in the problem of distinguishing positive (+)
and negative (-) facial expressions. \label{tab:face_time}}

\centering
\begin{tabular}{lc|cccccc}
\toprule
Problem & $n^{te}$ & ME-full & ME-grid & SCF-full & SCF-grid & MMD-quad & MMD-lin\\
\midrule
$\pm$ vs. $\pm$ & 201 & 87.7 & 83.4 & 10.5 &  1.45 &  9.93 & 0.464 \\ 
$+$ vs. $-$ & 201 & 85.0 & 80.6 & 11.7 & 1.42 & 10.4 & 0.482 \\ 
\bottomrule
\end{tabular}
\end{table}
In the cases where $n$ is large (Fig. \ref{fig:pow_vs_n_time}),
MMD-quad has the largest runtime due to its quadratic dependency on
the sample size. In the extreme case where the test sample size is
$10000$ (Fig. \ref{fig:power_vs_d_time}), it is computationally
infeasible to run MMD-quad. We observe that the proposed ME-full and
SCF-full have a slight overhead from the parameter optimization. However,
since the optimization procedure is also linear in $n$, we are able
to conduct an accurate test in less than 10 minutes even when the
test sample size is 10000 and $d=1500$ (see Fig. \ref{fig:ex2_sg_time},
\ref{fig:ex2_gmd_time}). We note that the actual tests (after optimization)
for all ME and SCF variants take less than one second in all cases.
In the ME-full, we initialize the test locations with realizations
from two multivariate normal distributions fitted to samples from
$P$ and $Q$. When $d$ is large, this heuristic can be expensive.
An alternative initialization scheme for $\mathcal{V}$ is to randomly
select $J$ points from the two samples.

\newpage

\section{Proof of theorem \ref{thm:lambda_conv_me}\label{sec:proof_lambda_conv_me}}

Recall Theorem\,\ref{thm:lambda_conv_me}: {\consistencyme*}

A proof is given as follows.

\subsection{Notations}

Let $\left<\b A,\b B\right>_F:=\mathrm{tr}\left(\b A\T \b B\right)$ 
be the Frobenius inner product, and $\left\|\b A\right\|_F:=\sqrt{\left<\b A,\b A\right>_F}$. $\b A\succeq \b 0$ means that $\b A\in \R^{d\times d}$ is symmetric, positive semidefinite.
For $\b a\in\R^d$, $\|\b a\|_2=\left<\b a, \b a\right>_2=\b a\T \b a$. $[\b a_1;\ldots;\b a_N]\in \R^{d_1+\ldots +d_N}$ is the concatenation of the $\b a_n\in \R^{d_n}$ vectors. $\R^+$ is the set of positive reals. $f\circ g$ is the composition of function $f$ and $g$. Let $\M$
denote a general metric space below. In measurability requirements
metric spaces are meant to be endowed with their Borel $\sigma$-algebras.

Let $\C$ be a collection of subsets
of $\M$ ($C\subseteq 2^{\M}$). $\C$
is said to shatter an $\left\{ p_1,p_2,\ldots,p_i\right\} \subseteq \M$
set, if for any $S\subseteq\left\{p_1,p_2,\ldots,p_i\right\}$
there exist $C\in\C$ such that $S = C\cap\left\{ p_1,p_2,\ldots,p_i\right\}$; 
in other words, arbitrary subset of $\left\{p_1,p_2,\ldots,p_i\right\}$ can be cut out by an element of $\C$. The VC index of $\C$
is the smallest $i$ for which no set of size i is shattered: 
\begin{align*}
    VC\left(\C\right) & =\inf\left\{ i:\max_{p_1,\ldots,p_i}\left|\left\{ C\cap\left\{p_1,\ldots,p_i\right\} :C\in\C\right\} \right|<2^i\right\} .
\end{align*}
A collection $\C$ of measurable sets is called VC-class if its index $VC\left(\C\right)$ is finite. The subgraph
of a real-valued function $f:\M\rightarrow\R$ is $sub(f)=\left\{(m,u):u<f(m)\right\} \subseteq\M\times\R$.
A collection of $\F$ measurable functions is called VC-subgraph
class, or shortly VC if the collection of all subgraphs of $\F$,
$\left\{sub(f)\right\}_{f\in\F}$ is a VC-class of sets;
its index is defined as $VC\left(\F\right):=VC\left(\left\{ sub(f)\right\}_{f\in\F}\right)$.

Let $L^{0}(\M)$ be the set of $\M\rightarrow\R$ measurable functions. 
Given an i.i.d.\ (independent identically distributed) sample from $\P$ ($w_i\stackrel{i.i.d.}{\sim}\P$), let
$w_{1:n}=\left(w_1,\ldots,w_n\right)$ and let $\P_n=\frac{1}{n}\sum_{i=1}^n\delta_{w_i}$
denote the empirical measure. $L^q\left(\M,\P_n\right)=\left\{f\in L^0\left(\M\right):\|f\|_{L^q(\M,\P_n)}:=\left[\int_{\M}|f(w)|^q\d \P_n(w)\right]^{\frac{1}{q}}=\left[\frac{1}{n}\sum_{i=1}^n|f(w_i)|^q\right]^{\frac{1}{q}}<\infty\right\}$ ($1\le q<\infty$), 
$\left\|f\right\|_{L^{\infty}(\M)}:=\sup_{m\in \M}|f(m)|$. 
Define $\P f:= \int_{\M}f(w)\d \P(w)$, where $\P$ is a probability distribution on $\M$.
Let $\left\|\P_n-\P\right\|_{\F}:= \sup_{f\in\F}|\P_n f- \P f|$.

The diameter of a class $\F\subseteq L^2\left(\M,\P_n\right)$ is $\diam(\F,L^2(\M,\P_n)):=\sup_{f,f'\in\F}\|f-f'\|_{L^2(\M,\P_n)}$,
its $r$-covering number ($r>0$) is the size of the smallest $r$-net 
\begin{align*}
    N\left(r,\F,L^2\left(\M,\P_n\right)\right) & =\inf\left\{ t\ge 1:\exists f_1,\ldots,f_t \in\F\text{ such that }\F\subseteq \cup_{i=1}^t B(r,f_i)\right\},
\end{align*}
where $B(r,f)=\left\{ g\in L^2\left(\M,\P_n\right)\mid \|f-g\|_{L^2(\M,\P_n)}\le r\right\}$
is the ball with center $f$ and radius $r$. $\times_{i=1}^N\Q_i$ is the $N$-fold
product measure. For sets $Q_i$, $\times_{i=1}^n Q_i$ is their
Cartesian product. For a function class $\F\subseteq L^0\left(\M\right)$
and $w_{1:n}\in\M^n$, $R(\F,w_{1:n}):=\E_{\b r}\left[\sup_{f\in\F}\left|\frac{1}{n}\sum_{i=1}^n r_i f(w_i)\right|\right]$
is the empirical Rademacher average, where $\b r:=r_{1:n}$ and $r_i$-s
are i.i.d.\ samples from a Rademacher random variable [$\P(r_i=1)=\P(r_i=-1)=\frac{1}{2}$]. Let $\left(\Theta,\rho\right)$
be a metric space; a collection of $\F=\left\{f_{\theta} \mid \theta\in\Theta\right\}\subseteq L^{0}(\M)$
functions is called a separable Carath{\'e}odory family if $\Theta$ is
separable and $\theta\mapsto f_{\theta}(m)$ is continuous for all
$m\in\M$. $span(\cdot)$ denotes the linear hull of its arguments.
$\Gamma(t)=\int_0^{\infty} u^{t-1} e^{-u}\d u$ denotes the Gamma function.

\subsection{Bound in terms of $\mathbf{S}_{n}$ and $\overline{\mathbf{z}}_{n}$}

For brevity, we will interchangeably use $\mathbf{S}_{n}$ for $\mathbf{S}_{n}(\mathcal{V})$
and \textbf{$\overline{\mathbf{z}}_{n}$} for $\overline{\mathbf{z}}_{n}(\mathcal{V})$.
 $\mathbf{S}_{n}(\mathcal{V})$ and $\overline{\mathbf{z}}_{n}(\mathcal{V})$
will be used mainly when the dependency of $\mathcal{V}$ needs to
be emphasized. We start with $\sup_{\mathcal{V},k}\left|\overline{\mathbf{z}}_{n}^{\top}(\mathbf{S}_{n}+\gamma_{n}I)^{-1}\overline{\mathbf{z}}_{n}-\boldsymbol{\mu}^{\top}\boldsymbol{\Sigma}^{-1}\boldsymbol{\mu}\right|$and
upper bound the argument of $\sup_{\mathcal{V},k}$ as
\begin{align*}
 & \left|\overline{\mathbf{z}}_{n}^{\top}(\mathbf{S}_{n}+\gamma_{n}I)^{-1}\overline{\mathbf{z}}_{n}-\boldsymbol{\mu}^{\top}\boldsymbol{\Sigma}^{-1}\boldsymbol{\mu}\right|\\
 & =\left|\overline{\mathbf{z}}_{n}^{\top}(\mathbf{S}_{n}+\gamma_{n}I)^{-1}\overline{\mathbf{z}}_{n}-\boldsymbol{\mu}^{\top}\left(\boldsymbol{\Sigma}+\gamma_{n}I\right)^{-1}\boldsymbol{\mu}+\boldsymbol{\mu}^{\top}\left(\boldsymbol{\Sigma}+\gamma_{n}I\right)^{-1}\boldsymbol{\mu}-\boldsymbol{\mu}^{\top}\boldsymbol{\Sigma}^{-1}\boldsymbol{\mu}\right|\\
 & \le\left|\overline{\mathbf{z}}_{n}^{\top}(\mathbf{S}_{n}+\gamma_{n}I)^{-1}\overline{\mathbf{z}}_{n}-\boldsymbol{\mu}^{\top}\left(\boldsymbol{\Sigma}+\gamma_{n}I\right)^{-1}\boldsymbol{\mu}\right|+\left|\boldsymbol{\mu}^{\top}\left(\boldsymbol{\Sigma}+\gamma_{n}I\right)^{-1}\boldsymbol{\mu}-\boldsymbol{\mu}^{\top}\boldsymbol{\Sigma}^{-1}\boldsymbol{\mu}\right|\\
 & :=(\square_{1})+(\square_{2}).
\end{align*}
For $(\square_{1})$, we have
\begin{align*}
 & \left|\overline{\mathbf{z}}_{n}^{\top}(\mathbf{S}_{n}+\gamma_{n}I)^{-1}\overline{\mathbf{z}}_{n}-\boldsymbol{\mu}^{\top}\left(\boldsymbol{\Sigma}+\gamma_{n}I\right)^{-1}\boldsymbol{\mu}\right|\\
 & =\left|\left\langle \overline{\mathbf{z}}_{n}\overline{\mathbf{z}}_{n}^{\top},(\mathbf{S}_{n}+\gamma_{n}I)^{-1}\right\rangle _{F}-\left\langle \boldsymbol{\mu}\boldsymbol{\mu}^{\top},\left(\boldsymbol{\Sigma}+\gamma_{n}I\right)^{-1}\right\rangle _{F}\right|\\
 & =\left|\left\langle \overline{\mathbf{z}}_{n}\overline{\mathbf{z}}_{n}^{\top},(\mathbf{S}_{n}+\gamma_{n}I)^{-1}\right\rangle _{F}-\left\langle \overline{\mathbf{z}}_{n}\overline{\mathbf{z}}_{n}^{\top},\left(\boldsymbol{\Sigma}+\gamma_{n}I\right)^{-1}\right\rangle _{F}+\left\langle \overline{\mathbf{z}}_{n}\overline{\mathbf{z}}_{n}^{\top},\left(\boldsymbol{\Sigma}+\gamma_{n}I\right)^{-1}\right\rangle _{F}-\left\langle \boldsymbol{\mu}\boldsymbol{\mu}^{\top},\left(\boldsymbol{\Sigma}+\gamma_{n}I\right)^{-1}\right\rangle _{F}\right|\\
 & \le\left|\left\langle \overline{\mathbf{z}}_{n}\overline{\mathbf{z}}_{n}^{\top},(\mathbf{S}_{n}+\gamma_{n}I)^{-1}-\left(\boldsymbol{\Sigma}+\gamma_{n}I\right)^{-1}\right\rangle _{F}\right|+\left|\left\langle \overline{\mathbf{z}}_{n}\overline{\mathbf{z}}_{n}^{\top}-\boldsymbol{\mu}\boldsymbol{\mu}^{\top},\left(\boldsymbol{\Sigma}+\gamma_{n}I\right)^{-1}\right\rangle _{F}\right|\\
 & =\|\overline{\mathbf{z}}_{n}\overline{\mathbf{z}}_{n}^{\top}\|_{F}\|(\mathbf{S}_{n}+\gamma_{n}I)^{-1}-\left(\boldsymbol{\Sigma}+\gamma_{n}I\right)^{-1}\|_{F}+\|\overline{\mathbf{z}}_{n}\overline{\mathbf{z}}_{n}^{\top}-\boldsymbol{\mu}\boldsymbol{\mu}^{\top}\|_{F}\|\left(\boldsymbol{\Sigma}+\gamma_{n}I\right)^{-1}\|_{F}\\
 & \stackrel{(a)}{\le}\|\overline{\mathbf{z}}_{n}\overline{\mathbf{z}}_{n}^{\top}\|_{F}\|(\mathbf{S}_{n}+\gamma_{n}I)^{-1}[\left(\boldsymbol{\Sigma}+\gamma_{n}I\right)-(\mathbf{S}_{n}+\gamma_{n}I)]\left(\boldsymbol{\Sigma}+\gamma_{n}I\right)^{-1}\|_{F}+\|\overline{\mathbf{z}}_{n}\overline{\mathbf{z}}_{n}^{\top}-\overline{\mathbf{z}}_{n}\boldsymbol{\mu}^{\top}+\overline{\mathbf{z}}_{n}\boldsymbol{\mu}^{\top}-\boldsymbol{\mu}\boldsymbol{\mu}^{\top}\|_{F}\|\boldsymbol{\Sigma}^{-1}\|_{F}\\
 & \stackrel{(a)}{\le}\|\overline{\mathbf{z}}_{n}\overline{\mathbf{z}}_{n}^{\top}\|_{F}\|(\mathbf{S}_{n}+\gamma_{n}I)^{-1}\|_{F}\|\boldsymbol{\Sigma}-\mathbf{S}_{n}\|_{F}\|\boldsymbol{\Sigma}^{-1}\|_{F}+\|\overline{\mathbf{z}}_{n}(\overline{\mathbf{z}}_{n}-\boldsymbol{\mu})^{\top}\|_{F}\|\boldsymbol{\Sigma}^{-1}\|_{F}+\|(\overline{\mathbf{z}}_{n}-\boldsymbol{\mu})\boldsymbol{\mu}^{\top}\|_{F}\|\boldsymbol{\Sigma}^{-1}\|_{F}\\
 & \stackrel{(b)}{\le}\frac{\sqrt{J}}{\gamma_{n}}\|\overline{\mathbf{z}}_{n}\|_{2}^{2}\|\boldsymbol{\Sigma}-\mathbf{S}_{n}\|_{F}\|\boldsymbol{\Sigma}^{-1}\|_{F}+\|\overline{\mathbf{z}}_{n}\|_{2}\|\overline{\mathbf{z}}_{n}-\boldsymbol{\mu}\|_{2}\|\boldsymbol{\Sigma}^{-1}\|_{F}+\|\boldsymbol{\mu}\|_{2}\|\overline{\mathbf{z}}_{n}-\boldsymbol{\mu}\|_{2}\|\boldsymbol{\Sigma}^{-1}\|_{F},
\end{align*}
where at (a) we use $\|\left(\boldsymbol{\Sigma}+\gamma_{n}I\right)^{-1}\|_{F}\le\|\boldsymbol{\Sigma}^{-1}\|_{F}$
and at (b) we use $\|(\mathbf{S}_{n}+\gamma_{n}I)^{-1}\|_{F}\le\sqrt{J}\|(\mathbf{S}_{n}+\gamma_{n}I)^{-1}\|_{2}\le\sqrt{J}/\gamma_{n}$.

For $(\square_{2})$, we have
\begin{align*}
\left|\boldsymbol{\mu}^{\top}\left(\boldsymbol{\Sigma}+\gamma_{n}I\right)^{-1}\boldsymbol{\mu}-\boldsymbol{\mu}^{\top}\boldsymbol{\Sigma}^{-1}\boldsymbol{\mu}\right| & =\left|\left\langle \boldsymbol{\mu}\boldsymbol{\mu}^{\top},\left(\boldsymbol{\Sigma}+\gamma_{n}I\right)^{-1}-\boldsymbol{\Sigma}^{-1}\right\rangle _{F}\right|\\
 & \le\|\boldsymbol{\mu}\boldsymbol{\mu}^{\top}\|_{F}\|\left(\boldsymbol{\Sigma}+\gamma_{n}I\right)^{-1}-\boldsymbol{\Sigma}^{-1}\|_{F}\\
 & =\|\boldsymbol{\mu}\|_{2}^{2}\|\left(\boldsymbol{\Sigma}+\gamma_{n}I\right)^{-1}\left[\boldsymbol{\Sigma}-(\boldsymbol{\Sigma}+\gamma_{n}I)\right]\boldsymbol{\Sigma}^{-1}\|_{F}\\
 & =\gamma_{n}\|\boldsymbol{\mu}\|_{2}^{2}\|\|\left(\boldsymbol{\Sigma}+\gamma_{n}I\right)^{-1}\boldsymbol{\Sigma}^{-1}\|_{F}\\
 & \le\gamma_{n}\|\boldsymbol{\mu}\|_{2}^{2}\|\|\left(\boldsymbol{\Sigma}+\gamma_{n}I\right)^{-1}\|_{F}\|\boldsymbol{\Sigma}^{-1}\|_{F}\\
 & \stackrel{(a)}{\le}\gamma_{n}\|\boldsymbol{\mu}\|_{2}^{2}\|\|\boldsymbol{\Sigma}^{-1}\|_{F}^{2}.
\end{align*}
 Combining the upper bounds for $(\square_{1})$ and $(\square_{2})$,
we arrive at 
\begin{align}
 & \left|\overline{\mathbf{z}}_{n}^{\top}(\mathbf{S}_{n}+\gamma_{n}I)^{-1}\overline{\mathbf{z}}_{n}-\boldsymbol{\mu}^{\top}\boldsymbol{\Sigma}^{-1}\boldsymbol{\mu}\right|\nonumber \\
 & \le\frac{\sqrt{J}}{\gamma_{n}}\|\overline{\mathbf{z}}_{n}\|_{2}^{2}\|\boldsymbol{\Sigma}-\mathbf{S}_{n}\|_{F}\|\boldsymbol{\Sigma}^{-1}\|_{F}+(\|\overline{\mathbf{z}}_{n}\|_{2}+\|\boldsymbol{\mu}\|_{2})\|\overline{\mathbf{z}}_{n}-\boldsymbol{\mu}\|_{2}\|\boldsymbol{\Sigma}^{-1}\|_{F}+\gamma_{n}\|\boldsymbol{\mu}\|_{2}^{2}\|\|\boldsymbol{\Sigma}^{-1}\|_{F}^{2}\nonumber \\
 & \le4B^{2}J\tilde{c}\frac{\sqrt{J}}{\gamma_{n}}\|\boldsymbol{\Sigma}-\mathbf{S}_{n}\|_{F}+4B\sqrt{J}\tilde{c}\|\overline{\mathbf{z}}_{n}-\boldsymbol{\mu}\|_{2}+4B^{2}J\tilde{c}^{2}\gamma_{n}\nonumber \\
 & =\frac{\overline{c}_{1}}{\gamma_{n}}\|\boldsymbol{\Sigma}-\mathbf{S}_{n}\|_{F}+\overline{c}_{2}\|\overline{\mathbf{z}}_{n}-\boldsymbol{\mu}\|_{2}+\overline{c}_{3}\gamma_{n}\label{eq:lamb_minus_lamb_h2}
\end{align}
with $\overline{c}_{1}:=4B^{2}J\sqrt{J}\tilde{c},\overline{c}_{2}:=4B\sqrt{J}\tilde{c},$
$\overline{c}_{3}:=4B^{2}J\tilde{c}^{2}$, and $\tilde{c}:=\sup_{\mathcal{V},k}\|\boldsymbol{\Sigma}^{-1}\|_{F}<\infty$,
where we applied the triangle inequality, the CBS (Cauchy-Bunyakovskii-Schwarz)
inequality, and $\|\mathbf{a}\mathbf{b}^{\top}\|_{F}=\|\mathbf{a}\|_{2}\|\mathbf{b}\|_{2}$.
The boundedness of kernel $k$ with the Jensen inequality implies
that 

 	  \begin{align}
	      \|\bar{\b z}_n\|_2^{2}&= \|\frac{1}{n}\sum_{i=1}^n \b z_i\|_2^2\leq\frac{1}{n}\sum_{i=1}^n\|\b z_i\|_2^2 
	= \frac{1}{n}\sum_{i=1}^n\|(k(\b x_i,\b v_j) - k(\b y_i,\b v_j))_{j=1}^J\|_2^2 \\
	& = \frac{1}{n}\sum_{i=1}^n\sum_{j=1}^J \left[k(\b x_i,\b v_j) - k(\b y_i,\b v_j)\right]^2\nonumber\\
		  &\le \frac{2}{n}\sum_{i=1}^n\sum_{j=1}^J k^2(\b x_i,\b v_j)+k^2(\b y_i,\b v_j)\le 4B^2J, \label{eq:z_n-bar-norm} \\
		\left\|\bm \mu(\V)\right\|_2^2 &= \sum_{j=1}^J  \left(\E_{\b{xy}}\left[k(\b x,\b v_j)-k(\b y,\b v_j)\right]\right)^2 \le \sum_{j=1}^J  \E_{\b{xy}}\left[k(\b x,\b v_j)-k(\b y,\b v_j)\right]^2 \le 4B^2J. \label{eq:mu-bar-norm}
	  \end{align}
Taking $\sup$ in (\ref{eq:lamb_minus_lamb_h2}), we get 
\begin{align*}
\sup_{\mathcal{V},k}\left|\overline{\mathbf{z}}_{n}^{\top}(\mathbf{S}_{n}+\gamma_{n}I)^{-1}\overline{\mathbf{z}}_{n}-\boldsymbol{\mu}^{\top}\boldsymbol{\Sigma}^{-1}\boldsymbol{\mu}\right| & \le\frac{\overline{c}_{1}}{\gamma_{n}}\sup_{\mathcal{V},k}\|\boldsymbol{\Sigma}-\mathbf{S}_{n}\|_{F}+\overline{c}_{2}\sup_{\mathcal{V},k}\|\overline{\mathbf{z}}_{n}-\boldsymbol{\mu}\|_{2}+\overline{c}_{3}\gamma_{n}.
\end{align*}

      \subsection{Empirical process bound on $\bar{\b z}_n$  }
	Recall that $\bar{\b z}_n\left(\V\right)=\frac{1}{n}\sum_{i=1}^n\b z_i\left(\V\right)\in\R^J$,
	$\b z_i\left(\V\right)=(k(\b x_i,\b v_j)-k(\b y_i,\b v_j))_{j=1}^J\in\R^J$,
	$\bm \mu(\V)=\left(\E_{\b{xy}}\left[k(\b x,\b v_j)-k(\b y,\b v_j)\right]\right)_{j=1}^J$;
	thus 
	\begin{align*}
		    \sup_{\V}\sup_{k\in\K}\|\bar{\b z}_n(\V)-\bm \mu(\V)\|_{2} & = \sup_{\V}\sup_{k\in\K}\sup_{\b b \in B(1,\b 0)}\left<\b b, \bar{\b z}_n(\V)-\bm \mu(\V)\right>_2
	\end{align*}

	  using that $\left\| \b a \right\|_2 = \sup_{\b b \in B(1,\b 0)} \left<\b a,\b b\right>_2$. Let us bound the argument of the supremum:
	\begin{align}
	      \left<\b b, \bar{\b z}_n(\V)-\bm \mu(\V)\right>_2  & \le \sum_{j=1}^J |b_j| \left| \frac{1}{n} \sum_{i=1}^n \left[k(\b x_i, \b v_j) - k(\b y_i,\b v_j)\right] - \E_{\b{xy}} \left[k(\b x, \b v_j) - k(\b y,\b v_j) \right]\right|\nonumber\\
		&\le  \sum_{j=1}^J |b_j| \left( \left|\frac{1}{n}\sum_{i=1}^nk(\b x_i,\b v_j)-\E_{\b{x}}k(\b x,\b v_j)\right| + \left|\frac{1}{n}\sum_{i=1}^nk(\b y_i,\b v_j)-\E_{\b{y}}k(\b y,\b v_j)\right|\right)\nonumber\\
	      &\le \sqrt{J}\sup_{\b v \in \X}\sup_{k\in\K}\left|\frac{1}{n}\sum_{i=1}^nk(\b x_i,\b v)-\E_{\b{x}}k(\b x,\b v)\right| + \sqrt{J}\sup_{\b v\in\X}\sup_{k\in\K}\left|\frac{1}{n}\sum_{i=1}^nk(\b y_i,\b v)-\E_{\b{y}}k(\b y,\b v)\right|\nonumber\\
	      &= \sqrt{J}  \left\|P_n- P\right\|_{\F_1} + \sqrt{J} \left\|Q_n- Q\right\|_{\F_1} \label{eq:mean-final}
	\end{align}
	by the triangle inequality and exploiting that $\left\|\b b\right\|_1\le \sqrt{J}\left\|\b  b\right\|_2 \le \sqrt{J}$ with $\b b \in B(1,\b 0)$.
    Thus, we have 
    \begin{align*}
            \sup_{\V}\sup_{k\in\K}\|\bar{\b z}_n(\V)-\bm \mu(\V)\|_{2} & \le
          \sqrt{J}  \left\|P_n- P\right\|_{\F_1} + \sqrt{J} \left\|Q_n-
          Q\right\|_{\F_1}.
    \end{align*}

    \subsection{Empirical process bound on $\b S_n$}
	Noting that 
	  \begin{align*}
	   \bm \Sigma(\V) &= \E_{\b{xy}}\left[ \b z(\V) \b z\T(\V)\right] - \bm \mu (\V) \bm \mu\T (\V), &
	    \b S_n(\V) &= \frac{1}{n}\sum_{a=1}^n \b z_a(\V) \b{z}_a\T(\V) - \frac{1}{n(n-1)}\sum_{a=1}^n\sum_{b\ne a}\b{z}_a \b{z}_b^T,\\
	    \E_{\b{xy}}\left[ \b z(\V) \b z\T(\V)\right] &= \E_{\b{xy}} \left[ \frac{1}{n}\sum_{a=1}^n \b z_a(\V) \b{z}_a\T(\V) \right], & 
	    \bm \mu (\V) \bm \mu\T (\V) &= \E_{\b{xy}} \left[ \frac{1}{n(n-1)}\sum_{a=1}^n\sum_{b\ne a}\b{z}_a(\V) \b{z}^T_b(\V) \right],
	  \end{align*}
	  we bound our target quantity as 
	\begin{align}
	  \|\b S_n(\V) - \bm \Sigma(\V)\|_F
	    &\le \left\| \frac{1}{n}\sum_{a=1}^n \b z_a(\V) \b{z}_a\T(\V) - \E_{\b{xy}}\left[ \b z(\V) \b z\T(\V)\right]\right\|_F + \left\| \frac{1}{n(n-1)}\sum_{a=1}^n\sum_{b\ne a}\b{z}_a(\V) \b{z}_b^T(\V) -  \bm \mu (\V) \bm \mu\T (\V)  \right\|_F\nonumber\\
	    &=:(*_1)+ (*_2). \label{eq:cov-decomp}
 	\end{align}
	\vspace*{-0.6cm}
	\begin{align}
	   (*_2)&= \left\| \frac{1}{n}\sum_{a=1}^n\b{z}_a(\V) \left[\frac{1}{n-1}\sum_{b\ne a} \b{z}_b\T(\V)\right] -  \bm \mu (\V) \bm \mu\T (\V)  \right\|_F \nonumber \\
	        &\le  \left\| \frac{1}{n}\sum_{a=1}^n\b{z}_a(\V) \left(\frac{1}{n-1}\sum_{b\ne a} \b{z}_b\T(\V) - \bm \mu\T(\V)\right)\right\|_F + \left\|\left(\frac{1}{n}\sum_{a=1}^n\b{z}_a(\V)  -\bm \mu (\V)\right) \bm \mu\T (\V)  \right\|_F\nonumber\\
        &\le \left\| \left(\frac{1}{n}\sum_{a=1}^n\b{z}_a(\V)\right)
        \left(\frac{1}{n-1}\sum_{b=1}^n \b{z}_b(\V) - \bm \mu(\V)\right)\T
        \right\|_F +  \left\|\left(\frac{1}{n}\sum_{a=1}^n\b{z}_a(\V)\right)
        \frac{\b z_a\T(\V)}{n-1}\right\|_F \nonumber\\ 
        & \quad + \left\|\left(\frac{1}{n}\sum_{a=1}^n\b{z}_a(\V)  -\bm \mu (\V)\right)
        \bm \mu\T (\V)  \right\|_F\nonumber\\
		&= \left\| \bar{\b z}_n(\V) \right\|_2 \left\|\frac{1}{n-1}\sum_{b=1}^n \b{z}_b(\V) - \bm \mu(\V) \right\|_2  + \frac{1}{n-1}\left\|\bar{\b z}_n(\V)\right\|_2 \left\|\b z_a(\V)\right\|_2 + \left\|\bar{\b z}_n(\V)-\bm \mu (\V)\right\|_2 \left\|\bm \mu (\V)  \right\|_2\nonumber\\
		&\le 2B\sqrt{J} \left( \frac{n}{n-1} \left\|\bar{\b z}_n  - \bm \mu(\V)\right\|_2 + \frac{2B\sqrt{J}}{n-1}  \right) + \frac{1}{n-1}4B^2 J + 2B\sqrt{J} \left\|\bar{\b z}_n(\V)-\bm \mu (\V)\right\|_2 \nonumber\\
        &= \frac{8 B^2J}{n-1} + 2B\sqrt{J} \frac{2n-1}{n-1} \left\|\bar{\b z}_n  - \bm \mu(\V)\right\|_2 \label{eq:me_cov_star2}
	\end{align}
    using the triangle inequality, the sub-additivity of $\sup$, $\left\|\b{a} \b{b}^T\right\|_F = \left\| \b a \right\|_2 \left\| \b b \right\|_2$, $\left\|\bar{\b z}_n(\V)\right\|_2 \le 2B\sqrt{J}$, $\left\|\b z_a(\V)\right\|_2 \le 2B\sqrt{J}$ [see Eq.~\eqref{eq:z_n-bar-norm}] and 
    \begin{align*}
	\left\|\frac{1}{n-1}\sum_{b=1}^n \b{z}_b(\V) - \bm \mu(\V) \right\|_2 & = \left\|\frac{n}{n-1}\bar{\b z}_n  - \frac{n}{n-1}\bm \mu(\V)+\frac{1}{n-1}\bm \mu(\V)\right\|_2 \le \frac{n}{n-1} \left\|\bar{\b z}_n  - \bm \mu(\V)\right\|_2 + \frac{1}{n-1}\left\|\bm \mu(\V)\right\|_2
    \end{align*}
    with Eq.~\eqref{eq:mu-bar-norm}. 
    Considering the first term in Eq.~\eqref{eq:cov-decomp}
    \begin{align*}
      & \lefteqn{\left\| \frac{1}{n}\sum_{a=1}^n \b z_a(\V) \b{z}_a\T(\V) -
      \E_{\b{xy}}\left[ \b z(\V) \b z\T(\V)\right]\right\|_F = \sup_{\b B \in
      B(1,\b 0)} \left< \b B, \frac{1}{n}\sum_{a=1}^n \b z_a(\V) \b{z}_a\T(\V)
      - \E_{\b{xy}}\left[ \b z(\V) \b z\T(\V)\right] \right>_F }\\
      \le & \sup_{\b B \in B(1,\b 0)} \sum_{i,j=1}^J |B_{ij}|
      \left|\frac{1}{n}\sum_{a=1}^n [k(\b x_a,\b v_i)-k(\b y_a,\b v_i)][k(\b
      x_a,\b v_j)-k(\b y_a,\b v_j)] - \E_{\b{xy}}\left(\left[k(\b x,\b
      v_i)-k(\b y,\b v_i)\right] \left[k(\b x,\b v_j)-k(\b y,\b
      v_j)\right]\right)\right|\\
       \le  &\sup_{\b B \in B(1,\b 0)} \sum_{i,j=1}^J |B_{ij}|
        \left(\left|\frac{1}{n}\sum_{a=1}^n k(\b x_a,\b v_i) k(\b x_a,\b v_j) -
        \E_{\b{x}}\left[k(\b x,\b v_i) k(\b x,\b v_j)\right]\right|  \right. \\
        & \left. + \left|\frac{1}{n}\sum_{a=1}^n k(\b x_a,\b v_i) k(\b y_a,\b v_j) -
        \E_{\b{xy}}\left[k(\b x,\b v_i) k(\b y,\b v_j)\right]\right|  \right.\\
        &+\left.\left|\frac{1}{n}\sum_{a=1}^n k(\b y_a,\b v_i) k(\b x_a,\b v_j)
        - \E_{\b{xy}}\left[k(\b y,\b v_i) k(\b x,\b v_j)\right]\right|
		 +\left|\frac{1}{n}\sum_{a=1}^n k(\b y_a,\b v_i) k(\b y_a,\b v_j) - \E_{\b{y}}\left[k(\b y,\b v_i) k(\b y,\b v_j)\right]\right| \right)\\
	     \le & 
         J \sup_{\b v,\b v'\in \X}\sup_{k\in\K}  \left|\frac{1}{n}\sum_{a=1}^n
         k(\b x_a,\b v) k(\b x_a,\b v') - \E_{\b{x}}\left[k(\b x,\b v) k(\b
         x,\b v')\right]\right| \\
        & + 2J \sup_{\b v,\b v'\in \X}\sup_{k\in\K} \left|\frac{1}{n}\sum_{a=1}^n
        k(\b x_a,\b v) k(\b y_a,\b v') - \E_{\b{xy}}\left[k(\b x,\b v) k(\b
        y,\b v')\right]\right|\\
        &+ J \sup_{\b v,\b v'\in \X}\sup_{k\in\K} \left|\frac{1}{n}\sum_{a=1}^n
        k(\b y_a,\b v) k(\b y_a,\b v') - \E_{\b{y}}\left[k(\b y,\b v) k(\b y,\b
        v')\right]\right|
    \end{align*}
    by exploiting that $\left\|\b A\right\|_F = \sup_{\b B\in B(1,\b 0)} \left<\b B,\b A\right>_F$, and $\sum_{i,j=1}^J|B_{ij}| \le J \left\|\b B\right\|_F \le J$ with  $\b B \in B(1,\b 0)$.
    Using the bounds obtained for the two terms of Eq.~\eqref{eq:cov-decomp}, we get
    \begin{align}
      \lefteqn{\sup_{\V}\sup_{k\in\K} \|\b S_n(\V) - \bm \Sigma(\V)\|_F\le }\nonumber\\
	  & \le  \frac{8 B^2J}{n-1} + 2B\sqrt{J} \frac{2n-1}{n-1} \sup_{\V}\sup_{k\in\K} \left\|\bar{\b z}_n  - \bm \mu(\V)\right\|_2+
	    J \left( \left\|P_n-P\right\|_{\F_2}+ 2\left\|(P\times Q)_n - (P\times Q)\right\|_{\F_3}  + \left\| Q_n - Q\right\|_{\F_2}\right). \label{eq:cov-final}
    \end{align}

    \subsection{Bounding by concentration and the VC property}
 By combining Eqs.~\eqref{eq:lamb_minus_lamb_h2}, \eqref{eq:mean-final} and \eqref{eq:cov-final}
      \begin{eqnarray}
      \lefteqn{\sup_{\V}\sup_{k} \left|\bar{\b z}_n^{\top} (\b S_n +
      \gamma_n I)^{-1}\bar{\b z}_n- \bm \mu^{\top}\bm
      \Sigma^{-1}\bm \mu\right| \le} \nonumber\\
      &&\hspace*{-0.65cm}\le  \frac{\bar{c}_1}{\gamma_n} \bigg[ \frac{8 B^2J}{n-1} + 2B\sqrt{J}
      \frac{2n-1}{n-1} \sqrt{J}  \left(\left\|P_n- P\right\|_{\F_1} +  \left\|Q_n-
  Q\right\|_{\F_1}\right)  \nonumber \\ 
  &&+ J \left( \left\|P_n-P\right\|_{\F_2}+
  2\left\|(P\times Q)_n - (P\times Q)\right\|_{\F_3}  + \left\| Q_n -
  Q\right\|_{\F_2}\right) \bigg]\nonumber  \\
  && +\bar{c}_2 \sqrt{J}  \left(\left\|P_n- P\right\|_{\F_1} +  \left\|Q_n-
  Q\right\|_{\F_1}\right)  + \bar{c}_3 \gamma_n \nonumber  \\
      &&\hspace*{-0.65cm}=  \left(\left\|P_n- P\right\|_{\F_1} + \left\|Q_n-
      Q\right\|_{\F_1}\right) \left(\frac{2}{\gamma_n} \bar{c}_1 BJ \frac{2n-1}{n-1} +
      \bar{c}_2 \sqrt{J} \right)  + \bar{c}_3 \gamma_n \nonumber \\ 
      && +\frac{\bar{c}_1}{\gamma_n} J \left[\left\|P_n- P\right\|_{\F_2} + \left\|Q_n-
      Q\right\|_{\F_2} + 2\left\|(P\times Q)_n - (P\times Q)\right\|_{\F_3}
  \right]   +\frac{8}{\gamma_n} \frac{ \bar{c}_1B^2J}{n-1}. \label{eq:emp-process-bound}
      \end{eqnarray}
      Applying Lemma~\ref{lemma:emp-proc-general} with $\frac{\delta}{5}$, we get the statement with a union bound.  
      \QEDB


\vspace{5mm}

\begin{lem}[Concentration of the empirical process for uniformly bounded separable Carath{\'e}odory VC classes]\label{lemma:emp-proc-general}
Let $\F$  be 
\begin{enumerate}
    \item VC-subgraph class of $\M\rightarrow \R$ functions with VC index VC($\F$),
    \item a uniformly  bounded ($\left\|f\right\|_{L^{\infty}(\M)}\le K<\infty, \forall f\in \F$) separable Carath{\'e}odory family.
\end{enumerate}
Let $\Q$ be a probability measure, and let $\Q_n=\frac{1}{n}\sum_{i=1}^n
\delta_{x_i}$ be the corresponding empirical measure. Then for any $\delta \in
(0,1)$ with probability at least 
$1-\delta$
\begin{align*}
      \left\|\Q_n - \Q\right\|_{\F} &\le \frac{16\sqrt{2}K}{\sqrt{n}} \left[ 2\sqrt{\log\left[ C \times VC(\F) (16e)^{VC(\F)} \right]} + \frac{\sqrt{2\pi[VC(\F)-1]}}{2} \right] + K\sqrt{\frac{2\log\left(\frac{1}{\delta}\right)}{n}}
\end{align*}
where the universal constant $C$ is associated according to Lemma~\ref{lemma:VC-properties}(iv).
\end{lem}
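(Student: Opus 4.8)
The plan is to decompose $\|\Q_n-\Q\|_{\F}$ into its expectation plus a concentration term, control the concentration term by the bounded-differences inequality, and control the expectation by symmetrization followed by a chaining (Dudley entropy) bound that uses the VC-subgraph structure. The separability and Carath\'eodory property of $\F$ are invoked at the outset: they let us rewrite $\|\Q_n-\Q\|_{\F}$ as a supremum over a countable dense subset of the index set, so that $x_{1:n}\mapsto\|\Q_n-\Q\|_{\F}$ is Borel-measurable and the probabilistic statements below make sense without outer expectations.

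For the concentration term, every $f\in\F$ has $|f|\le K$, so changing one sample point $x_i$ alters $\Q_n f$, hence $|\Q_n f-\Q f|$, by at most $2K/n$, and therefore alters $\|\Q_n-\Q\|_{\F}$ by at most $2K/n$. McDiarmid's inequality then gives, with probability at least $1-\delta$,
\begin{align*}
\|\Q_n-\Q\|_{\F}\le \E\|\Q_n-\Q\|_{\F}+K\sqrt{\frac{2\log(1/\delta)}{n}},
\end{align*}
which matches the last summand of the claim, so it remains to bound $\E\|\Q_n-\Q\|_{\F}$.

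Standard symmetrization gives $\E\|\Q_n-\Q\|_{\F}\le 2\,\E\,R(\F,x_{1:n})$. Conditionally on $x_{1:n}$, the Rademacher process $f\mapsto\frac1n\sum_i r_i f(x_i)$ is sub-Gaussian for the rescaled $L^2(\M,\Q_n)$ metric, whose diameter is at most $2K$ since $\|f\|_{L^\infty(\M)}\le K$; Dudley's entropy integral therefore yields $R(\F,x_{1:n})\le\frac{c}{\sqrt n}\int_0^{2K}\sqrt{\log N(r,\F,L^2(\M,\Q_n))}\,\d r$ for an absolute constant $c$. I would then insert the VC-subgraph covering bound (Theorem 2.6.7 of \citet{Vaart2000}, restated as Lemma~\ref{lemma:VC-properties}(iv)): the constant function $K$ is an envelope, so $N(r,\F,L^2(\M,\Q_n))\le C\cdot VC(\F)(16e)^{VC(\F)}(K/r)^{2(VC(\F)-1)}$ for $0<r<K$, while $N(r,\cdot)\le C\cdot VC(\F)(16e)^{VC(\F)}$ for $K\le r$ and $N(r,\cdot)=1$ once $r>2K$. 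Splitting the integral at $r=K$, substituting $r=\epsilon K$, using $\sqrt{a+b}\le\sqrt a+\sqrt b$, and evaluating $\int_0^1\sqrt{\log(1/\epsilon)}\,\d\epsilon=\Gamma(3/2)=\frac{\sqrt\pi}{2}$ turns the entropy integral into $K\bigl[2\sqrt{\log[C\cdot VC(\F)(16e)^{VC(\F)}]}+\sqrt{2[VC(\F)-1]}\cdot\frac{\sqrt\pi}{2}\bigr]$. Combining with the symmetrization factor $2$ and the chaining constant $c$, and using $\sqrt{2[VC(\F)-1]}\cdot\frac{\sqrt\pi}{2}=\frac{\sqrt{2\pi[VC(\F)-1]}}{2}$, gives the asserted bound on $\E\|\Q_n-\Q\|_{\F}$ (with $16\sqrt2=2c$). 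Adding the McDiarmid tail completes the proof.

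The routine ingredients are the symmetrization step and the elementary calculus with the entropy integral. The parts that need care are (i) the measurability argument, which is exactly why the separable Carath\'eodory hypothesis is imposed --- without it neither McDiarmid nor the Fubini step inside symmetrization applies to $\|\Q_n-\Q\|_{\F}$ itself; and (ii) tracking numerical constants through symmetrization, Dudley's bound, and the VC covering estimate (including the split of the entropy integral at $r=K$ and the triviality of the covering number beyond the diameter) so that the final answer lands exactly on the factor $16\sqrt2$ and the split $2\sqrt{\log[\,\cdots\,]}+\tfrac{\sqrt{2\pi[VC(\F)-1]}}{2}$.
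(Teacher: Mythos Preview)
Your proposal is correct and follows the paper's proof essentially step for step: McDiarmid via the $2K/n$ bounded-difference property, symmetrization to the Rademacher complexity, Dudley's entropy integral with constant $8\sqrt{2}/\sqrt{n}$ (so that $2\cdot 8\sqrt{2}=16\sqrt{2}$), the VC covering bound from Lemma~\ref{lemma:VC-properties}(iv) with envelope $F\equiv K$, the split of the entropy integral at $r=K$, and the evaluation $\int_0^1\sqrt{\log(1/r)}\,\d r=\sqrt{\pi}/2$. Your remark that the separable Carath\'eodory hypothesis is what licenses the measurability/Fubini and symmetrization steps is exactly how the paper uses it.
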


\begin{proof}
    \label{proof:emp-proc-general}
Notice that $g(x_1,\ldots,x_n)=\left\|\Q_n - \Q\right\|_{\F}$ satisfies the bounded difference property with $b=\frac{2K}{n}$ [see Eq.~\eqref{eq:bounded-diff-property}]:
	  \begin{align*}
	      & |g(\b x_1,\ldots,\b x_n) - g\left(\b x_1,\ldots,\b x_j,\b x_j',\b x_{j+1},\ldots,\b x_n\right)| \\
	      &\le \left| \sup_{f\in \F}\Big|\Q f-\frac{1}{n}\sum_{i=1}^nf(\b x_i)\Big| - \sup_{f\in \F}\Big|\Q f-\frac{1}{n}\sum_{i=1}^nf(\b x_i)+\frac{1}{n}\left[f(\b x_j)-f(\b x_j')\right]\Big|\right|\\
	      &\le  \frac{1}{n} \sup_{f\in \F}|f(\b x_j)-f(\b x_j')| \le  \frac{1}{n} \left( \sup_{f\in \F}|f(\b x_j)| + \sup_{f\in \F}|f(\b x_j')| \right) \le \frac{2K}{n}.
	  \end{align*}
Hence, applying Lemma~\ref{lemma:McDiarmid}, and using symmetrization \cite{steinwart08support} (Prop.~7.10) for the uniformly bounded separable Carath{\'e}odory $\F$ class, for arbitrary $\delta\in (0,1)$ with probability at least $1-\delta$
\begin{align*}
  \left\|\Q_n - \Q\right\|_{\F} & \le  \E_{x_{1:n}} \left\|\Q_n -
  \Q\right\|_{\F} +  K\sqrt{\frac{2\log\left(\frac{1}{\delta}\right) }{n}} \\
  & \le 2 \E_{x_{1:n}}R(\F,x_{1:n}) +  K\sqrt{\frac{2\log\left(\frac{1}{\delta}\right) }{n}}. 
\end{align*}

By the Dudley entropy bound \cite{bousquet03new} [see Eq.~(4.4); $\diam(\F,L^2(\M,\Q_n))\le 2 \sup_{f\in \F}\|f\|_{L^2(\M,\Q_n)}\le 2 \sup_{f\in \F}\|f\|_{L^{\infty}(\M)} \le 2 K <\infty$], Lemma~\ref{lemma:VC-properties}(iv) [with $F\equiv K$, $q=2$  $\Me=\Q_n$] and the monotone decreasing property of the covering 
number, one arrives at 
\begin{align*}
      R(\F,x_{1:n}) &\le \frac{8\sqrt{2}}{\sqrt{n}} \int_0^{2K} \sqrt{\log N(r,\F,L^2(\M,\Q_n))} \d r \\
      & \le \frac{8\sqrt{2}}{\sqrt{n}} \left[\int_0^K \sqrt{\log N(r,\F,L^2(\M,\Q_n))} \d r + K \sqrt{\log N(K,\F,L^2(\M,\Q_n))}  \right]\\
	  &\le \frac{8\sqrt{2}K}{\sqrt{n}} \left[\int_0^1 \sqrt{\log N(rK,\F,L^2(\M,\Q_n))} \d r + \sqrt{\log N(K,\F,L^2(\M,\Q_n))}  \right]\\
	  &\le \frac{8\sqrt{2}K}{\sqrt{n}} \left[  \int_0^1 \sqrt{\log \left[a_1\left(\frac{1}{r}\right)^{a_2}\right]} \d r + \sqrt{\log(a_1)}\right]
	  = \frac{8\sqrt{2}K}{\sqrt{n}} \left[ 2\sqrt{\log(a_1)} + \int_0^1 \sqrt{a_2 \log\left(\frac{1}{r}\right)}\d r\right]\\
	  &=\frac{8\sqrt{2}K}{\sqrt{n}} \left[ 2\sqrt{\log(a_1)} + \frac{\sqrt{\pi a_2}}{2} \right],
\end{align*}
where $a_1:=C \times VC(\F) (16e)^{VC(\F)}$, $a_2 := 2[VC(\F)-1]$
and $\int_0^1\sqrt{\log\left(\frac{1}{r}\right)} \d r = \int_0^{\infty}
t^{\frac{1}{2}} e^{-t}\d t = \Gamma\left(\frac{3}{2}\right) =
\frac{\sqrt{\pi}}{2}$. 
\end{proof}

\begin{lem}[Properties of $\F_i$  from $\K$]~\label{lemma:F_i-properties}
  \begin{enumerate}
      \item \textbf{Uniform boundedness of $\F_i$-s [see
          Eqs.~\eqref{eq:F_1-2}-\eqref{eq:F_3}]:} If $\K$ is uniformly bounded,
          i.e., $\exists B<\infty$ such that $\sup_{k\in\K}\sup_{(\b x,\b
          y)\in\X^2}\left|k(\b x,\b y)\right|\le B$; then $\F_1$, $\F_2$ and
          $\F_3$ [Eqs.~\eqref{eq:F_1-2}-\eqref{eq:F_3}] are also uniformly
          bounded with $B$, $B^2$, $B^2$ constants, respectively. That is, $\sup_{k\in
          \K, \b v \in \X}|k(\b x, \b v)| \le B$, $\sup_{k\in \K, (\b v,\b v')
          \in \X^2}|k(\b x, \b v)k(\b x,\b v')| \le B^2$,
	    $\sup_{k\in \K, (\b v,\b v') \in \X^2}|k(\b x, \b v)k(\b y,\b v')| \le B^2$.
    \item \textbf{Separability of  $\F_i$:} since $\F_1$, $\F_2$ and $\F_3$ is
          parameterized by $\Theta=\K\times \X$, $\K\times \X^2$, $\K\times
          \X^2$, separability of $\K$  implies that of $\Theta$.
      \item \textbf{Measurability of $\F_i$:}  $\forall k\in\K$ is measurable, then the
          elements of $\F_i$ ($i=1,2,3$) are also measurable. \QEDB
  \end{enumerate} 
\end{lem}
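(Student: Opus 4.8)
The plan is to prove the three assertions separately by elementary arguments; none of them needs the VC or empirical-process machinery used elsewhere, and together they supply three of the four hypotheses of Lemma~\ref{lemma:emp-proc-general} (uniform boundedness, separability, measurability), the VC-subgraph property being checked in the kernel-specific lemmas.

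For the uniform boundedness in part (1) I would simply read the bounds off the definitions in Eqs.~\eqref{eq:F_1-2}--\eqref{eq:F_3}. An arbitrary element of $\F_1$ has the form $\b x\mapsto k(\b x,\b v)$ with $k\in\K$ and $\b v\in\X$; since the pair $(\b x,\b v)$ ranges over $\X^2$, the uniform bound on $\K$ yields $|k(\b x,\b v)|\le B$ for every $\b x$, hence $\left\|f\right\|_{L^\infty(\X)}\le B$ for all $f\in\F_1$. Every element of $\F_2$ (resp.\ $\F_3$) is a pointwise product of two maps of this type, so $|k(\b x,\b v)k(\b x,\b v')|\le B^2$ (resp.\ $|k(\b x,\b v)k(\b y,\b v')|\le B^2$), which is exactly the claimed constant.

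For the separability in part (2) I would exhibit the index sets: $\F_1$ is parameterized by $\Theta_1=\K\times\X$ and $\F_2,\F_3$ by $\Theta_2=\Theta_3=\K\times\X^2$. Because $\X\subseteq\R^d$ is a subset of a separable metric space it is itself separable, so $\X^2$ is separable, and a finite Cartesian product of separable metric spaces is separable; hence, assuming $\K$ is separable, each $\Theta_i$ is separable (a countable dense set arising as the product of countable dense sets in the factors). Part (2) claims only this separability statement — the continuity half of the Carath\'eodory condition is imposed directly as hypothesis (3) of Theorem~\ref{thm:lambda_conv_me} — so I would merely note that, not reprove it.

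For the measurability in part (3) the crucial convention is that a "measurable kernel" means that $k:\X\times\X\to\R$ is jointly Borel measurable. Granting this, for each fixed $\b v$ the section $\b x\mapsto k(\b x,\b v)$ is measurable, so every element of $\F_1$ is measurable; elements of $\F_2$ are products of two such measurable sections, hence measurable; and elements of $\F_3$ are products of $(\b x,\b y)\mapsto k(\b x,\b v)$ and $(\b x,\b y)\mapsto k(\b y,\b v')$, each the composition of a coordinate projection with a measurable section, hence measurable. The only mild subtlety, and the place where I would be most careful, is invoking measurability of sections of a jointly measurable function and confirming that the standing assumption on $\K$ indeed delivers joint (not merely coordinatewise) measurability; everything else is the routine stability of Borel measurability under products and compositions.
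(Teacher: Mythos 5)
Your proposal is correct and follows essentially the same (elementary) reasoning the paper intends; the paper states this lemma without a separate proof precisely because parts (1)--(3) are immediate from the definitions, as you spell out. One small note on your measurability worry: the conclusion only needs that, for each fixed $\b v$ (and $\b v'$), the maps $\b x\mapsto k(\b x,\b v)$, $\b x\mapsto k(\b x,\b v)k(\b x,\b v')$, and $(\b x,\b y)\mapsto k(\b x,\b v)k(\b y,\b v')$ are measurable, so coordinatewise measurability of $k$ already suffices (via composition with the coordinate projections and closure of measurable functions under pointwise products); joint measurability is sufficient but not required here, so the ``subtlety'' you flag is not a gap in the argument.
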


\section{Example kernel families} \label{sec:example-kernel-families}
Below we give examples for $\K$ kernel classes for which the associated $\F_i$-s are VC-subgraph and uniformly bounded separable Carath{\'e}odory families. The VC property will be a direct consequence 
of the  VC indices of finite-dimensional function classes and preservation theorems (see Lemma~\ref{lemma:VC-properties}); for a nice example application see 
\cite{srebro06learning} (Section 5) who study the pseudo-dimension of $(\b x,\b y) \mapsto k(\b x,\b y)$ kernel classes, for different Gaussian families. We take these Gaussian classes (isotropic, full) and use the preservation trick
to bound the VC indices of the associated $\F_i$-s.

\begin{lem}[$\F_i$-s are VC-subgraph and uniformly bounded separable Carath{\'e}odory families for isotropic Gaussian kernel] \label{lemma:isotropic-Gaussian-properties}
    Let $\K=\left\{k_{\sigma}: (\b x,\b y)\in \X \times \X \subseteq \R^d \times \R^d \mapsto e^{-\frac{\left\|\b x-\b y\right\|_2^2}{2\sigma^2}}:\sigma>0 \right\}$. Then the $\F_1$, $\F_2$, $\F_3$ classes [see Eqs.~\eqref{eq:F_1-2}-\eqref{eq:F_3}] associated to $\K$ are
    \begin{itemize}
    \item VC-subgraphs with indices $VC(\F_1)\le d+4$, $VC(\F_2)\le d+4$,
        $VC(\F_3)\le 2d+4$, and
    \item uniformly bounded separable Carath{\'e}odory families, with
        $\left\|f\right\|_{L^\infty(\M)}\le 1$  for all $f\in
        \{\F_1, \F_2, \F_3\}$.\footnote{$\M=\X$ for $\F_1$ and $\F_2$, and $\M=\X^2$
            in case of $\F_3$. \label{footnote:M-def}}
    \end{itemize} 
\end{lem}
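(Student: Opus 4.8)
The plan is to realize each $\mathcal{F}_i$ as a subclass of $\{e^{g}\mid g\in V_i\}$ for a low-dimensional vector space $V_i$ of real-valued functions, apply the preservation facts gathered in Lemma~\ref{lemma:VC-properties}, and then obtain the boundedness, separability and measurability properties from Lemma~\ref{lemma:F_i-properties}. First I would expand the exponents. Since $\|\mathbf{x}-\mathbf{v}\|_2^2 = \|\mathbf{x}\|_2^2 - 2\mathbf{x}^{\top}\mathbf{v} + \|\mathbf{v}\|_2^2$, we have $k_\sigma(\mathbf{x},\mathbf{v}) = \exp\!\left(-\tfrac{1}{2\sigma^2}\|\mathbf{x}\|_2^2 + \tfrac{1}{\sigma^2}\mathbf{x}^{\top}\mathbf{v} - \tfrac{1}{2\sigma^2}\|\mathbf{v}\|_2^2\right)$, so every member of $\mathcal{F}_1$ is $e^{g}$ with $g$ in $V_1 := \mathrm{span}\{1,x_1,\dots,x_d,\|\mathbf{x}\|_2^2\}$, $\dim V_1 = d+2$. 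For $\mathcal{F}_2$ the key observation is that $\|\mathbf{x}-\mathbf{v}\|_2^2 + \|\mathbf{x}-\mathbf{v}'\|_2^2 = 2\|\mathbf{x}\|_2^2 - 2\mathbf{x}^{\top}(\mathbf{v}+\mathbf{v}') + \|\mathbf{v}\|_2^2 + \|\mathbf{v}'\|_2^2$ depends on $(\mathbf{v},\mathbf{v}')$ only through $\mathbf{v}+\mathbf{v}'$ and an additive constant, so $k_\sigma(\mathbf{x},\mathbf{v})k_\sigma(\mathbf{x},\mathbf{v}')$ is again $e^{g}$ with $g$ in the \emph{same} space $V_1$; hence $\mathcal{F}_2 \subseteq \{e^{g}\mid g\in V_1\}$. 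For $\mathcal{F}_3$, $k_\sigma(\mathbf{x},\mathbf{v})k_\sigma(\mathbf{y},\mathbf{v}') = \exp\!\left(-\tfrac{1}{2\sigma^2}(\|\mathbf{x}\|_2^2+\|\mathbf{y}\|_2^2) + \tfrac{1}{\sigma^2}\mathbf{x}^{\top}\mathbf{v} + \tfrac{1}{\sigma^2}\mathbf{y}^{\top}\mathbf{v}' - \tfrac{1}{2\sigma^2}(\|\mathbf{v}\|_2^2+\|\mathbf{v}'\|_2^2)\right)$, which is $e^{g}$ with $g$ in $V_3 := \mathrm{span}\{1,x_1,\dots,x_d,y_1,\dots,y_d,\|\mathbf{x}\|_2^2+\|\mathbf{y}\|_2^2\}$, $\dim V_3 = 2d+2$.

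Next I would chain three preservation facts, all supplied by Lemma~\ref{lemma:VC-properties}: a finite-dimensional vector space of measurable functions of dimension $m$ is VC-subgraph with index at most $m+2$; composing every element of a VC-subgraph class with the strictly increasing map $t\mapsto e^{t}$ does not increase the VC-subgraph index; and a subclass of a VC-subgraph class is VC-subgraph with index no larger. This gives $VC(\mathcal{F}_1)\le\dim V_1+2 = d+4$, $VC(\mathcal{F}_2)\le\dim V_1+2 = d+4$, and $VC(\mathcal{F}_3)\le\dim V_3+2 = 2d+4$, exactly the claimed bounds; the same reduction underlies the Gaussian-kernel pseudo-dimension estimates of \cite{srebro06learning}.

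For the remaining assertions I would verify the hypotheses of Lemma~\ref{lemma:F_i-properties} for $\mathcal{K}=\mathcal{K}_g$. Uniform boundedness: $0<k_\sigma(\mathbf{x},\mathbf{y}) = e^{-\|\mathbf{x}-\mathbf{y}\|_2^2/(2\sigma^2)}\le 1$ for all $\sigma>0$ and all $\mathbf{x},\mathbf{y}$, so $\mathcal{K}_g$ is uniformly bounded with $B=1$, and Lemma~\ref{lemma:F_i-properties}(i) then yields $\|f\|_{L^{\infty}(\M)}\le 1$ for every $f\in\{\mathcal{F}_1,\mathcal{F}_2,\mathcal{F}_3\}$ (a product of $[0,1]$-valued functions stays in $[0,1]$). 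Separability: $\mathcal{K}_g$ is indexed by $\sigma$ over the separable set $(0,\infty)$, so by Lemma~\ref{lemma:F_i-properties}(ii) the parameter spaces $\Theta_1=(0,\infty)\times\mathcal{X}$ and $\Theta_2=\Theta_3=(0,\infty)\times\mathcal{X}^2$ are separable, being finite products of separable metric spaces (with $\mathcal{X}\subseteq\mathbb{R}^d$). The Carath\'eodory/continuity requirement holds because, for each fixed argument, $(\sigma,\mathbf{v})\mapsto k_\sigma(\mathbf{x},\mathbf{v})$ and the analogous maps for $\mathcal{F}_2$ and $\mathcal{F}_3$ are compositions of continuous functions of their parameters; measurability of the members is then immediate since each is continuous, which is what Lemma~\ref{lemma:F_i-properties}(iii) needs.

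The one genuinely delicate point is the sharp bound $VC(\mathcal{F}_2)\le d+4$: a naive parameterization of $\mathcal{F}_2$ by $(\sigma,\mathbf{v},\mathbf{v}')$ would embed it in a $(2d+2)$-dimensional function space and give only $2d+4$. The sharper constant rests on the algebraic cancellation above that collapses the dependence on $(\mathbf{v},\mathbf{v}')$ to $\mathbf{v}+\mathbf{v}'$, together with the fact that the two factors defining $\mathcal{F}_2$ carry the \emph{same} bandwidth $\sigma$ (a single $k\in\mathcal{K}_g$ evaluated at $\mathbf{v}$ and at $\mathbf{v}'$); I would double-check both of these before concluding. Everything else is routine bookkeeping with dimensions and the cited lemmas.
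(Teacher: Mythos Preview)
Your proposal is correct and essentially identical to the paper's proof: both embed the exponents of the $\mathcal{F}_i$ into the $(d+2)$- and $(2d+2)$-dimensional spans $\{1,x_1,\dots,x_d,\|\mathbf{x}\|_2^2\}$ (resp.\ its $2d$-dimensional analogue), invoke Lemma~\ref{lemma:VC-properties}(i)--(iii) with the monotone exponential, and then appeal to Lemma~\ref{lemma:F_i-properties} for the uniform-boundedness/separability/Carath\'eodory part. The only cosmetic difference is that for $\mathcal{F}_3$ the paper packages the argument by writing $k(\mathbf{x},\mathbf{v})k(\mathbf{y},\mathbf{v}')=e^{-\|[\mathbf{x};\mathbf{y}]-[\mathbf{v};\mathbf{v}']\|_2^2/(2\sigma^2)}$ and quotes the $\mathcal{F}_1$ bound in dimension $2d$, whereas you spell out the spanning set $V_3$ directly; the resulting bound $2d+4$ is the same either way.
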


\begin{proof}
    \textbf{ VC subgraph property:} 
    \begin{itemize}
        \item $\F_1$: Consider the function class $\G= \left\{ \b x\mapsto
                \frac{\left\|\b x-\b v\right\|_2^2}{2\sigma^2} =
                \frac{1}{2\sigma^2}\left(\left\|\b{x}\right\|_2^2 - 2 \left<\b
                x, \b v\right>_2 + \left\|\b v\right\|_2^2\right):\sigma>0,
                \b{v}\in\X\right\}\subseteq L^0(\R^{d})$. 
            $\G\subseteq \tilde{\G}:=span\left(\b x\mapsto \left\|\b
            x\right\|_2^2, \{ \b x\mapsto x_i \}_{i=1}^d \hspace{0.15cm}, \b
            x\mapsto 1\right)$ vector space, $dim(\G)\le d+2$. Thus by
            Lemma~\ref{lemma:VC-properties}(i)-(ii), 
		    $\G$ is VC with $VC(\G)\le d+4$; applying Lemma~\ref{lemma:VC-properties}(iii) with $\phi(z)=e^{-z}$, $\F_1 = \phi \circ \G$ is also VC with index $VC(\F_1)\le d+4$. 
		\item $\F_2$: Since $\F_2 = \left\{\b x \mapsto k(\b x ,\b v)k(\b x,\b v')=e^{-\frac{\left\|\b x-\b v\right\|_2^2 + \left\|\b x-\b v'\right\|_2^2}{2\sigma^2}}: \sigma>0, \b v\in \X,\b v'\in\X\right\}$, and 
              $\left\{\b x\mapsto \frac{\left\|\b x-\b v\right\|_2^2 +
              \left\|\b x-\b v'\right\|_2^2}{2\sigma^2}: \sigma>0, \b v\in
          \X,\b v'\in\X\right\} \subseteq S=span\left(\b x \mapsto \left\|\b
          x\right\|_2^2, \{ \b x \mapsto x_i \}_{i=1}^d \hspace{0.15cm}, \b x
          \mapsto 1\right)$, $VC(\F_2)\le d+4$.
        \item $\F_3$: Since 
            \begin{align*}
             \F_3 = \left\{(\b x,\b y) \mapsto k(\b x,\b v)k(\b
                y,\b v') = e^{-\frac{\left\|\b x-\b v\right\|^2 + \left\|\b y -
                \b v'\right\|_2^2}{2\sigma^2}}=e^{-\frac{\left\|\left[\b x;\b
                y\right ]- \left[\b v;\b
                v'\right]\right\|_2^2}{2\sigma^2}}:\sigma >0, \b v\in \R^d, \b
            v'\in \R^d\right\},
            \end{align*}
             from the result on $\F_1$ we get that
            $VC(\F_3)\le 2d+4$. 
      \end{itemize}

  \textbf{Uniformly bounded, separable Carath{\'e}odory family: } \\
	The result follows from Lemma~\ref{lemma:F_i-properties} by noting that $\left|k(\b x,\b y)\right| \le 1=:B$, $(\b x,\b y)\mapsto e^{-\frac{\left\|\b x-\b y\right\|_2^2}{2\sigma^2}}$ is continuous ($\forall \sigma>0$), 
	  $\R^{+}$ is separable, and 
	  the $(\sigma,\b v) \mapsto e^{-\frac{\left\|\b x - \b v\right\|_2^2}{2\sigma^2}}$, $(\sigma,\b v, \b v') \mapsto e^{-\frac{\left\|\b x - \b v\right\|_2^2}{2\sigma^2}}e^{-\frac{\left\|\b x - \b v'\right\|_2^2}{2\sigma^2}}$, $(\sigma, \b v, \b v')\mapsto e^{-\frac{\left\|\b x - \b v\right\|_2^2}{2\sigma^2}} e^{-\frac{\left\|\b y - \b v'\right\|_2^2}{2\sigma^2}}$ mappings are continuous ($\forall \b x, \b y \in \X$).
\end{proof}

\begin{lem}[$\F_i$-s are VC-subgraph and uniformly bounded separable Carath{\'e}odory families for full Gaussian kernel]\label{lemma:full-Gaussian-properties}

   Let $\K=\{k_{\b A}: (\b x,\b y)\in \X \times \X \subseteq \R^d \times \R^d
   \mapsto e^{-(\b x-\b y)\T\b A (\b x -\b y)}: \b A\succeq \b 0\}$. Then the
   $\F_1$, $\F_2$, $\F_3$ classes 
      [see Eqs.~\eqref{eq:F_1-2}-\eqref{eq:F_3}] associated to $\K$ are
    \begin{itemize}
    \item VC-subgraphs with indices $VC(\F_1)\le \frac{d(d+1)}{2}+d+2$,
        $VC(\F_2)\le \frac{d(d+1)+2}{2} +d + 2$, $VC(\F_3)\le d(d+1)+2d +3$,
    \item uniformly bounded separable Carath{\'e}odory families, with $\left\|f\right\|_{L^\infty(\M)}\le 1$  for all $f\in \{\F_1, \F_2, \F_3 \}$.\textsuperscript{\ref{footnote:M-def}} 
    \end{itemize} 
\end{lem}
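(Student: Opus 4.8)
The plan is to repeat the proof of Lemma~\ref{lemma:isotropic-Gaussian-properties} essentially verbatim: the only structural change is that the isotropic exponent $\|\b x-\b v\|_2^2/(2\sigma^2)$ gets replaced by the general quadratic form $(\b x-\b v)\T\b A(\b x-\b v)$, which merely enlarges the finite-dimensional vector space containing the kernel exponents (and hence the reported VC indices); the uniform-boundedness and separable-Carath{\'e}odory parts then go through unchanged. The ambient observation is the one exploited in \cite{srebro06learning}: $(\b x-\b v)\T\b A(\b x-\b v)$ is a polynomial of degree $\le 2$ in $\b x$, so as $(\b A,\b v)$ varies it stays in a fixed low-dimensional space of monomials.

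\emph{VC-subgraph property.} For $\F_1$ I would first expand $(\b x-\b v)\T\b A(\b x-\b v)=\b x\T\b A\b x-2\b v\T\b A\b x+\b v\T\b A\b v$ and note that, as $(\b A,\b v)$ ranges over $\{\b A\succeq\b 0\}\times\X$, this function of $\b x$ lies in $span\big(\{x_ix_j\}_{1\le i\le j\le d},\,\{x_i\}_{i=1}^d,\,1\big)$, a vector space of dimension $\le\frac{d(d+1)}{2}+d+1$. By Lemma~\ref{lemma:VC-properties}(i)--(ii) the resulting class $\G$ of exponents is VC-subgraph, and composing with the strictly decreasing scalar map $\phi(z)=e^{-z}$ via Lemma~\ref{lemma:VC-properties}(iii) preserves the VC index, yielding the stated bound on $VC(\F_1)$. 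For $\F_2$, the exponent of $k(\b x,\b v)k(\b x,\b v')$ equals $(\b x-\b v)\T\b A(\b x-\b v)+(\b x-\b v')\T\b A(\b x-\b v')$, which sits in the same monomial span, so the identical two-step argument bounds $VC(\F_2)$. For $\F_3$ I would write $k(\b x,\b v)k(\b y,\b v')=e^{-[(\b x-\b v)\T\b A(\b x-\b v)+(\b y-\b v')\T\b A(\b y-\b v')]}$ and regard the exponent as a polynomial on $\X^2\subseteq\R^{2d}$ that contains no mixed $x_iy_j$ terms; it therefore lies in the span of the degree-$\le 2$ monomials in $\b x$ together with those in $\b y$, of dimension $\le d(d+1)+2d+1$, and the same composition step gives $VC(\F_3)\le d(d+1)+2d+3$.

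\emph{Uniformly bounded separable Carath{\'e}odory family.} This I would read off from Lemma~\ref{lemma:F_i-properties}: $|k_{\b A}(\b x,\b y)|\le 1=:B$ for every $\b A\succeq\b 0$, so $\F_1,\F_2,\F_3$ are uniformly bounded by $1$; the parameter set ($\{\b A\succeq\b 0\}$, resp.\ that set times $\X$ or $\X^2$) is a subset of a Euclidean space and hence separable; and for each fixed argument the maps $\b A\mapsto e^{-(\b x-\b v)\T\b A(\b x-\b v)}$ and its two-location and $(\b x,\b y)$-variants are continuous, giving the separable Carath{\'e}odory property with $L^\infty$-bound $1$. I expect the only delicate part to be the index bookkeeping: one must apply the preservation theorems of Lemma~\ref{lemma:VC-properties} with the correct additive constant and count the dimension of the relevant monomial span in each of the three cases, in particular tracking which quadratic and linear monomials actually occur (and that $\F_3$'s exponent has disjoint variable blocks). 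Everything else is a direct transcription of the isotropic argument.
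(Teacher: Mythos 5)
Your proposal is correct and takes essentially the same route as the paper: expand the quadratic-form exponent, observe it lies in a fixed finite-dimensional monomial span, invoke Lemma~\ref{lemma:VC-properties}(i)--(iii) for the VC-index bound, and read off boundedness, separability and continuity for the Carath\'eodory part. One small remark: for $\F_1$ your span has dimension $\frac{d(d+1)}{2}+d+1$, so Lemma~\ref{lemma:VC-properties}(ii) gives $VC(\F_1)\le \frac{d(d+1)}{2}+d+3$, which is what the paper's own proof obtains as well; the $\frac{d(d+1)}{2}+d+2$ in the lemma statement appears to be a typo, not something your argument (or the paper's) actually yields, so you should not describe $+3$ as ``the stated bound.''
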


\begin{proof}
    We prove the VC index values; the rest is essentially identical to the proof of Lemma~\ref{lemma:isotropic-Gaussian-properties}.
  \begin{itemize}
      \item $\F_1$: Using that  $\G=\left\{\b x \mapsto (\b x-\b v)\T\b{A}(\b x
      -\b v): \b A \succeq \b 0, \b v\in \X\right\} \subseteq S:=span\left(
      \{ \b x \mapsto x_i x_j \}_{1\le i\le j\le d },  \{ \b x \mapsto x_i \}_{1\le
      i\le d }, \b x \mapsto 1 \right) $, we have $VC(\F_1)\le
      VC(\G)\le dim(S)+2 \le \frac{d(d+1)}{2} + d + 3$.
      \item $\F_2$: Since $\F_2 = \left\{\b x \mapsto k(\b x, \b v)k (\b x, \b v') = e^{-\left[(\b x-\b v)\T \b A(\b x-\b v) + \left(\b x-\b v'\right)\T \b A\left(\b x-\b v'\right)\right]}: \b A\succeq \b 0, \b v \in \X, \b v' \in \X\right\}$, and 
          \begin{align*}
            & \left\{(\b x, \b y) \mapsto (\b x-\b v)\T \b A(\b x-\b v) +
            \left(\b x-\b v'\right)\T \b A\left(\b x-\b v'\right)\right\}
            \subseteq S\\
            & := span\left(
      \{ \b x
      \mapsto x_i x_j \}_{1\le i\le j\le d },  \{ \b x \mapsto x_i \}_{1\le
      i\le d }, \b x \mapsto 1 \right),  
          \end{align*}
we have $VC(\F_2) \le VC(S) = dim(S)+2 \le \frac{d(d+1)}{2} + d + 3$.
      \item $\F_3$: Exploiting that
          \begin{equation*}
        \F_3 = \left\{(\b x,\b y) \mapsto k(\b x,\b v)k(\b y,\b v') =
          e^{-\left[(\b x-\b v)\T \b A(\b x-\b v)+\left(\b y-\b v'\right)\T \b
          B \left(\b y-\b v'\right)\right]}:\b A \succeq \b 0,\b B \succeq \b
      0, \b v\in \X, \b v'\in \X\right\},
          \end{equation*}
      and $\left\{(\b x, \b y) \mapsto (\b
      x-\b v)\T \b A(\b x-\b v)+(\b y-\b v')\T \b B (\b y-\b
  v')\right\}\subseteq S:=span\left(
      \{ (\b x, \b y) \mapsto x_i x_j \}_{1\le i\le j\le d },  \{  (\b x, \b y)
      \mapsto x_i \}_{1\le i\le d },  (\b x, \b y) \mapsto 1, \{  (\b x, \b y)
          \mapsto y_i y_j \}_{1\le i\le j\le d },  \{  (\b x, \b y)  \mapsto y_i \}_{1\le
          i\le d } \right) $,
    we have 
	$VC(\F_3) \le VC(S) = dim(S) + 2 \le  d(d+1) + 2d + 3$. 
  \end{itemize}
\end{proof}

\section{Proof of proposition\,\ref{prop:lb_me_power}  \label{sec:proof_lb_me_power}}

Recall Proposition\,\ref{prop:lb_me_power}: \lbmepower*

\subsection{Proof }

By (\ref{eq:lamb_minus_lamb_h2}), we have
\begin{align}
|\hat{\lambda}_{n}-\lambda_{n}| & \le\frac{\overline{c}_{1}n}{\gamma_{n}}\|\boldsymbol{\Sigma}-\mathbf{S}_{n}\|_{F}+\overline{c}_{2}n\|\overline{\mathbf{z}}_{n}-\boldsymbol{\mu}\|_{2}+\overline{c}_{3}n\gamma_{n}.\label{eq:lambh_minus_lamb}
\end{align}
We will bound each of the three terms in (\ref{eq:lambh_minus_lamb}).

\subsection*{Bounding $\|\overline{\mathbf{z}}_{n}-\boldsymbol{\mu}\|_{2}$ (second
term in (\ref{eq:lambh_minus_lamb}))}

Let $g(\mathbf{x},\mathbf{y},\mathbf{v}):=k(\mathbf{x},\mathbf{v})-k(\mathbf{y},\mathbf{v})$.
Define $\mathbf{v}^{*}:=\arg\max_{\mathbf{v}\in\{\mathbf{v}_{1},\ldots,\mathbf{v}_{J}\}}\left|\frac{1}{n}\sum_{i=1}^{n}g(\mathbf{x}_{i},\mathbf{y}_{i},\mathbf{v})-\mathbb{E}_{\mathbf{xy}}\left[g(\mathbf{x},\mathbf{y},\mathbf{v})\right]\right|$.
Define $G_{i}:=g(\mathbf{x}_{i},\mathbf{y}_{i},\mathbf{v}^{*})$.

\begin{align*}
\|\overline{\mathbf{z}}_{n}-\boldsymbol{\mu}\|_{2} & =\sup_{\mathbf{b}\in B(1,\mathbf{0})}\left\langle \mathbf{b},\overline{\mathbf{z}}_{n}-\boldsymbol{\mu}\right\rangle _{2}\\
 & \le\sup_{\mathbf{b}\in B(1,\mathbf{0})}\sum_{j=1}^{J}|b_{j}|\left|\frac{1}{n}\sum_{i=1}^{n}\left[k(\mathbf{x}_{i},\mathbf{v}_{j})-k(\mathbf{y}_{i},\mathbf{v}_{j})\right]-\mathbb{E}_{\mathbf{xy}}\left[k(\mathbf{x},\mathbf{v}_{j})-k(\mathbf{y},\mathbf{v}_{j})\right]\right|\\
 & =\sup_{\mathbf{b}\in B(1,\mathbf{0})}\sum_{j=1}^{J}|b_{j}|\left|\frac{1}{n}\sum_{i=1}^{n}g(\mathbf{x}_{i},\mathbf{y}_{i},\mathbf{v}_{j})-\mathbb{E}_{\mathbf{xy}}\left[g(\mathbf{x},\mathbf{y},\mathbf{v}_{j})\right]\right|\\
 & \le\left|\frac{1}{n}\sum_{i=1}^{n}G_{i}-\mathbb{E}_{\mathbf{xy}}\left[G_{1}\right]\right|\sup_{\mathbf{b}\in B(1,\mathbf{0})}\sum_{j=1}^{J}|b_{j}|\\
 & \le\sqrt{J}\left|\frac{1}{n}\sum_{i=1}^{n}G_{i}-\mathbb{E}_{\mathbf{xy}}\left[G_{1}\right]\right|\sup_{\mathbf{b}\in B(1,\mathbf{0})}\|\mathbf{b}\|_{2}\\
 & =\sqrt{J}\left|\frac{1}{n}\sum_{i=1}^{n}G_{i}-\mathbb{E}_{\mathbf{xy}}\left[G_{1}\right]\right|,
\end{align*}
where we used the fact that $\|\mathbf{b}\|_{1}\le\sqrt{J}\|\mathbf{b}\|_{2}$.
It can be seen that $-2B\le G_{i}\le2B$ because 
\[
G_{i}=k(\mathbf{x}_{i},\mathbf{v}^{*})-k(\mathbf{y}_{i},\mathbf{v}^{*})\leq|k(\mathbf{x}_{i},\mathbf{v}^{*})|+|k(\mathbf{y}_{i},\mathbf{v}^{*})|\le2B.
\]
Using Hoeffding's inequality (Lemma\,\ref{lem:hoeffding}) to bound
$\left|\frac{1}{n}\sum_{i=1}^{n}G_{i}-\mathbb{E}_{\mathbf{xy}}[G_{1}]\right|$,
we have 
\begin{align}
\mathbb{P}\left(n\overline{c}_{2}\|\overline{\mathbf{z}}_{n}-\boldsymbol{\mu}\|_{2}\le\alpha\right) & \ge1-2\exp\left(-\frac{\alpha^{2}}{8B^{2}\overline{c}_{2}^{2}Jn}\right).\label{eq:lbpow_mean_bound}
\end{align}

\subsection*{Bounding first ($\|\boldsymbol{\Sigma}-\mathbf{S}_{n}\|_{F}$) and
third terms in (\ref{eq:lambh_minus_lamb})}

Let $\eta(\mathbf{v}_{i},\mathbf{v}_{j}):=\left|\frac{1}{n}\sum_{a=1}^{n}g(\mathbf{x}_{a},\mathbf{y}_{a},\mathbf{v}_{i})g(\mathbf{x}_{a},\mathbf{y}_{a},\mathbf{v}_{j})-\mathbb{E}_{\mathbf{xy}}\left[g(\mathbf{x},\mathbf{y},\mathbf{v}_{i})g(\mathbf{x},\mathbf{y},\mathbf{v}_{j})\right]\right|$.
Define $(\mathbf{v}_{1}^{*},\mathbf{v}_{2}^{*})=\arg\max_{(\mathbf{v}^{(1)},\mathbf{v}^{(2)})\in\{(\mathbf{v}_{i},\mathbf{v}_{j})\}_{i,j=1}^{J}}\eta(\mathbf{v}^{(1)},\mathbf{v}^{(2)})$.
Define $H_{i}:=g(\mathbf{x}_{i},\mathbf{y}_{i},\mathbf{v}_{1}^{*})g(\mathbf{x}_{i},\mathbf{y}_{i},\mathbf{v}_{2}^{*})$.
By \eqref{eq:cov-decomp}, we have 
\begin{align*}
\|\mathbf{S}_{n}-\boldsymbol{\Sigma}\|_{F} & \le(*_{1})+(*_{2}),\\
(*_{1}) & =\bigg\|\frac{1}{n}\sum_{a=1}^{n}\mathbf{z}_{a}\mathbf{z}_{a}^{\top}-\mathbb{E}_{\mathbf{x}\mathbf{y}}[\mathbf{z}_{1}\mathbf{z}_{1}^{\top}]\bigg\|_{F},\\
(*_{2}) & =\frac{8B^{2}J}{n-1}+2B_{k}\sqrt{J}\frac{2n-1}{n-1}\|\overline{\mathbf{z}}_{n}-\boldsymbol{\mu}\|_{2}.
\end{align*}
We can upper bound $(*_{2})$ by applying Hoeffding's inequality to
bound $\|\overline{\mathbf{z}}_{n}-\boldsymbol{\mu}\|_{2}$ giving
\begin{align}
\mathbb{P}\left(\frac{\overline{c}_{1}n}{\gamma_{n}}(*_{2})\le\alpha\right) & \ge1-2\exp\left(-\frac{(\alpha\gamma_{n}-\alpha\gamma_{n}n+8B^{2}\overline{c}_{1}Jn)^{2}}{32B^{4}\overline{c}_{1}^{2}J^{2}n(2n-1)^{2}}\right).\label{eq:lbpow_cov2_bound}
\end{align}
We can upper bound $(*_{1})$ with 
\begin{align*}
(*_{1}) & =\sup_{\mathbf{B}\in B(1,\mathbf{0})}\left\langle \mathbf{B},\frac{1}{n}\sum_{a=1}^{n}\mathbf{z}_{a}\mathbf{z}_{a}^{\top}-\mathbb{E}_{\mathbf{x}\mathbf{y}}[\mathbf{z}_{1}\mathbf{z}_{1}^{\top}]\right\rangle _{F}\\
 & \le\sup_{\mathbf{B}\in B(1,\mathbf{0})}\sum_{i=1}^{J}\sum_{j=1}^{J}|B_{ij}|\left|\frac{1}{n}\sum_{a=1}^{n}g(\mathbf{x}_{a},\mathbf{y}_{a},\mathbf{v}_{i})g(\mathbf{x}_{a},\mathbf{y}_{a},\mathbf{v}_{j})-\mathbb{E}_{\mathbf{xy}}\left[g(\mathbf{x},\mathbf{y},\mathbf{v}_{i})g(\mathbf{x},\mathbf{y},\mathbf{v}_{j})\right]\right|\\
 & \le\left|\frac{1}{n}\sum_{a=1}^{n}H_{a}-\mathbb{E}_{\mathbf{xy}}\left[H_{1}\right]\right|\sup_{\mathbf{B}\in B(1,\mathbf{0})}\sum_{i=1}^{J}\sum_{j=1}^{J}|B_{ij}|\\
 & \le J\left|\frac{1}{n}\sum_{a=1}^{n}H_{a}-\mathbb{E}_{\mathbf{xy}}\left[H_{1}\right]\right|\sup_{\mathbf{B}\in B(1,\mathbf{0})}\|\mathbf{B}\|_{F}=J\left|\frac{1}{n}\sum_{a=1}^{n}H_{a}-\mathbb{E}_{\mathbf{xy}}\left[H_{1}\right]\right|,
\end{align*}
where we used the fact that $\sum_{i=1}^{J}\sum_{j=1}^{J}|B_{ij}|\le J\|\mathbf{B}\|_{F}$.
It can be seen that $-4B^{2}\le H_{a}\le4B^{2}$. Using Hoeffding's
inequality (Lemma\,\ref{lem:hoeffding}) to bound $\left|\frac{1}{n}\sum_{a=1}^{n}H_{a}-\mathbb{E}_{\mathbf{xy}}\left[H_{1}\right]\right|$,
we have
\begin{align}
\mathbb{P}\left(\frac{\overline{c}_{1}n}{\gamma_{n}}(*_{1})\le\alpha\right) & \ge1-2\exp\left(-\frac{\alpha^{2}\gamma_{n}^{2}}{32B^{4}J^{2}\overline{c}_{1}^{2}n}\right),\label{eq:lbpow_cov1_bound}
\end{align}
implying that 
\begin{equation}
\mathbb{P}\left(\frac{\overline{c}_{1}n}{\gamma_{n}}(*_{1})+\overline{c}_{3}n\gamma_{n}\le\alpha\right)\ge1-2\exp\left(-\frac{\left(\alpha-\overline{c}_{3}n\gamma_{n}\right)^{2}\gamma_{n}^{2}}{32B^{4}J^{2}\overline{c}_{1}^{2}n}\right).\label{eq:lbpow_cov1_gam}
\end{equation}
Applying a union bound on (\ref{eq:lbpow_mean_bound}), (\ref{eq:lbpow_cov2_bound}),
and (\ref{eq:lbpow_cov1_gam}) with $t=\alpha/3$, we can conclude
that 
\begin{align*}
 & \mathbb{P}\left(\left|\hat{\lambda}_{n}-\lambda_{n}\right|\le t\right)\ge\mathbb{P}\left(\frac{\overline{c}_{1}n}{\gamma_{n}}\|\boldsymbol{\Sigma}-\mathbf{S}_{n}\|_{F}+\overline{c}_{2}n\|\overline{\mathbf{z}}_{n}-\boldsymbol{\mu}\|_{2}+\overline{c}_{3}n\gamma_{n}\le t\right)\\
 & \ge1-2\exp\left(-\frac{t^{2}}{3^{2}\cdot8B^{2}\overline{c}_{2}^{2}Jn}\right)-2\exp\left(-\frac{(t\gamma_{n}n-t\gamma_{n}-24B^{2}\overline{c}_{1}Jn)^{2}}{3^{2}\cdot32B^{4}\overline{c}_{1}^{2}J^{2}n(2n-1)^{2}}\right)-2\exp\left(-\frac{\left(t/3-\overline{c}_{3}n\gamma_{n}\right)^{2}\gamma_{n}^{2}}{32B^{4}J^{2}\overline{c}_{1}^{2}n}\right).
\end{align*}
A rearrangement yields{\small 
\begin{align*}
 & \mathbb{P}\left(\hat{\lambda}_{n}\ge T_{\alpha}\right)\\
 & \ge1-2\exp\left(-\frac{(\lambda_{n}-T_{\alpha})^{2}}{3^{2}\cdot8B^{2}\overline{c}_{2}^{2}Jn}\right)-2\exp\left(-\frac{(\gamma_{n}(\lambda_{n}-T_{\alpha})(n-1)-24B^{2}\overline{c}_{1}Jn)^{2}}{3^{2}\cdot32B^{4}\overline{c}_{1}^{2}J^{2}n(2n-1)^{2}}\right)-2\exp\left(-\frac{\left((\lambda_{n}-T_{\alpha})/3-\overline{c}_{3}n\gamma_{n}\right)^{2}\gamma_{n}^{2}}{32B^{4}J^{2}\overline{c}_{1}^{2}n}\right).
\end{align*}
}Define $\xi_{1}:=\frac{1}{3^{2}\cdot8B^{2}\overline{c}_{2}^{2}J},\xi_{2}:=24B^{2}\overline{c}_{1}J,\xi_{3}:=3^{2}\cdot32B^{4}\overline{c}_{1}^{2}J^{2},\xi_{4}:=32B^{4}J^{2}\overline{c}_{1}^{2}$.
We have {\small{}
\begin{align*}
 & \mathbb{P}\left(\hat{\lambda}_{n}\ge T_{\alpha}\right)\\
 & \ge1-2\exp\left(-\frac{\xi_{1}(\lambda_{n}-T_{\alpha})^{2}}{n}\right)-2\exp\left(-\frac{(\gamma_{n}(\lambda_{n}-T_{\alpha})(n-1)-\xi_{2}n)^{2}}{\xi_{3}n(2n-1)^{2}}\right)-2\exp\left(-\frac{\left((\lambda_{n}-T_{\alpha})/3-\overline{c}_{3}n\gamma_{n}\right)^{2}\gamma_{n}^{2}}{\xi_{4}}\right).
\end{align*}
}{\small \par}

\QEDB

\section{External lemmas}

In this section we detail some external lemmas used in our proof.
\begin{lem}[properties of VC classes, see page~141, 146-147 in \cite{Vaart2000} and page~160-161 in \cite{Kosorok2008}]~\label{lemma:VC-properties}
    \begin{itemize}
	  \item[(i)] Monotonicity: $\G \subseteq \tilde{\G}\subseteq L^0(\M) \Rightarrow VC(\G)\le VC(\tilde{\G})$.
	  \item[(ii)] Finite-dimensional vector space: if $\G$ is a finite-dimensional vector space of measurable functions, then $VC(\G)\le dim(\G)+2$.
	  \item[(iii)] Composition with monotone function: If $\G$ is VC and $\phi:\R \rightarrow \R$ is monotone, then for $\phi \circ \G := \{\phi \circ g: g\in \G\}$, $VC(\phi\circ \G)\le VC(\G)$.
	  \item[(iv)] The $r$-covering number of a VC class grows only polynomially in $\frac{1}{r}$: Let $\F$ be VC on the domain $\M$ with measurable envelope $F$ ($|f(m)|\le F(m)$, $\forall m\in \M, f\in \F$). Then for any $q\ge 1$ and $\Me$ probability measure for which 
	  $\left\|F\right\|_{L^q(\M,\Me)}>0$
	    \begin{align}
		N\left(r \left\|F\right\|_{L^q(\M,\Me)},\F,L^q(\M,\Me)\right) &\le C \times VC(\F) (16e)^{VC(\F)} \left(\frac{1}{r}\right)^{q[VC(\F)-1]}
	    \end{align}
	    for any $r\in (0,1)$ with a universal constant $C$.
    \end{itemize}
\end{lem}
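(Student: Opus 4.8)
Lemma~\ref{lemma:VC-properties} collects four classical facts about VC-subgraph classes; the statement itself already attributes them to van der Vaart and Wellner (2000, pp.~141, 146--147) and Kosorok (2008, pp.~160--161), so the honest "proof" is to cite those sources. Still, the plan below indicates how I would reconstruct each part. Part (i) is immediate from the definitions: if $\G\subseteq\tilde{\G}$ then $\{\mathrm{sub}(g):g\in\G\}\subseteq\{\mathrm{sub}(g):g\in\tilde{\G}\}$ as collections of subsets of $\M\times\R$, and any finite set shattered by the smaller collection is a fortiori shattered by the larger one; hence if no set of size $i$ is cut out by all subgraphs of $\tilde{\G}$, the same holds for $\G$, giving $VC(\G)\le VC(\tilde{\G})$.

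For part (ii) I would use the standard affine-subspace/orthant-counting argument. Fix $n$ points $(m_1,t_1),\dots,(m_n,t_n)\in\M\times\R$. The subgraph of $g$ picks out $\{i:g(m_i)>t_i\}$, so shattering these points forces the set $A:=\{(g(m_1),\dots,g(m_n)):g\in\G\}-(t_1,\dots,t_n)\subseteq\R^n$ to meet all $2^n$ open orthants of $\R^n$. But $A$ is an affine subset of $\R^n$ of dimension at most $\dim(\G)$, and such an affine set cannot meet every open orthant once $n$ is large: already a hyperplane $\{x:\langle a,x\rangle=c\}$ with $a$ having no zero coordinate misses the orthant on which $\mathrm{sign}(x_i)=\mathrm{sign}(a_i)$ for all $i$ and also its negative. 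Accounting for the translation by $-(t_1,\dots,t_n)$ as one extra degree of freedom yields the clean bound $VC(\G)\le\dim(\G)+2$, i.e.\ no set of size $\dim(\G)+2$ is shattered. For part (iii), I would reduce subgraphs of $\phi\circ\G$ to those of $\G$ via a single $g$-independent map. When $\phi$ is nondecreasing, one checks that $\mathrm{sub}(\phi\circ g)=\psi^{-1}(\mathrm{sub}(g))$ for $\psi(m,u):=(m,\ \sup\{z:\phi(z)\le u\})$, reading the degenerate cases $\sup\emptyset=-\infty$ and $\sup\R=+\infty$ appropriately; thus the subgraph collection of $\phi\circ\G$ is the pullback under a fixed map of that of $\G$, and pulling back sets cannot increase the VC index, so $VC(\phi\circ\G)\le VC(\G)$. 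The nonincreasing case is identical after composing with $u\mapsto-u$, which turns subgraphs into ``supergraphs'' without changing shattering numbers.

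Part (iv) is the substantive one and the step I expect to cost real work, so I would invoke rather than re-derive it: it is precisely the uniform-entropy bound for VC-subgraph classes (Theorem~2.6.7 in \cite{Vaart2000}), whose proof chains together the Sauer--Shelah lemma---the number of subsets of an $N$-point set cut out by a VC class of index $V$ is at most $\sum_{j=0}^{V-1}\binom{N}{j}$, polynomial in $N$---with a randomization/extraction argument transferring the set-counting bound to $L^q(\M,\Me)$-packings of the rescaled class $\F/\|F\|_{L^q(\M,\Me)}$, producing the covering number bound $N\!\left(r\|F\|_{L^q(\M,\Me)},\F,L^q(\M,\Me)\right)\le C\,VC(\F)\,(16e)^{VC(\F)}(1/r)^{q[VC(\F)-1]}$ with a universal constant $C$. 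On our side the only things to record, both immediate, are that the covering number is nonincreasing in $r$ (so the bound may be used at any $r\in(0,1)$ and at $r=1$ as in Lemma~\ref{lemma:emp-proc-general}) and that the hypothesis $\|F\|_{L^q(\M,\Me)}>0$ is exactly what makes the rescaling legitimate. The main obstacle, then, is not any single elementary step but the fact that (iv) packages a genuinely nontrivial combinatorial-probabilistic theorem whose full proof is out of scope for this paper; parts (i)--(iii) are elementary and would be written out in full, while (iv) would be cited.
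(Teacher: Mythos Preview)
The paper does not prove this lemma at all: it appears in the ``External lemmas'' section and is stated purely by citation to \cite{Vaart2000} and \cite{Kosorok2008}, with no accompanying argument. Your proposal correctly recognizes this and then goes further than the paper by sketching self-contained arguments for (i)--(iii) and an outline for (iv); in that sense there is nothing to compare against, and your write-up strictly exceeds what the paper provides.

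On content: your sketches for (i), (iii), and (iv) are accurate. For (ii) the conclusion is right but the phrasing ``accounting for the translation by $-(t_1,\dots,t_n)$ as one extra degree of freedom'' is slightly misleading: the translate of the linear image $\{(g(m_1),\dots,g(m_n)):g\in\G\}$ is still an affine set of dimension at most $\dim(\G)$, not $\dim(\G)+1$. The extra ``$+1$'' in the bound $VC(\G)\le\dim(\G)+2$ (i.e., VC dimension $\le\dim(\G)+1$) comes rather from the fact that the subgraph sets are positivity sets of the \emph{affine} family $(m,t)\mapsto g(m)-t$, which embeds into a linear space of dimension $\dim(\G)+1$ once the constant (equivalently, the coordinate function $t$) is adjoined; the standard orthant/hyperplane argument then applies to that enlarged linear space. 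This is a minor wording issue and does not affect the validity of your plan.
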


\begin{lem}[McDiarmid's inequality]\label{lemma:McDiarmid}
Let $X_1, \ldots, X_n\in \M$ be independent random variables and let $g: \M^n\rightarrow \R$ be a function such that the 
\begin{align}
  \sup_{\b x_1,\ldots,\b x_n, \b x_j' \in \M}\left|g(\b x_1,\ldots,\b x_n) - g\left(\b x_1,\ldots,\b x_j,\b x_j',\b x_{j+1},\ldots,\b x_n\right)\right| \le b\label{eq:bounded-diff-property}
 \end{align}
bounded difference property holds. Then for arbitrary $\delta\in (0,1)$
\begin{align*}
  \P\left(g(X_1,\ldots, X_n) \le  \E [g(X_1,\ldots, X_n)] + b\sqrt{\frac{\log\left(\frac{1}{\delta}\right) n}{2}}\right) \ge 1- \delta.
\end{align*}
\end{lem}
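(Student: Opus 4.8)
The statement is the (one-sided) bounded-differences inequality of McDiarmid, and I would establish it by the classical Doob-martingale plus Azuma--Hoeffding argument. The plan is first to associate to $g(X_1,\dots,X_n)$ its Doob martingale: for $k=0,1,\dots,n$ set $V_k := \E[g(X_1,\dots,X_n)\mid X_1,\dots,X_k]$, so that $V_0 = \E[g(X_1,\dots,X_n)]$, $V_n = g(X_1,\dots,X_n)$ almost surely, and $(V_k)$ is a martingale with respect to the filtration generated by $(X_1,\dots,X_k)$. With $D_k := V_k - V_{k-1}$ one has $\E[D_k\mid X_1,\dots,X_{k-1}]=0$ and $V_n - V_0 = \sum_{k=1}^n D_k$, so the goal reduces to a deviation bound for this sum of martingale differences.

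The key step, which I expect to be the main obstacle, is to show that conditionally on $X_1,\dots,X_{k-1}$ the increment $D_k$ lies in an interval of length at most $b$. Using independence one writes $V_k = \varphi_k(X_1,\dots,X_k)$ with $\varphi_k(x_1,\dots,x_k) := \E[g(x_1,\dots,x_k,X_{k+1},\dots,X_n)]$ the partial expectation over the ``future'' coordinates, and $V_{k-1} = \E_{X_k'}[\varphi_k(X_1,\dots,X_{k-1},X_k')]$. The bounded-difference hypothesis, together with the fact that averaging over the remaining coordinates does not expand the sup-norm distance, gives $\sup_{x,x'}\lvert \varphi_k(X_1,\dots,X_{k-1},x) - \varphi_k(X_1,\dots,X_{k-1},x')\rvert \le b$; hence $D_k$ takes values in the interval $[\,\inf_x \varphi_k(X_1,\dots,X_{k-1},x) - V_{k-1},\ \sup_x \varphi_k(X_1,\dots,X_{k-1},x) - V_{k-1}\,]$, whose width is $\le b$. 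Making the Fubini/independence manipulations precise (and handling measurability of the $\inf$ and $\sup$) is the only genuinely delicate part.

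Given the range bound, the remainder is mechanical. Conditioning on $X_1,\dots,X_{k-1}$ and applying Hoeffding's lemma to the mean-zero, range-$b$ variable $D_k$ yields $\E[e^{sD_k}\mid X_1,\dots,X_{k-1}] \le e^{s^2 b^2/8}$ for every $s>0$; iterating from $k=n$ down to $k=1$ via the tower property gives $\E[e^{s(V_n - V_0)}] \le e^{ns^2 b^2/8}$. A Chernoff bound then gives $\P(V_n - V_0 \ge t) \le e^{-st + ns^2 b^2/8}$ for all $t>0$; optimizing over $s$ (minimizer $s = 4t/(nb^2)$) produces $\P\!\left(g(X_1,\dots,X_n) - \E[g(X_1,\dots,X_n)] \ge t\right) \le \exp\!\left(-2t^2/(nb^2)\right)$. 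Finally, equating the right-hand side to $\delta$ and solving gives $t = b\sqrt{n\log(1/\delta)/2}$, which is exactly the stated inequality.
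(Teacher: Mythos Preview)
Your argument is the standard and correct derivation of McDiarmid's inequality via the Doob martingale decomposition and Hoeffding's lemma, and the constants work out exactly to the stated bound. Note, however, that the paper does not actually prove this lemma: it is listed under ``External lemmas'' and stated without proof as a well-known result, so there is no in-paper argument to compare against.
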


\begin{lem}[Hoeffding's inequality]
\label{lem:hoeffding} Let $X_{1},\ldots,X_{n}$ be i.i.d. random
variables with $\mathbb{P}(a\le X_{i}\le b)=1$. Let $\overline{X}:=\frac{1}{n}\sum_{i=1}^{n}X_{i}$.
Then,
\[
\mathbb{P}\left(\left|\overline{X}-\mathbb{E}[\overline{X}]\right|\le t\right)\ge1-2\exp\left(-\frac{2nt^{2}}{(b-a)^{2}}\right).
\]
Equivalently, for any $\delta\in(0,1)$, with probability at least
$1-\delta$, it holds that
\[
\left|\overline{X}-\mathbb{E}[\overline{X}]\right|\le\frac{b-a}{\sqrt{2n}}\sqrt{\log(2/\delta)}.
\]
\end{lem}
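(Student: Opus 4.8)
The plan is to prove the two-sided deviation bound by the classical Chernoff (exponential-moment) method, and then invert it to obtain the stated high-probability form. First I would center the summands, writing $Y_i := X_i - \mathbb{E}[X_i]$, so that $\mathbb{E}[Y_i] = 0$ and each $Y_i$ lies almost surely in an interval of length $b-a$. For any $s > 0$, Markov's inequality applied to $e^{s\sum_{i=1}^n Y_i}$, together with independence, gives
\[
\mathbb{P}\!\left(\overline{X} - \mathbb{E}[\overline{X}] \ge t\right) = \mathbb{P}\!\left(\sum_{i=1}^n Y_i \ge nt\right) \le e^{-snt}\prod_{i=1}^n \mathbb{E}\!\left[e^{sY_i}\right].
\]

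The crucial ingredient is Hoeffding's lemma: if $Y$ is a zero-mean random variable supported in an interval of length $b-a$, then $\mathbb{E}[e^{sY}] \le e^{s^2(b-a)^2/8}$. I would prove this by convexity of $y \mapsto e^{sy}$: on $[a',b']$ with $b'-a' = b-a$ one has $e^{sy} \le \frac{b'-y}{b'-a'}e^{sa'} + \frac{y-a'}{b'-a'}e^{sb'}$; taking expectations and using $\mathbb{E}[Y] = 0$ yields $\mathbb{E}[e^{sY}] \le e^{\psi(s)}$ with an explicit $\psi$ satisfying $\psi(0) = \psi'(0) = 0$. A short computation shows that $\psi''(s)$ equals the variance of a random variable supported in $[a',b']$, which is at most $(b'-a')^2/4 = (b-a)^2/4$; Taylor's theorem then gives $\psi(s) \le s^2(b-a)^2/8$.

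Plugging this into the Chernoff bound gives $\mathbb{P}(\overline{X} - \mathbb{E}[\overline{X}] \ge t) \le \exp\!\big(-snt + ns^2(b-a)^2/8\big)$, and minimizing the exponent over $s$ (the optimal choice being $s = 4t/(b-a)^2$) yields the one-sided estimate $\exp(-2nt^2/(b-a)^2)$. Running the same argument for $-Y_i$ controls the lower tail by the same quantity, and a union bound over the two tails gives $\mathbb{P}(|\overline{X} - \mathbb{E}[\overline{X}]| \ge t) \le 2\exp(-2nt^2/(b-a)^2)$, equivalently $\mathbb{P}(|\overline{X} - \mathbb{E}[\overline{X}]| \le t) \ge 1 - 2\exp(-2nt^2/(b-a)^2)$. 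Finally, for the equivalent statement I would set $\delta := 2\exp(-2nt^2/(b-a)^2)$ and solve for $t$, obtaining $t = \frac{b-a}{\sqrt{2n}}\sqrt{\log(2/\delta)}$. The only nontrivial step is Hoeffding's lemma — more precisely the bound $\psi''(s) \le (b-a)^2/4$ obtained from the variance bound for bounded random variables — while the remaining steps are routine optimization and bookkeeping.
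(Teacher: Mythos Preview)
Your proof is correct and is the standard Chernoff--Hoeffding argument. The paper itself does not prove this statement: it is listed under ``External lemmas'' and simply cited as a known result, so there is no paper proof to compare against.
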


\end{document}